\algrenewcommand\algorithmicrequire{\textbf{Input:}}
\algrenewcommand\algorithmicensure{\textbf{Output:}}  
\def\DHLhksqrt#1#2{%
\setbox0=\hbox{$#1\sqrt{#2\,}$}\dimen0=\ht0
\advance\dimen0-0.2\ht0
\setbox2=\hbox{\vrule height\ht0 depth -\dimen0}%
{\box0\lower0.4pt\box2}}
\newcommand{\isi}[1]{}
\newcommand{\isiincl}[2]{}
\newcommand{\googleincl}[2]{}
\newcommand{\googleinclabs}[3]{}
\tikzstyle{plant} = [draw, fill=red!5, rectangle, 
\tikzstyle{block} = [draw, fill=blue!5, rectangle, 
\tikzstyle{sum} = [draw, fill=yellow!10, circle, node distance=1cm]
\tikzstyle{coord} = [coordinate]
\tikzstyle{gain} = [draw, fill=red!5, regular polygon, regular polygon sides=3, shape border rotate=-90]
\tikzstyle{pinstyle} = [pin edge={to-,thick,black}]
\tikzstyle{BitPipe} = [thick, decoration={markings,mark=at position
\DeclareMathAlphabet{\pazocal}{OMS}{zplm}{m}{n}
\newtheorem{thm}{Theorem}
\newtheorem{lem}{Lemma}
\newtheorem{corol}{Corollary}
\newtheorem{prop}{Proposition}
\theoremstyle{definition}
\newtheorem{defn}{Definition}
\newtheorem*{defn*}{Definition}
\newtheorem*{scheme*}{Scheme}
\theoremstyle{plain}
\providecommand{\theoremref}[1]{Theorem~\ref{#1}}
\providecommand{\sectionref}[1]{Section~\ref{#1}}
\providecommand{\corollaryref}[1]{Corollary~\ref{#1}}
\providecommand{\lemmaref}[1]{Lemma~\ref{#1}}
\providecommand{\appendixref}[1]{Appendix~\ref{#1}}
\providecommand{\figureref}[1]{Figure~\ref{#1}}
\providecommand{\definitionref}[1]
{Definition~\ref{#1}}
\providecommand{\algorithmref}[1]{Algorithm~\ref{#1}}
\providecommand{\propositionref}[1]{Proposition~\ref{#1}}
\newcommand{\ie}{i.e.}
\newcommand{\iid}{i.i.d.}
\newcommand{\reals}{\mathbb{R}}
\DeclareMathOperator{\R}{Re}
\newcommand{\old}[1]{}
\newcommand{\rem}[1]{}
\newcommand{\eps}{{\varepsilon}}
\providecommand{\brI}{\mathbb{I}}
\newcommand{\sx}{{\mathsf{x}}}
\newcommand{\sz}{{\mathsf{z}}}
\newcommand{\deter}[1]{\mathrm{det}\left(#1 \right)}
\newcommand{\abs}[1]{\left| #1 \right|}
\newcommand{\Norm}[1]{\left\| #1 \right\|}
\providecommand{\comment}[1]{}
\providecommand{\norm}[1]{\Norm{#1}}
\newcommand{\beqn}[1]{\begin{eqnarray}\label{#1}}
\newcommand{\eeqn}{\end{eqnarray}}
\newcommand{\beq}[1]{\begin{equation}\label{#1}}
\newcommand{\eeq}{\end{equation}}
\newcommand{\argmin}{\mathrm{argmin}}
\newcommand{\vast}{\bBigg@{4}}
\newcommand{\Vast}{\bBigg@{5}}
\providecommand{\KLDA}[2]{{D_{\alpha} ( #1 \| #2 )}}
\providecommand{\bbE}{\mathbb{E}}
\providecommand{\E}[1]{\bbE \left[ #1 \right]}
\newcommand{\pK}{\pazocal{K}}
\newcommand{\pM}{\pazocal{M}}
\newcommand{\pN}{\pazocal{N}}
\newcommand{\pS}{\pazocal{S}}
\newcommand{\DP}{{\normalfont DP}~}
\newcommand{\RDP}{{\normalfont RDP}~} 
\newcommand{\Renyi}{{\normalfont R\'enyi}~} 
\newcommand{\RenyiDP}{{\normalfont \Renyi\hspace{-.3em}-\DP}~}
\renewcommand{\epsilon}{\varepsilon}
\renewcommand{\eps}{\varepsilon}
\def\@maketitle{%
  \newpage
  \begin{center}%
  \let \footnote \thanks
    {\LARGE \bf \@title \par}%
  \end{center}%
  \par
  \vskip 0.5em}
\title{The Gaussian Mixing Mechanism: \\R\'enyi Differential Privacy via Gaussian Sketches}
\date{} 
\begin{document}
\maketitle
\begin{center}
{\large
\begin{tabular}{ccc}
    Omri Lev\(^\dagger\) & Vishwak Srinivasan\(^{\dagger, \ddagger}\) & Moshe Shenfeld\(^{\star, \ddagger}\) \\
    Katrina Ligett\(^\star\) & Ayush Sekhari\(^{\blacklozenge}\) & Ashia C. Wilson\(^\dagger\) \\ 
\end{tabular} 
\vskip 0.5em 
\normalsize
\begin{tabular}{c}
\({}^{\dagger}\)Massachusetts Institute of Technology
\\ 
[0.25em] 
\({}^{\star}\)The Hebrew University of Jerusalem \\
[0.25em] 
\(^\blacklozenge\)Boston University  \\
[0.25em]
\end{tabular}
}
\end{center}

\maketitle 

\begin{abstract}
    Gaussian sketching, which consists of pre-multiplying the data with a random Gaussian matrix, is a widely used technique for multiple problems in data science and machine learning, with applications spanning computationally efficient optimization, coded computing, and federated learning. This operation also provides differential privacy guarantees due to its inherent randomness. In this work, we revisit this operation through the lens of \Renyi Differential Privacy (RDP), providing a refined privacy analysis that yields significantly tighter bounds than prior results. We then demonstrate how this improved analysis leads to performance improvement in different linear regression settings, establishing theoretical utility guarantees. Empirically, our methods improve performance across multiple datasets and, in several cases, reduce runtime. 
\end{abstract} 

\begingroup
  \renewcommand\thefootnote{}%
  \footnotetext{Correspondence: \texttt{omrilev@mit.edu}.%
    \ \ Code available at \url{https://github.com/omrilev1/GaussMix}.}%

  \renewcommand\thefootnote{$\ddagger$}%
  \footnotetext{Equal contribution.}%
\endgroup
\section{Introduction}
\label{s:intro}

Continued advances in data collection and storage have enabled the creation of massive datasets, which have necessitated the design of scalable algorithms for efficient processing, analysis, and inference.  
With the formation of such large datasets, there has also been an increased focus on privacy-preserving analyses in a bid to protect private attributes \citep{apple2017learning, facebook2020protecting, snap2022differential, dpfy_ml}. 
For instance, census data consists of a variety of private information about individuals recorded, and the United States Census Bureau has been taking various measures to ensure confidentiality in the data, and has adopted more modern methods since the late 1900s \citep{census}. 

A popular mathematical notion of privacy is given by \emph{Differential Privacy} (DP) \citep{Dwork_DP_Orig,mcsherry_talwar_2007mechanism, Dwork_AlgFoundationd_DP}, which is currently the de facto standard for privacy-preserving mechanisms. 
A private mechanism is a (randomized) function that takes in a dataset \(X\) and returns a \(\pM(X)\).
The key challenge in designing a ``good'' private mechanism is in balancing the tradeoff between the loss in utility with the increase in privacy-preserving nature of the algorithm; it is possible to obtain complete privacy by simply returning noise regardless of the input, but this clearly does not have good utility.

With regard to designing scalable algorithms to handle massive datasets, \emph{sketching} as a technique \citep{sarlos2006improved} has led to more computationally scalable algorithms relative to na\"{\i}vely working with the full dataset; see \citet{woodruff2014sketching} for a review.
Given a data matrix \(X \in \reals^{n \times d}\), a \emph{sketch} constructs a compressed representation \(\mathsf{S} X \in \reals^{k \times d}\) with \(k \ll n\), where \(\mathsf{S} \in \reals^{k \times n}\) is a random matrix.
For the case where $\mathsf{S}$ is comprised of \iid\ Gaussian elements and sufficiently large \(k\), $\frac{1}{k}\mathsf{S}^{\top}\mathsf{S} \approx \brI_n$ with high probability.
This property is particularly valuable in machine learning applications that rely on inner-products of the form \(A^{\top}B\) for matrices \(A\) and \(B\), as one could now apply a Gaussian sketching matrix \(\mathsf{S}\) to both \(A\) and \(B\) and largely preserve this Hilbert-Schmidt inner product (up to scaling constants).
In this work, we study a mechanism based on a ``noisy'' sketching operation, which we call the \emph{Gaussian Mixing Mechanism} (abbrev. \textsf{GaussMix}) and is defined by
\begin{equation*}
\label{eq:mechanism_intro}
\pM(X) := \mathsf{S}X + \sigma \xi~, \quad \text{with} \quad \mathsf{S} \sim \pN(0, \brI_{k \times n}),~ \xi \sim \pN(0, \brI_{k \times d})~.\tag{\textsf{GaussMix}} 
\end{equation*}
The additive Gaussian noise (hence the ``noisy'' sketching operation) essentially contributes a constant bias to the inner product between the outputs of \ref{eq:mechanism_intro} applied to \(A\) and \(B\) since for sufficiently large \(k\), appealing to standard concentration inequalities, we have 
\begin{align}
    \label{eq:intuition_utility}
    \frac{1}{k}(\mathsf{S}A + \sigma \xi)^{\top}(\mathsf{S}B + \sigma\xi) \approx A^{\top}B + \sigma^2 \brI_d.
\end{align}
Thus, this noisy sketch is potentially well-suited for applications involving inner products.

The first evidence of sketching yielding a differentially private mechanism was given by  \citet{blocki2012johnson}, and later works \citep{sheffet2017differentially,sheffet2019old} focused on establishing that \ref{eq:mechanism_intro} also provides DP. 
While it might seem surprising that \ref{eq:mechanism_intro} is a differentially private mechanism, the operation can be viewed as instantiating the classical \emph{Gaussian mechanism} for DP \citep[Appendix A]{Dwork_AlgFoundationd_DP} for the Gaussian sketched data matrix \(\mathsf{S} X\). 

The intriguing aspect of \ref{eq:mechanism_intro} is that the Gaussian sketching operation contributes to stronger privacy guarantees when compared to the standard Gaussian mechanism.

\paragraph{Intuition for \ref{eq:mechanism_intro}.} 
Interestingly, certain key operations in augmenting mechanisms for improved differential privacy can be expressed as left-multiplication operations.
Two such instances are (1) permuting the rows of the data matrix \(X\) randomly \citep{erlingsson2019amplification}, which can be expressed as left-multiplication of \(X\) by a permutation matrix, and (2) subsampling the data matrix at random, which can be expressed as left-multiplication of \(X\) by a random matrix whose entries are either \(0\) or \(1\) \citep{privacy_subsampling_kasiviswanathan2011can,amplification_subsampling_balle}.
A rough intuition for why sketching is a reasonable operation for preserving privacy is that \(\mathsf{S} X\) generates random linear combinations of rows, which can potentially hide the contribution of any one row. 
Differential privacy---a more rigorous notion---seeks to ensure that the presence of any one particular row in the data can not be guessed well, even by an adversary that has knowledge of the other rows.
Note that sketching by itself is not impervious to such a setting: consider a data matrix \(X\) where all but one row contain 0s, and in this case, it is possible to make educated guesses for what the non-zero row might be from the sketched data matrix \(\mathsf{S} X\).
This motivates the inclusion of the noise addition in \ref{eq:mechanism_intro}, which also covers this adversarial situation.
Informally speaking, through our analysis, we observe that the ``richness'' of the data matrix (quantified by the minimum singular value) contributes to the success of the sketching mechanism as a differentially private mechanism.

~\\ 
\textbf{Contributions.~~} Our key contributions are: 
\begin{enumerate}[leftmargin=*, label=\(\bullet\)] 
      \item \textbf{Tighter Privacy Analysis via R\'enyi Differential Privacy (RDP).} We present a new RDP analysis of \ref{eq:mechanism_intro}, which, to the best of our knowledge, has not been previously explored. Our bounds are simpler to derive than existing analyses and, in several cases, are tight with respect to RDP. Notably, our results show that \ref{eq:mechanism_intro} achieves stronger privacy guarantees than those established by \citet{sheffet2019old} for the same mechanism parameters. These improved guarantees extend to settings that already rely on \ref{eq:mechanism_intro}, such as \citet{Coded_Yona, dp_coded_regression, CodedFederated_MIDP, bartan2023distributed}.    
    
    \item \textbf{Algorithm for Private Ordinary Least Squares.}
    We use \ref{eq:mechanism_intro} to develop a differentially private algorithm for linear regression under bounded data, extending the framework of \citet{sheffet2017differentially}. Leveraging our new RDP analysis, we derive bounds on the excess empirical risk that, in some cases, match those of the AdaSSP algorithm by \citet{wang_adassp}, a standard benchmark under domain-bound assumptions. We validate these theoretical results through empirical evaluations, where our method consistently outperforms the baselines of \citet{sheffet2017differentially} and \citet{wang_adassp} across several benchmark datasets. Furthermore, we show that retrofitting the improved RDP analysis into the algorithm of \citet{sheffet2017differentially} yields improved results, further demonstrating the broad applicability of our improved analysis.

    \item \textbf{Algorithm for Private Logistic Regression.}
    We adapt the algorithm for private linear regression to perform differentially private logistic regression. This specifically works by using a second-order approximation of the loss function~\citep{huggins2017pass, ferrando2024private} that reduces the problem to a differentially private quadratic minimization, which makes it amenable to apply \ref{eq:mechanism_intro}. We derive theoretical guarantees for our proposed method and give numerical simulations over certain datasets that demonstrate improvements over the commonly used objective perturbation method~\citep{chaudhuri2011differentially} in both accuracy and computation time. 
\end{enumerate} 

\paragraph{Organization of the Paper.}
In \sectionref{s:related_works}, we discuss related work. \sectionref{s:prelims} introduces necessary preliminaries. We re-introduce the Gaussian mixing mechanism together with our new privacy analysis in \sectionref{s:mechanism}, including comparisons with prior bounds. Applications of our mechanism to DP linear regression (\sectionref{s:linear}) and DP logistic regression (\sectionref{s:logistic}) are given in \sectionref{s:apps}, supported by theoretical and empirical evaluations.

\section{Related Work} 
\label{s:related_works}

A substantial body of research has investigated the use of random matrix projections for privacy, particularly through the Johnson–Lindenstrauss (JL) transform and its variants~\citep{blocki2012johnson, kenthapadi2012privacy_via_JL, sheffet2017differentially, showkatbakhsh2018privacy, sheffet2019old}.
 \citet{sheffet2015private, sheffet2017differentially, sheffet2019old} and \citet{showkatbakhsh2018privacy} are the most relevant to our work, as they propose using Gaussian sketches for private linear regression.
\citet{sheffet2015private,sheffet2017differentially,sheffet2019old} analyze the privacy-preserving characteristics of \ref{eq:mechanism_intro} for this problem, and show that it achieves \((\varepsilon,\delta)\)-differential privacy for certain settings of \(\sigma, k\).
\citet{showkatbakhsh2018privacy}  study the same mechanism but under a modified notion of differential privacy known as MI-DP (defined by \citet{cuff2016differential}).

The mathematical intuition for why sketching is useful in privacy-sensitive optimization was studied by \citet{pilanci2015randomized}. 
They observe that the mutual information between $X$ and its sketched version $\mathsf{S}X$ can not be too large, thus providing a form of privacy.
However, their analysis is centered around information-theoretic principles and quantities such as the mutual information, and does not actually provide guarantees of differential privacy.
More recently, \citet{bartan2023distributed} provides another application of Gaussian sketching in the specific context of DP distributed linear regression, based on the results obtained by \citet{sheffet2015private}. 

In contrast to these prior approaches, our work uses the stronger RDP framework of \citet{mironov2017renyi} in its privacy analysis. We derive a simple, closed-form expression for the RDP curve $\varepsilon(\alpha)$ corresponding to \ref{eq:mechanism_intro}, which is tight and improves upon the bounds obtained in earlier works. These improvements yield practical gains in the performance of our algorithm for DP linear regression relative to both \citet{sheffet2019old} and other common alternatives such as \citet{wang_adassp}, and can be further used in other settings that currently use \eqref{eq:mechanism_intro}, such as \citet{Coded_Yona, dp_coded_regression, CodedFederated_MIDP, bartan2023distributed}. Moreover, we further derive a computationally efficient algorithm for private logistic regression. Our numerical experiments demonstrate that these enhancements translate into improved accuracy over standard baselines.

\section{Preliminaries} 
\label{s:prelims}

\textbf{Basic Notation.} 
We denote random variables in sans-serif (e.g., \(\mathsf{X}, \mathsf{y}\)), and their realizations in serif (e.g., \(X, y\) resp.).
The set \(\{1, \ldots, n\}\) is denoted by \([n]\). 
For a vector \(A \in \reals^{d}\) its Euclidean norm is denoted by \(\norm{A}\).
The all-zeros column vector of length \(d\) is denoted by \(\vec{0}_{d}\).
The $k\times k$ identity matrix is $\brI_k$ and $\pN(0, \brI_{k_1\times k_2})$ denotes a $k_1\times k_2$ matrix of \iid \ standard Gaussian entries. See \appendixref{app:notation} for a detailed discussion of notation.

\paragraph{Differential Privacy.}
Differential privacy relies on the notion of a ``neighboring'' dataset, which we introduce first. Two datasets $X, X'$ are called \emph{neighbors} if \(X'\) is formed by removing an element from \(X\) \footnote{For simplicity, we identify a removal of a row with its replacement by \(\vec{0}_{d}\), so the dimension remains constant. This notion is sometimes referred to as \emph{zero-out} neighboring.} or vice-versa, and we use \(X \simeq X'\) to denote this relation.
In this work, we focus on datasets that are elements of \(\reals^{n \times d}\) \ie, matrices with \(n\), \(d\)-dimensional real-valued rows.
For \(X\) input to a mechanism, we assume knowledge of an upper bound \(\textsc{C}_{X}\) (called the \emph{row bound}) where \(\|x_{i}\| \leq \textsc{C}_{X}\) for all \(i \in [n]\).
Intuitively, differential privacy, formalized in the next definition, requires that a randomized algorithm induce nearly identical output distributions given neighboring input datasets.

\begin{defn}[{$(\eps,\delta)$-Differential Privacy \citep{dwork2006calibrating}}]
\label{defn:eps_delta_DP}
A randomized mechanism $\pM$ is said to satisfy \emph{$(\eps, \delta)$-differential privacy} if for all \(X, X'\) such that $X' \simeq X$ and measurable subsets $\pS \subseteq \mathrm{Range}(\pM)$,
\begin{align}
    \label{eq:eps_delta_DP}
    \Pr(\pM(X) \in \pS) \leq e^{\eps} \cdot \Pr(\pM(X') \in \pS) + \delta.
\end{align}
\end{defn}

A secondary, somewhat stronger notion of differential privacy that we adopt throughout this work is given by \RenyiDP, first introduced by \citet{mironov2017renyi}.
\begin{defn}[{$(\alpha,\eps(\alpha))$-\RDP \citep{mironov2017renyi}}]
\label{defn:RDP}  
A randomized mechanism $\pM$ is said to satisfy $(\alpha,\eps(\alpha))$-\RDP for some $\alpha > 1$ if for all $X, X'$ such that \(X \simeq X'\),
\begin{align}
    \label{eq:RDP}
    D_{\alpha}\left(\pM(X)\,\|\,\pM(X')\right) \leq \eps(\alpha),
\end{align}
where $D_{\alpha}(P\,\|\,Q) \coloneqq \frac{1}{\alpha - 1} \log\left(\underset{\sx \sim Q}{\mathbb{E}} \left[\left(\frac{P(\sx)}{Q(\sx)}\right)^{\alpha}\right]\right)$ denotes the $\alpha$-\Renyi divergence \citep{renyi1961measures}.
\end{defn}

\((\varepsilon, 0)\)-DP can be viewed as ensuring that the likelihood ratio of events induced by neighboring datasets are uniformly bounded, and \(\delta\) in \((\varepsilon,\delta)\)-DP provides some additive slack on this condition.
On the other hand, for any \(\alpha > 1\), \((\alpha, \varepsilon(\alpha))\)-\RenyiDP can be seen as another control that bounds the moments of this likelihood ratio.

The conversion between $(\alpha, \eps(\alpha))$-RDP and $(\eps,\delta)$-DP is given in the next proposition. 

\begin{prop}[{\citet[Proposition 12]{canonne2020discrete}}]
\label{prop:Renyi_classical_translate}
If $\pM$ satisfies $(\alpha, \eps(\alpha))$-\RDP, then it also satisfies $(\eps_{\mathrm{DP}}, \delta)$-\DP for any $0 < \delta < 1$, where $\eps_{\mathrm{DP}} = \eps(\alpha) + \log\left(1 - \frac{1}{\alpha}\right) - \frac{\log(\alpha\delta)}{(\alpha - 1)}$.  
\end{prop}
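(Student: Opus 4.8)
The plan is to derive the $(\eps_{\mathrm{DP}},\delta)$-DP guarantee directly from the moment bound encoded by the R\'enyi hypothesis, rather than via a tail inequality. Fix a pair of neighbors $X \simeq X'$, write $P = \pM(X)$ and $Q = \pM(X')$ with densities $p, q$ with respect to a common dominating measure, and set the likelihood ratio $r(\sx) = p(\sx)/q(\sx)$. The goal is $\Pr(\pM(X) \in \pS) \le e^{\eps_{\mathrm{DP}}}\,\Pr(\pM(X') \in \pS) + \delta$ for every measurable $\pS$. First I would reduce to a single worst-case integral that no longer depends on $\pS$: for any $\pS$,
\begin{align}
    P(\pS) - e^{\eps_{\mathrm{DP}}} Q(\pS) = \int_{\pS} \big(p(\sx) - e^{\eps_{\mathrm{DP}}} q(\sx)\big)\, d\sx \le \int \big(p(\sx) - e^{\eps_{\mathrm{DP}}} q(\sx)\big)_+ \, d\sx,
\end{align}
since replacing the integrand by its positive part only increases the value. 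It then suffices to bound this last integral by $\delta$.

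The key step is a pointwise inequality trading the positive part for the $\alpha$-th moment of $r$. Writing $t = \eps_{\mathrm{DP}}$ and factoring $p - e^t q = q\,(r - e^t)$, I would maximize $g(r) = (r - e^t)/r^{\alpha}$ over $r > e^t$ by elementary calculus; the maximizer is $r^\star = \tfrac{\alpha}{\alpha - 1} e^t$, which yields the sharp bound
\begin{align}
    \big(r - e^t\big)_+ \le \frac{(\alpha - 1)^{\alpha - 1}}{\alpha^{\alpha}}\, e^{-t(\alpha - 1)}\, r^{\alpha} \qquad \text{for all } r \ge 0
\end{align}
(the case $r \le e^t$ being trivial). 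Multiplying by $q$, integrating, and invoking the definition of the R\'enyi divergence together with the \RDP hypothesis gives $\mathbb{E}_{\sx \sim Q}[r(\sx)^{\alpha}] = e^{(\alpha - 1) D_\alpha(P \| Q)} \le e^{(\alpha - 1)\eps(\alpha)}$, so the integral of interest is at most $\frac{(\alpha-1)^{\alpha-1}}{\alpha^\alpha}\, e^{(\alpha-1)(\eps(\alpha) - t)}$.

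Finally I would set this upper bound equal to $\delta$ and solve for $t$. Taking logarithms gives $(\alpha - 1)\log(\alpha - 1) - \alpha \log \alpha + (\alpha - 1)(\eps(\alpha) - t) = \log \delta$, and isolating $t$ with routine simplification (using $\tfrac{\alpha}{\alpha-1}\log\alpha = \log\alpha + \tfrac{\log\alpha}{\alpha-1}$ and $\log(1 - \tfrac{1}{\alpha}) = \log(\alpha-1) - \log\alpha$) recovers exactly $\eps_{\mathrm{DP}} = \eps(\alpha) + \log(1 - \tfrac{1}{\alpha}) - \frac{\log(\alpha\delta)}{\alpha - 1}$, at which point the integral is $\le \delta$ and the claim follows. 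I expect the main obstacle to be the pointwise optimization establishing the sharp constant $\frac{(\alpha-1)^{\alpha-1}}{\alpha^\alpha}$: a cruder Markov-type tail estimate in place of the optimal $r^\star$ recovers only the weaker conversion $\eps_{\mathrm{DP}} = \eps(\alpha) + \frac{\log(1/\delta)}{\alpha-1}$, so the tightness of the stated bound hinges entirely on using $r^\star$ and carrying the constant faithfully through the closing algebra.
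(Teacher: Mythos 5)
Your proof is correct, and every step checks out: the reduction to the hockey-stick integral $\int (p - e^{t}q)_{+}$, the pointwise optimization giving $(r-e^{t})_{+} \le \frac{(\alpha-1)^{\alpha-1}}{\alpha^{\alpha}}e^{-(\alpha-1)t}r^{\alpha}$ at $r^{\star} = \frac{\alpha}{\alpha-1}e^{t}$, the identity $\mathbb{E}_{Q}[r^{\alpha}] = e^{(\alpha-1)D_{\alpha}(P\|Q)}$, and the closing algebra that recovers $\eps_{\mathrm{DP}} = \eps(\alpha) + \log\left(1-\frac{1}{\alpha}\right) - \frac{\log(\alpha\delta)}{\alpha-1}$ are all accurate. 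One thing to note for comparison: this paper does not prove the proposition at all --- it is imported verbatim as a citation to \citet[Proposition 12]{canonne2020discrete}, so there is no internal proof to measure your argument against. What you have done is reconstruct, essentially verbatim, the argument in that cited reference (and in the related conversions of Balle et al. and Asoodeh et al.): bound the hockey-stick divergence by a moment of the likelihood ratio via the sharp pointwise comparison, then tune the threshold. Your closing remark is also apt --- replacing the optimized $r^{\star}$ with a crude Markov tail bound on the privacy-loss variable degrades the conversion to Mironov's $\eps(\alpha) + \frac{\log(1/\delta)}{\alpha-1}$, and the whole point of the tighter statement (which the present paper exploits numerically in its comparison to \citet{sheffet2019old}) is the extra $\log\left(1-\frac{1}{\alpha}\right) - \frac{\log \alpha}{\alpha-1} < 0$ correction. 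Two minor polish points: you should remark that the argument must be applied to both orderings of the neighboring pair (the RDP hypothesis holds for all ordered pairs, so this is immediate), and that finiteness of $D_{\alpha}(P\|Q)$ for $\alpha > 1$ already forces the absolute continuity your likelihood ratio implicitly assumes.
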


Both $(\alpha, \varepsilon(\alpha))$-RDP and \((\varepsilon, \delta)\)-DP satisfy key properties such as graceful degradation under composition and post-processing. In particular, the post-processing property ensures that if a mechanism \(f\) satisfies either privacy definition, then so does \(g\circ f\) for any (possibly randomized) function $g$ ~\citep{Dwork_AlgFoundationd_DP, mironov2017renyi}. 

With the definition of \RDP above, of particular note is a special family of mechanisms that satisfy \((\alpha, \varepsilon(\alpha))\)-RDP for a range of values of \(\alpha\), and such that $\eps(\alpha)$ grows at most linearly in $\alpha$ in this range.
Such mechanisms are said to satisfy truncated concentrated DP, formally defined next.

\begin{defn}[tCDP {\citep{bun2018composable}}]
\label{defn:tcdp}
Let $\rho > 0$ and $w > 1$. A mechanism $\pM$ satisfies $(\rho, w)$-tCDP if $\KLDA{\pM(X)}{\pM(X')} \leq \rho \alpha$ for all neighboring $X,X'$ and for all $\alpha \in (1, w)$.
\end{defn}

The purpose of introducing tCDP is due to its connection to \((\varepsilon, \delta)\)-DP, given below.
\begin{prop}[{\citet[Lemma  6]{bun2018composable}}]
\label{prop:conversion_tcdp_dp}
If \(\pM\) satisfies \((\rho, w)\)-\normalfont{tCDP}, then its also satisfies \((\varepsilon_{\mathrm{DP}}, \delta)\)-DP for all $\delta > 0$ where
\begin{equation*}
    \varepsilon_{\mathrm{DP}} =
    \begin{cases}
        \rho + 2\sqrt{\rho \cdot \log(\nicefrac{1}{\delta})} & \text{if } \log\left(\nicefrac{1}{\delta}\right) \leq (w - 1)^{2} \cdot\rho \\
        \rho \cdot w + \frac{\log(\nicefrac{1}{\delta})}{w - 1} & \text{if } \log\left(\nicefrac{1}{\delta}\right) > (w - 1)^{2} \cdot\rho~.
    \end{cases}
\end{equation*}
\end{prop}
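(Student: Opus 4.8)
The plan is to combine the tCDP hypothesis with a Rényi-to-$(\eps,\delta)$ conversion and then optimize the free Rényi order over its admissible range $(1,w)$. By \definitionref{defn:tcdp}, $\pM$ satisfies $(\alpha,\rho\alpha)$-\RDP for every $\alpha \in (1,w)$, that is $\KLDA{\pM(X)}{\pM(X')} \le \rho\alpha$ for all neighbors $X \simeq X'$. First I would invoke a standard Rényi-to-DP conversion: discarding the non-positive correction term $\log(1 - \nicefrac{1}{\alpha}) - \nicefrac{\log\alpha}{\alpha - 1} \le 0$ appearing in \propositionref{prop:Renyi_classical_translate} shows that, for each fixed $\alpha \in (1,w)$, the mechanism is $(g(\alpha),\delta)$-\DP with $g(\alpha) := \rho\alpha + \frac{\log(\nicefrac{1}{\delta})}{\alpha - 1}$. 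Since enlarging $\eps$ at fixed $\delta$ only weakens a DP guarantee, it then suffices to minimize $g$ over $\alpha \in (1,w)$.

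Next I would carry out this one-dimensional optimization. Writing $L := \log(\nicefrac{1}{\delta})$, which we may take to be positive since $\delta \ge 1$ renders DP vacuous, the map $g(\alpha) = \rho\alpha + L/(\alpha - 1)$ is strictly convex on $(1,\infty)$ because $g''(\alpha) = 2L/(\alpha-1)^3 > 0$. Its unconstrained stationary point is $\alpha^\star = 1 + \sqrt{L/\rho}$, at which $g(\alpha^\star) = \rho + 2\sqrt{\rho L}$. The position of $\alpha^\star$ relative to $w$ governs the two regimes: the inequality $\alpha^\star \le w$ is equivalent to $L \le (w-1)^2\rho$, which is exactly the condition separating the two cases in the statement.

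In the first regime $L \le (w-1)^2\rho$ the minimizer $\alpha^\star$ lies in the admissible interval, so the constrained minimum coincides with the unconstrained one and yields $\eps_{\mathrm{DP}} = \rho + 2\sqrt{\rho\log(\nicefrac{1}{\delta})}$. In the second regime $L > (w-1)^2\rho$ we have $\alpha^\star > w$, so $g$ is strictly decreasing on all of $(1,w)$ and its infimum over that interval is the endpoint value $g(w) = \rho w + \frac{\log(\nicefrac{1}{\delta})}{w - 1}$.

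The only delicate point --- and the main, if modest, obstacle --- is that \definitionref{defn:tcdp} controls the Rényi divergence only on the \emph{open} interval $(1,w)$, so the endpoint $\alpha = w$ needed in the second regime is not directly covered. I would close this gap using the monotonicity and left-continuity in $\alpha$ of the Rényi divergence $\alpha \mapsto \KLDA{\pM(X)}{\pM(X')}$: letting $\alpha \uparrow w$ gives $\KLDA{\pM(X)}{\pM(X')} \le \rho w$, so $\pM$ is in fact $(w,\rho w)$-\RDP and the conversion of \propositionref{prop:Renyi_classical_translate} applies directly at $\alpha = w$. Assembling the two regimes produces the claimed piecewise formula; as a sanity check, at the threshold $L = (w-1)^2\rho$ both branches reduce to $\rho(2w-1)$, confirming continuity across the cases.
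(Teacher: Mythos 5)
Your proposal is correct, but note that the paper never proves this proposition: it is imported verbatim from \citet[Lemma~6]{bun2018composable}, so there is no in-paper proof to compare against. What you have written is essentially a reconstruction of the original argument: convert the R\'enyi bound $\eps(\alpha)=\rho\alpha$ to a DP guarantee for each admissible order, then minimize $g(\alpha)=\rho\alpha+\frac{\log(\nicefrac{1}{\delta})}{\alpha-1}$ over $\alpha$, with the two branches of the statement corresponding to whether the unconstrained minimizer $\alpha^{\star}=1+\sqrt{\log(\nicefrac{1}{\delta})/\rho}$ falls inside the allowed range. Your computations (the value $g(\alpha^{\star})=\rho+2\sqrt{\rho\log(\nicefrac{1}{\delta})}$, the equivalence of $\alpha^{\star}\le w$ with $\log(\nicefrac{1}{\delta})\le(w-1)^{2}\rho$, the agreement of both branches at the threshold) are all exact, and discarding the non-positive correction term of \propositionref{prop:Renyi_classical_translate} is a legitimate weakening since a larger $\eps$ at fixed $\delta$ is a weaker guarantee.

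Two refinements on the one delicate point, the endpoint $\alpha=w$. First, the endpoint is needed not only in the second regime but also at the boundary of the first (when $\log(\nicefrac{1}{\delta})=(w-1)^{2}\rho$ exactly, so $\alpha^{\star}=w$ and the infimum of $g$ over the open interval is again only approached); your fix covers this case too, so nothing breaks. Second, be careful about which property does the work: monotonicity of $\alpha\mapsto D_{\alpha}$ gives $\lim_{\alpha\uparrow w}D_{\alpha}\le D_{w}$, which is the \emph{wrong} direction. What you need is left-continuity, i.e.\ $D_{w}\left(\pM(X)\,\middle\|\,\pM(X')\right)=\lim_{\alpha\uparrow w}D_{\alpha}\left(\pM(X)\,\middle\|\,\pM(X')\right)\le\rho w$; this does hold here, e.g.\ by monotone convergence applied to $\mathbb{E}\left[(P/Q)^{\alpha-1}\right]$ (split on $P/Q\gtrless 1$), using that $D_{\alpha}\le\rho\alpha\le\rho w$ uniformly on a left neighborhood of $w$ so the limit is finite. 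Alternatively, you can avoid touching the endpoint altogether: $\pM$ is $(g(\alpha),\delta)$-DP for every $\alpha\in(1,w)$ and $\inf_{\alpha\in(1,w)}g(\alpha)=g(w)$, and the DP property is closed under decreasing limits in $\eps$, since letting $\eps'\downarrow\eps$ in \eqref{eq:eps_delta_DP} preserves the inequality; this yields $(g(w),\delta)$-DP without ever asserting an RDP bound at $\alpha=w$.
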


\textbf{Gaussian mechanism.} The \emph{Gaussian mechanism} is a standard baseline for achieving \((\varepsilon, \delta)\)-DP by simply adding Gaussian noise to (some function of) the data before releasing it. In our notation it amounts to $\pM_{\mathrm{G}}(X) = X + \sigma \xi$ with $\xi \sim \pN(0, \brI_{n\times d})$ where $\sigma > 0$ controls the noise magnitude. The Gaussian mechanism is $\left(\frac{\textsc{C}^{2}_{X}}{2\sigma^{2}}, \infty \right)$-tCDP (also known as zCDP \citep{bun2016concentrated}), and $(\eps, \delta)$-DP where $\eps = \frac{\sqrt{2 \log(1.25/\delta)}}{\sigma}$ for any $\delta \in (0,1)$ \citep[Appendix ~A]{Dwork_AlgFoundationd_DP}.  

\section{R\'enyi-DP Analysis of the Gaussian Mixing Mechanism} 
\label{s:mechanism}
We start by providing a new privacy analysis of \ref{eq:mechanism_intro}, under the assumption that we have a lower bound \(\overline{\lambda}_{\min}\) for the minimum eigenvalue of \(X^{\top}X\) (called the \emph{scale bound}).

\begin{lem}
    \label{lem:Privacy_First}
    For any data matrix \(X \in \reals^{n \times d}\) that satisfies row bound \(\textsc{C}_{X}\) and scale bound \(\overline{\lambda}_{\min}\), \ref{eq:mechanism_intro} with parameters \(k\) and \(\sigma\) such that \(\gamma \coloneqq {\textsc{C}^{-2}_{X}}\cdot (\sigma^{2} + \overline{\lambda}_{\min}) > 1\) satisfies \((\alpha, \varphi(\alpha; k, \gamma))\)-\RDP for any $\alpha \in (1, \gamma)$, where 
    \begin{align}
        \label{eq:privacy_calib}
        \varphi\left(\alpha; k, \zeta\right) \coloneqq \frac{k\alpha}{2(\alpha{-}1)} \log\left(1{-}\frac{1}{\zeta}\right) - \frac{k}{2(\alpha{-}1)} \log\left(1{-}\frac{\alpha}{\zeta}\right).
    \end{align}
\end{lem}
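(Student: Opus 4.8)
The plan is to first identify the exact output law of \ref{eq:mechanism_intro} and then collapse the R\'enyi divergence to a one-dimensional Gaussian computation. Marginalizing over the sketching matrix $\mathsf{S}$, each row of $\mathsf{S}X + \sigma\xi$ is an independent draw from $\pN(\vec{0}_d, X^\top X + \sigma^2\brI_d)$: if $s^\top$ denotes a row of $\mathsf{S}$, then $s^\top X \sim \pN(0, X^\top X)$, and the independent Gaussian noise contributes $\sigma^2\brI_d$ to the covariance, while distinct rows are independent because distinct rows of $\mathsf{S}$ and of $\xi$ are. Hence $\pM(X)$ has $k$ i.i.d. rows $\pN(\vec{0}_d, \Sigma)$ with $\Sigma \coloneqq X^\top X + \sigma^2\brI_d$, and $\pM(X')$ has $k$ i.i.d. rows $\pN(\vec{0}_d, \Sigma')$ with $\Sigma' \coloneqq X'^\top X' + \sigma^2\brI_d$. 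By additivity of the R\'enyi divergence over product measures, $D_\alpha(\pM(X)\,\|\,\pM(X')) = k\,D_\alpha(\pN(\vec{0}_d,\Sigma)\,\|\,\pN(\vec{0}_d,\Sigma'))$, so it suffices to bound the per-row divergence and multiply by $k$.

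Next I would exploit the rank-one structure of the gap. Under the zero-out neighboring relation $X'$ is $X$ with some row $x_j$ removed, so $\Sigma' = \Sigma - x_j x_j^\top$, and the binding direction is $D_\alpha(\pN(0,\Sigma)\,\|\,\pN(0,\Sigma'))$, where the conditioning dataset $X$ is the one carrying the scale bound. Since the R\'enyi divergence between Gaussians is invariant under the simultaneous linear map $x\mapsto \Sigma^{-1/2}x$, the comparison becomes $\pN(\vec{0}_d,\brI_d)$ versus $\pN(\vec{0}_d,\brI_d - vv^\top)$ with $v\coloneqq \Sigma^{-1/2}x_j$ and $t\coloneqq \|v\|^2 = x_j^\top\Sigma^{-1}x_j$. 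The matrix $\brI_d - vv^\top$ has eigenvalue $1-t$ along $v$ and $1$ on its orthogonal complement, so rotating to the eigenbasis and again using additivity over coordinates, every direction but one contributes zero and we are left with the scalar $D_\alpha(\pN(0,1)\,\|\,\pN(0,1-t))$. Plugging into the closed form for univariate centered Gaussians yields exactly $\tfrac{\alpha}{2(\alpha-1)}\log(1-t) - \tfrac{1}{2(\alpha-1)}\log(1-\alpha t)$, which is finite precisely when $1-\alpha t > 0$ and matches $\tfrac{1}{k}\varphi(\alpha;k,\zeta)$ evaluated at $\zeta = 1/t$.

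It then remains to control $t$ and to invoke monotonicity. The scale bound $X^\top X \succeq \overline{\lambda}_{\min}\brI_d$ gives $\Sigma \succeq (\sigma^2+\overline{\lambda}_{\min})\brI_d$, hence $\Sigma^{-1}\preceq (\sigma^2+\overline{\lambda}_{\min})^{-1}\brI_d$, and together with the row bound $\|x_j\|\le\textsc{C}_X$ this yields $t \le \textsc{C}_X^2/(\sigma^2+\overline{\lambda}_{\min}) = 1/\gamma$. The hypothesis $\alpha < \gamma$ simultaneously guarantees $1-\alpha t \ge 1-\alpha/\gamma > 0$ (finiteness of the divergence) and $1/\gamma < 1/\alpha$ (so $t$ stays in the admissible range $[0,1/\alpha)$). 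Finally I would verify that $t\mapsto \tfrac{\alpha}{2(\alpha-1)}\log(1-t) - \tfrac{1}{2(\alpha-1)}\log(1-\alpha t)$ is increasing on $[0,1/\alpha)$ — its derivative is $\tfrac{\alpha}{2(\alpha-1)}\big(\tfrac{1}{1-\alpha t} - \tfrac{1}{1-t}\big) > 0$ for $\alpha>1$ — so replacing $t$ by its upper bound $1/\gamma$ only increases the per-row divergence, giving $\varphi(\alpha;k,\gamma)/k$ per row and hence $\varphi(\alpha;k,\gamma)$ after multiplying by $k$.

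I expect the main obstacle to be the reduction step: recognizing that the rank-one gap $\Sigma - \Sigma' = x_j x_j^\top$ collapses the $d$-dimensional divergence to the single scalar $t = x_j^\top\Sigma^{-1}x_j$, and then correctly pinning down the finiteness threshold $\alpha < \gamma$ that makes the monotone bound $t\le 1/\gamma$ legitimate. A secondary point requiring care is the direction of the divergence: the bound is driven by the neighbor in which the extra row is present, so that the smaller covariance $\Sigma'$ sits in the denominator and produces the $1-\alpha t$ factor that can approach zero; one should confirm this direction dominates the reverse one, which remains finite for every $\alpha>1$.
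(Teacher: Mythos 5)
Your strategy is essentially the paper's: identify the rows of $\pM(X)$ as i.i.d.\ $\pN(\vec{0}_d, \Sigma)$ with $\Sigma = X^{\top}X + \sigma^2\brI_d$, reduce the R\'enyi divergence to a scalar function of $t = x_j^{\top}\Sigma^{-1}x_j$, bound $t \le 1/\gamma$ using the row and scale bounds, and finish by monotonicity of that scalar function on $[0, 1/\alpha)$. Your computational route differs in a pleasant way: the paper vectorizes the output, applies the multivariate Gaussian R\'enyi formula of \citet{gil2013renyi} to Kronecker-structured covariances, and simplifies with the matrix determinant lemma, whereas you tensorize over rows, whiten by $\Sigma^{-1/2}$, and use the eigenstructure of $\brI_d - vv^{\top}$ to collapse the problem to the univariate divergence $D_{\alpha}\!\left(\pN(0,1)\,\|\,\pN(0,1-t)\right)$. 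Both yield the same closed form, and your finiteness condition $1 - \alpha t > 0$ (guaranteed by $\alpha < \gamma$) plays exactly the role of the paper's verification that $\alpha\Sigma_1^{-1} + (1-\alpha)\Sigma_2^{-1} \succ 0$.

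The one genuine gap is the reverse direction, which you flag in your final sentence but never prove. The RDP definition quantifies over ordered pairs and the zero-out neighboring relation is symmetric, so you must also bound $\KLDA{\pM(X)}{\pM(X')}$ when $X$ is the dataset containing the zeroed row and $X'$ carries the extra row, i.e.\ when the \emph{larger} covariance sits in the second slot. Your reduction gives for that direction the per-row quantity $\frac{1}{2(\alpha-1)}\log\!\left(\frac{(1+s)^{\alpha}}{1+\alpha s}\right)$ with $s = x_j^{\top}(X^{\top}X + \sigma^2\brI_d)^{-1}x_j$ taken with respect to the smaller matrix, and mere finiteness for all $\alpha > 1$ (which is all you assert) does not yield the bound $\varphi(\alpha;k,\gamma)$. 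The paper closes this with a short calculation: defining $f(t;\alpha) \coloneqq \log\!\left(\frac{(1-t)^{\alpha}}{1-\alpha t}\right) - \log\!\left(\frac{(1+t)^{\alpha}}{1+\alpha t}\right)$, one has $f(0;\alpha) = 0$ and $\frac{\partial}{\partial t}f(t;\alpha) = 2\alpha\!\left(\frac{1}{1-(\alpha t)^2} - \frac{1}{1-t^2}\right) \ge 0$ for $0 \le t < 1/\alpha$, so the row-removed direction dominates; combined with the fact that $\log\frac{(1+s)^{\alpha}}{1+\alpha s}$ is increasing in $s$ and $s \le 1/\gamma$, the reverse direction is also bounded by $\varphi(\alpha;k,\gamma)$. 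Without this (or an equivalent argument), your proof covers only half of the neighboring pairs.
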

To help understand the role of \(\gamma\), this can be viewed as a lower bound on the minimal eigenvalue of the matrix \(\widetilde{X}^{\top}\widetilde{X}\) where \(\widetilde{X} \coloneqq \textsc{C}_{X}^{-1} \cdot [X^{\top}, \sigma \brI_{d}]^{\top}\).
Using this perspective, the noise addition can be seen as a way to artificially raise the minimum singular value of the matrix $X$ to a predetermined threshold $\gamma$, after which a standard Gaussian sketching step is applied. This reinforces the prior intuition that privacy arises from applying Gaussian sketching to a matrix with a sufficiently large minimum singular value. 

\begin{proof}[Proof sketch of Lemma~\ref{lem:Privacy_First}] 
    For a fixed \(X\), we first deduce that every row of \(\pM(X)\) is distributed according to a multivariate Gaussian distribution with zero mean and covariance $\Sigma \coloneqq X^{\top}X + \sigma^2 \brI_d$.
    Using the closed form expression of the \Renyi divergence between multivariate Gaussian distributions, the quantity \(\KLDA{\pM(X)}{\pM(X')}\) is a monotonic function of \(x^{\top}\Sigma^{-1}x\) where \(x\) is the row in \(X\) that is zeroed out in \(X'\).
    This quantity can be bounded as \(x^{\top}\Sigma^{-1}x \leq \left(\lambda_{\min}(\Sigma)\right)^{-1} \cdot \|x\|^{2}\), and \(\lambda_{\min}(\Sigma) \geq \overline{\lambda}_{\min} + \sigma^{2}\) by the scale bound. 
    We defer full details to \appendixref{app:privacy}. 
\end{proof} 

Following the analysis from \appendixref{app:privacy}, the function $\varphi$ is non-negative for every $\alpha \in (1, \gamma)$ and further upper bounds the \RDP curve of $\pM(X)$. Moreover, for sufficiently large \(\gamma\), the function \(\varphi\) in \lemmaref{lem:Privacy_First} grows at most linearly in \(\alpha\) in a certain range, suggesting that \ref{eq:mechanism_intro} satisfies \definitionref{defn:tcdp}. The next corollary formalizes this observation.
\begin{corol}
    \label{corol:tcdp_proof}
    Consider the setup of \lemmaref{lem:Privacy_First}.
    If \(\gamma > \frac{5}{2}\), then \ref{eq:mechanism_intro} satisfies \((k/2\gamma^{2}, 2\gamma/5)\)-\emph{tCDP}.
\end{corol}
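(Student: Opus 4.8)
The plan is to reduce the corollary to a single scalar inequality between the \RDP curve of \lemmaref{lem:Privacy_First} and a linear function of $\alpha$. By \definitionref{defn:tcdp}, proving $(k/2\gamma^{2}, 2\gamma/5)$-tCDP means establishing $D_{\alpha}(\pM(X)\|\pM(X')) \le \frac{k}{2\gamma^{2}}\alpha$ for all neighboring $X,X'$ and all $\alpha \in (1, 2\gamma/5)$. First I would check that this range is admissible: the hypothesis $\gamma > \tfrac{5}{2}$ gives $w := 2\gamma/5 > 1$, and since $2/5 < 1$ we also have $w < \gamma$, so $(1,w)\subset(1,\gamma)$ and \lemmaref{lem:Privacy_First} applies throughout, giving $D_{\alpha} \le \varphi(\alpha; k, \gamma)$. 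Hence it suffices to prove the deterministic inequality $\varphi(\alpha; k, \gamma) \le \frac{k\alpha}{2\gamma^{2}}$ on $(1, 2\gamma/5)$.

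Next I would expand the two logarithms in \eqref{eq:privacy_calib} as power series in $1/\gamma$ and $\alpha/\gamma$. Writing $g(\alpha) := \alpha\log(1-1/\gamma) - \log(1-\alpha/\gamma)$, the $n=1$ contributions cancel and one obtains $g(\alpha) = \sum_{n\ge 2}\frac{\alpha^{n}-\alpha}{n\gamma^{n}}$. The key simplification is the factorization $\alpha^{n}-\alpha = \alpha(\alpha-1)\sum_{j=0}^{n-2}\alpha^{j}$, which cancels the troublesome $(\alpha-1)$ in the denominator of $\varphi$ and yields
\[
\varphi(\alpha; k, \gamma) = \frac{k}{2}\cdot\frac{g(\alpha)}{\alpha-1} = \frac{k\alpha}{2}\sum_{n\ge 2}\frac{1}{n\gamma^{n}}\sum_{j=0}^{n-2}\alpha^{j}.
\]
After dividing by $\frac{k\alpha}{2}$, the target inequality becomes the clean statement $\sum_{n\ge 2}\frac{1}{n\gamma^{n}}\sum_{j=0}^{n-2}\alpha^{j} \le \frac{1}{\gamma^{2}}$.

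I would then isolate the leading $n=2$ term, which equals exactly $\frac{1}{2\gamma^{2}}$, so it remains to bound the tail $\sum_{n\ge 3}(\cdots)$ by $\frac{1}{2\gamma^{2}}$. Using $\alpha \ge 1$ to estimate $\sum_{j=0}^{n-2}\alpha^{j} \le (n-1)\alpha^{n-2}$ and substituting $x := \alpha/\gamma$, the tail is at most $\frac{1}{\gamma^{2}}\sum_{m\ge 1}\frac{m+1}{m+2}x^{m}$, so the claim reduces to showing this power series is $\le \tfrac12$ for $x \le 2/5$. Since the series has positive coefficients it is increasing in $x$, so it suffices to verify the bound at $x = 2/5$ via the closed form $\sum_{m\ge 1}\frac{m+1}{m+2}x^{m} = \frac{x}{1-x} + \frac{1}{x^{2}}\bigl(\log(1-x) + x + \tfrac{x^{2}}{2}\bigr)$, which at $x=2/5$ evaluates to roughly $0.474 < \tfrac12$.

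The main obstacle is obtaining the \emph{sharp} threshold $2/5$. The crude bound $\frac{m+1}{m+2}\le 1$ only gives $\frac{x}{1-x}\le\tfrac12$, i.e.\ $x \le 1/3$ (equivalently $w \le \gamma/3$), which is strictly weaker than the claimed $w = 2\gamma/5$. Retaining the weights $\frac{m+1}{m+2}$ and evaluating the resulting series in closed form is exactly what recovers the correct constant; this is the single place where a careful estimate, rather than a one-line geometric bound, is required. Everything else is routine series manipulation combined with the admissibility check on the range of $\alpha$.
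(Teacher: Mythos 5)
Your proof is correct, and it takes a genuinely different route from the paper's. Both arguments start with the same reduction: by \lemmaref{lem:Privacy_First}, since $2\gamma/5 < \gamma$ the whole range $(1, 2\gamma/5)$ is admissible, and it suffices to prove the deterministic inequality $\varphi(\alpha;k,\gamma)\le \frac{k\alpha}{2\gamma^{2}}$ there. The paper then proceeds by calculus: it defines $G(\alpha)=\alpha(\alpha-1)-\alpha\gamma^{2}\log\left(1-\frac{1}{\gamma}\right)+\gamma^{2}\log\left(1-\frac{\alpha}{\gamma}\right)$, notes $G(1)=0$, observes that $(\gamma-\alpha)G'(\alpha)$ is a concave quadratic in $\alpha$, and shows via its two real roots that $G'\ge 0$ on $(1,\alpha_{\max})$ with $\alpha_{\max}>2\gamma/5$ whenever $\gamma > 5/2$. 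You instead expand both logarithms as power series, use the factorization $\alpha^{n}-\alpha=\alpha(\alpha-1)\sum_{j=0}^{n-2}\alpha^{j}$ to cancel the $(\alpha-1)$ denominator exactly, and reduce the claim to bounding the positive-coefficient series $\sum_{m\ge1}\frac{m+1}{m+2}x^{m}$ at $x=2/5$; your closed form turns this into the single scalar inequality $\ln(5/3)\ge 38/75$, which holds since $\ln(5/3)\approx 0.5108 > 0.5067$, so your ``roughly $0.474<\tfrac12$'' step is fully rigorous. The trade-off: your series route is more elementary (no discriminant analysis), every intermediate quantity is manifestly nonnegative, and it makes the origin of the constant $2/5$ transparent --- including your correct observation that the naive geometric estimate $\frac{m+1}{m+2}\le 1$ only reaches order $\gamma/3$. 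The paper's route avoids infinite series and yields slightly more information: the tCDP order can be taken to be the quadratic's larger root $\alpha_{\max}$, of which $2\gamma/5$ is just a clean lower bound. Both proofs ultimately bottom out in one concrete numerical verification of comparable difficulty.
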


\paragraph{Tightness of Bound.} 
In our computations, the only inequality that occurs is the one stated in the proof sketch.
This inequality is tight when the row in which $X$ differs from $X'$ happens to be the eigenvector corresponding to the minimal eigenvalue of $X^{\top} X$.
This implies that the lemma is tight for specific inputs, ensuring the bound is optimal under certain input assumptions. More precisely, for the set of matrices $X\in \reals^{n\times d}$ such that $n\geq d$ with norm bounds $\textsc{C}_X$ and such that $\lambda_{\min}(X^{\top}X) \geq \textsc{C}^{2}_X$, our bound is achieved with equality for any $X$ such that $X^{\top}X = \textsc{C}^{2}_X \cdot \brI_d$, corresponding to the case where $\frac{1}{\textsc{C}_X}X$ is a semi-orthogonal matrix. 

\paragraph{Comparison to Existing Literature.}  
Combining \corollaryref{corol:tcdp_proof} with the conversion from RDP to DP we get that \ref{eq:mechanism_intro} is $(\eps, \delta)$-DP where $\eps \leq \frac{k}{2\gamma^{2}} + \frac{\sqrt{2 k \log(\nicefrac{1}{\delta})}}{\gamma}$. This bound is comparable to the bound from \citep[Theorem~2]{sheffet2019old} which states that \ref{eq:mechanism_intro} is $(\eps, \delta)$-DP where $\eps \coloneqq \frac{2 \sqrt{2 k \log(4/\delta)}}{\gamma} + \frac{2\log(4/\delta)}{\gamma}$. We note that the leading term in both expressions is similar up to a factor of $2$, and where the bound from  \citep{sheffet2019old} suffers the additional $\frac{2\log(4/\delta)}{\gamma}$ term. Numerical analysis based on the exact RDP guarantee (derived by using the conversion from RDP to DP of \propositionref{prop:Renyi_classical_translate}) provides a larger improvement of \lemmaref{lem:Privacy_First} over the bound of \citep{sheffet2019old}; see \figureref{fig:comparison_Sheffet_ours} for a numerical comparison.  

Moreover, we note that with \ref{eq:mechanism_intro} the entire dataset contributes to the privacy protection of a single element via mixing. Since $\overline{\lambda}_{\min}$ is at most $\frac{n\textsc{C}^{2}_X}{d}$ (see discussion in \citep{wang_adassp}), this resembles privacy amplification by shuffling or subsampling, whereby the added noise to other elements contributes to the privacy protection of any element. The dependence of $\eps$ on the parameters is not trivially comparable since $k$ affects only \ref{eq:mechanism_intro}, and the utility of the two outputs might be arbitrarily different.

\begin{figure}[t]
  \centering
  \begin{subfigure}[c]{0.49\linewidth}
    \centering
    \includegraphics[height=6.6cm]{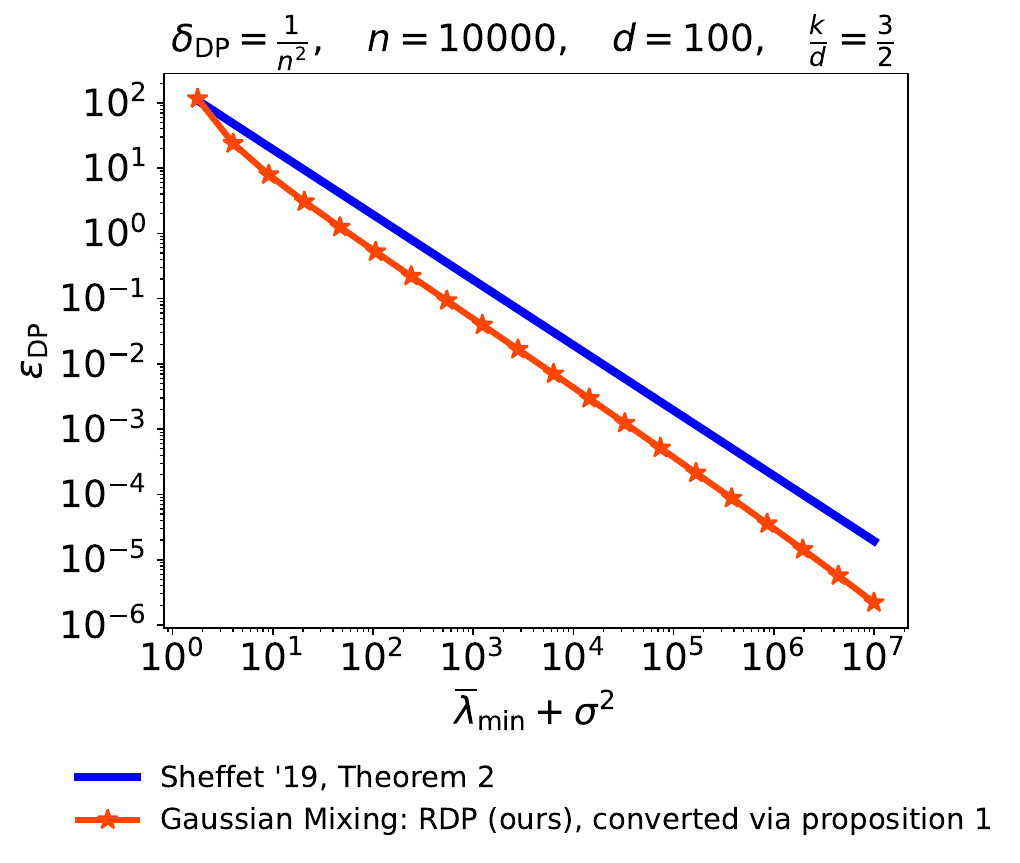}
    \caption{Required $\varepsilon_{\mathrm{DP}}$.}
    \label{fig:comparison_sheffet_ours_main}
  \end{subfigure}
  \hfill
  \begin{subfigure}[c]{0.49\linewidth}
    \centering
    \includegraphics[height=6.6cm]{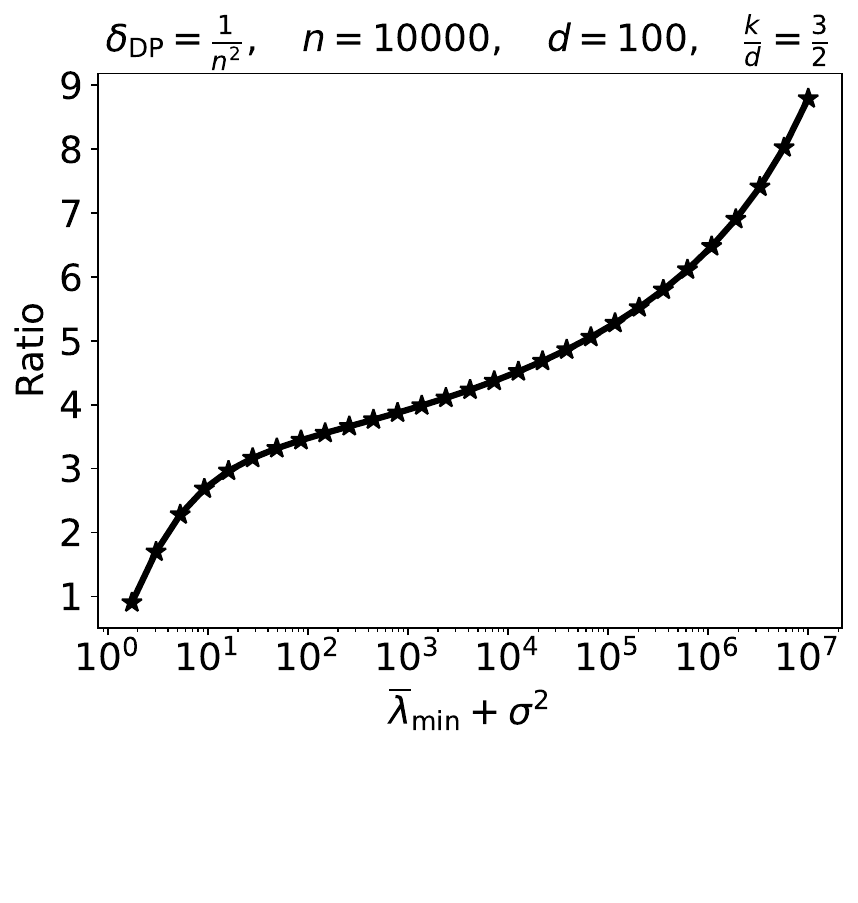}
    \caption{Ratio between the bounds on $\eps_{\mathrm{DP}}$.}
    \label{fig:comparison_sheffet_ours_ratio}
  \end{subfigure}

  \caption{$\overline{\lambda}_{\min} + \sigma^2$ required for attaining $\varepsilon_{\mathrm{DP}}$, using \lemmaref{lem:Privacy_First} converted to $(\varepsilon_{\mathrm{DP}}, \delta)$-DP via \propositionref{prop:Renyi_classical_translate}, and compared to the bound from \citep[Theorem~2]{sheffet2019old}. We plot (a) the relation between $\eps_{\mathrm{DP}}$ and $\overline{\lambda}_{\min} + \sigma^2$ and (b) the ratio between the $\eps_{\mathrm{DP}}$ attained by each bound. Our improved \RDP analysis reduces the final $\varepsilon_{\mathrm{DP}}$, illustrating the benefit of our new approach.}
  \label{fig:comparison_Sheffet_ours}
\end{figure}

\paragraph{Usage Without $\overline{\lambda}_{\min}$.}
Since the bounded scale assumption cannot be enforced in the general case, one way to utilize Lemma \ref{lem:Privacy_First} is by simply using the fact that $\overline{\lambda}_{\min} \ge 0$ and relying solely on the added noise. In this case, the \ref{eq:mechanism_intro} mechanism can be thought of as a complicated counterpart of the Gaussian mechanism. Comparing their privacy guarantees reveals a clear similarity, with $\sqrt{k}/\gamma$ taking the role of $1/\sigma$ in the Gaussian mechanism. In this case $\gamma = \sigma^{2}$ which amounts to a quadratic dependence of $\eps$ on $\sigma$ in the leading term (rather than the linear one in the Gaussian mechanism), balanced by $\sqrt{k}$, which implies improved privacy results when $\sigma > \sqrt{k}$. We note that the output of these two mechanisms follows different distributions (and even dimensions), so the privacy-utility tradeoff depends on the use case.

\paragraph{Instance Specific Bound.} While setting $\overline{\lambda}_{\min} = 0$ provides a privacy guarantee for any matrix, one might wish to reduce the added noise in a data-dependent manner based on $\lambda_{\min}(X^{\top}X)$. Since this quantity is data-dependent, it must be used in a privacy-preserving manner, which is achieved using Algorithm \ref{alg:gauss_mix_full}. A similar algorithm was proposed by \citet[Algorithm 1]{sheffet2017differentially}, with $\sz$ sampled from a Laplace distribution, and $\eta$ set either to $0$ or to $\sqrt{\gamma}$ based on $\widetilde{\lambda}$, rather than interpolating between the two as we do. 

\begin{figure*}[t]
    \centering
    \begin{minipage}[t]{\textwidth}
        \begin{algorithm}[H]
        \caption{\texttt{ModifiedGaussMix}} 
        \begin{algorithmic}[1]
        \Require Dataset $X \in \reals^{n\times d}$, row bound \(\textsc{C}_{X}\), parameters $k, \gamma, \tau, \eta$.
        
        \If{$\gamma \leq \tau$}:
            \State \textbf{Output:} $\mathsf{S}X + \gamma\textsc{C}_X\xi_1$ \ with $\mathsf{S} \sim \pN(0, \brI_{k\times n})$ and $\xi_1 \sim \pN(0, \brI_{k\times d})$. 
        \Else:
            \State Set $\widetilde{\lambda} = \max\left\{\lambda_{\min}(X^{T} X) - \eta\textsc{C}^{2}_X\left(\tau - \sz\right), 0\right\}$, where $\sz \sim \pN(0, 1)$. 
            \State Set $\widetilde{\eta} = \sqrt{\max\{\gamma - \widetilde{\lambda}, 0\}}$.
            \State \textbf{Output:} $\mathsf{S}X + \widetilde{\eta}\textsc{C}_X\xi_1$ \ with $\mathsf{S} \sim \pN(0, \brI_{k\times n})$ and $\xi_1 \sim \pN(0, \brI_{k\times d})$.
        \EndIf
        \end{algorithmic}
        \label{alg:gauss_mix_full}
        \end{algorithm}
    \end{minipage}
\end{figure*}

\begin{thm}
    \label{thm:Privacy_First}
    Let $\delta \in (0,1)$, $k\geq 1$, $\eta > 0$ and $\gamma > 5/2$, and set $\tau = \sqrt{2\log(\nicefrac{3}{\delta})}$. Then, the output of \algorithmref{alg:gauss_mix_full} is $(\widetilde{\eps}(\eta, \gamma, k, \delta),\delta)$-\DP, where  
    \begin{align}
        \label{eq:eps_tot_final}
        \widetilde{\eps}(\eta,\gamma, k, \delta) = \frac{\sqrt{2\log(\nicefrac{3.75}{\delta})}}{\eta} + \underset{1 < \alpha < \gamma}{\min}\left\{\varphi(\alpha; k,\gamma) + \frac{\log(\nicefrac{3}{\delta}) + (\alpha - 1)\log(1 - \nicefrac{1}{\alpha}) -\log(\alpha)}{\alpha - 1}\right\}. 
    \end{align}
\end{thm}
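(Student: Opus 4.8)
The plan is to read \algorithmref{alg:gauss_mix_full} as an adaptive composition of two releases and to split the budget $\delta$ into three equal slices of size $\delta/3$. The first release is a privatized lower bound on $\lambda_{\min}(X^\top X)$, namely $\widetilde\lambda$ (equivalently $\widetilde\eta$); the second is the noisy sketch $Y := \mathsf{S}X + \widetilde\eta\textsc{C}_X\xi_1$ produced with the data-adaptive noise level $\widetilde\eta$. Since post-processing can only help, I would bound the privacy of the \emph{pair} $(\widetilde\eta, Y)$ and then drop $\widetilde\eta$ to recover the actual output. The branch $\gamma \le \tau$ uses a fixed, conservative noise level $\gamma\textsc{C}_X$ with no data-dependent estimate and is covered directly by \lemmaref{lem:Privacy_First} (its guarantee is no weaker than the claimed bound, and the first term of $\widetilde\eps$ is pure slack), so I would concentrate on the else-branch.

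For the first release, I would show that reporting $\widetilde\lambda$ is $\big(\sqrt{2\log(\nicefrac{3.75}{\delta})}/\eta,\ \delta/3\big)$-DP. By Weyl's inequality, zeroing out a row $x$ with $\|x\|\le\textsc{C}_X$ perturbs $X^\top X$ by the rank-one matrix $xx^\top$, so $\lambda_{\min}(X^\top X)$ has sensitivity at most $\textsc{C}^2_X$ under $X\simeq X'$. The deterministic shift by $\eta\textsc{C}^2_X\tau$ and the truncation $\max\{\cdot,0\}$ are post-processing and do not affect privacy, so $\widetilde\lambda$ is the Gaussian mechanism applied to $\lambda_{\min}(X^\top X)$ with noise standard deviation $\eta\textsc{C}^2_X$, i.e.\ with noise-to-sensitivity ratio $\eta$. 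Invoking the Gaussian-mechanism guarantee from \sectionref{s:prelims} with $\delta_1=\delta/3$ gives exactly $\eps_1 = \sqrt{2\log(\nicefrac{1.25}{\delta_1})}/\eta = \sqrt{2\log(\nicefrac{3.75}{\delta})}/\eta$, the first summand of $\widetilde\eps$.

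For the second release, I would condition on the realized value $\widetilde\eta=\eta_0$, so the noise level is now a \emph{fixed} number and $Y=\mathsf{S}X+\eta_0\textsc{C}_X\xi_1$ is precisely \ref{eq:mechanism_intro} with $\sigma=\eta_0\textsc{C}_X$. Define the good event $G=\{\sz\le\tau\}$; on $G$ we have $\tau-\sz\ge 0$, hence $\widetilde\lambda\le\lambda_{\min}(X^\top X)$, so $\widetilde\lambda$ is a \emph{valid} scale bound, and by construction $\widetilde\eta^2+\widetilde\lambda\ge\gamma$ (with equality whenever $\widetilde\lambda\le\gamma$), making the effective richness parameter of the sketch at least $\gamma$. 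Applying \lemmaref{lem:Privacy_First} with scale bound $\widetilde\lambda$ therefore yields the RDP curve $\varphi(\alpha;k,\gamma)$ for every $\alpha\in(1,\gamma)$; converting to $(\eps_2(\alpha),\delta/3)$-DP via \propositionref{prop:Renyi_classical_translate} with $\delta_2=\delta/3$ and minimizing over $\alpha$ reproduces the bracketed term of $\widetilde\eps$, since substituting $\delta\mapsto\delta/3$ turns the conversion's $-\log(\alpha\delta)/(\alpha-1)$ into $\big(\log(\nicefrac{3}{\delta})-\log\alpha\big)/(\alpha-1)$. The Gaussian tail bound gives $\Pr(G^c)=\Pr(\sz>\tau)\le e^{-\tau^2/2}=\delta/3$, accounting for the third slice of the budget.

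Finally I would assemble the pieces by adaptive composition: releasing $\widetilde\lambda$ costs $(\eps_1,\delta/3)$, and conditioned on any realized $\widetilde\eta$ falling in the good region the sketch costs $(\min_\alpha\eps_2(\alpha),\delta/3)$, while the event that $\widetilde\eta$ lies outside the good region contributes the remaining $\delta/3$. The cleanest way to make this rigorous is to compare the true mechanism against a surrogate that behaves identically on $G$ and aborts on $G^c$: the surrogate satisfies composition with total failure $2\delta/3$, and it differs from the true mechanism with probability at most $\Pr(G^c)\le\delta/3$, so the triangle-type inequality for approximate DP yields $(\eps_1+\min_\alpha\eps_2(\alpha),\ \delta)$-DP. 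I expect the main obstacle to be exactly this combination step: because the noise level $\widetilde\eta$ is itself a private, data-dependent quantity, I must ensure that the scale bound fed into \lemmaref{lem:Privacy_First} is legitimate \emph{simultaneously} for both neighbors $X\simeq X'$ (using that zeroing out a row only decreases $\lambda_{\min}$, so the binding constraint is the neighbor with the smaller minimal eigenvalue), and that conditioning on the released $\widetilde\eta$ does not leak information beyond what the composition already charges.
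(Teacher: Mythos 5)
Your proposal is correct and follows essentially the same route as the paper's proof: the same $\delta/3+\delta/3+\delta/3$ budget split, the Gaussian-mechanism argument (sensitivity $\textsc{C}^{2}_X$, noise-to-sensitivity ratio $\eta$) yielding the first term, \lemmaref{lem:Privacy_First} combined with \propositionref{prop:Renyi_classical_translate} at $\delta/3$ yielding the bracketed term, the tail bound $\Pr(\sz>\tau)\le e^{-\tau^2/2}=\delta/3$ for the event that the private eigenvalue estimate fails to be a valid scale bound, and composition to conclude. Your surrogate/abort formalization of the data-dependent conditioning is simply a more explicit rendering of what the paper dispatches in one line as ``simple composition,'' so the two arguments coincide in substance.
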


\begin{proof}[Proof sketch]
    The case of $\gamma < \tau$ is the previously discussed  $\overline{\lambda}_{\min} = 0$ setting. For the other case, we note that w.p. $\ge 1 - \nicefrac{\delta}{3}$ we have $\widetilde{\lambda} \ge \lambda_{\min}(X^{T} X)$, so the conditions of Lemma \ref{lem:Privacy_First} are met. Since $\overline{\lambda}_{\min}$ has sensitivity $\textsc{C}_{X}^2$, the privacy properties of the Gaussian mechanism apply to the release of $\widetilde{\lambda}$. Using simple composition, we combine it with the privacy of \ref{eq:mechanism_intro}, which completes the proof.
    
\end{proof} 
As we show in \appendixref{app:minimization_exist}, $\widetilde{\eps}(\eta,\gamma, k, \delta) \geq 0$ and further can be made arbitrarily small by increasing $\eta$ and $\gamma$. As mentioned previously, the parameter $\gamma$ should be thought of as the target minimal eigenvalue, $\tau$ serves as the accuracy bound on the estimation of $\overline{\lambda}_{\min}$, and $\eta$ controls the privacy loss of its private estimation. Setting $\eta = \gamma / \sqrt{k}$, and combining Corollary \ref{corol:tcdp_proof} with the DP implication of tCDP, we get that \algorithmref{alg:gauss_mix_full} is $(\eps, \delta)$-DP where $\eps = \frac{k}{2\gamma^{2}} + \frac{2\sqrt{2 k \log(\nicefrac{4}{\delta})}}{\gamma}$, which matches the privacy bound on \ref{eq:mechanism_intro} up to a constant. 


\section{Applications} 
\label{s:apps}

We now demonstrate applications of \ref{eq:mechanism_intro} for two private learning tasks that can be formalized as linear regression. In each case, we instantiate the mechanism on a concatenated dataset \((X_1, X_2)\) where \(X_{1} \in \reals^{n \times d_{1}}\) and \(X_{2} \in \reals^{n \times d_{2}}\).
The application of \ref{eq:mechanism_intro} to the concatenation \((X_{1}, X_{2})\) is given by
\begin{align}
    \pM(X_1, X_2) = \mathsf{S}\left(X_1, X_2\right) + \sigma\left(\xi_1, \xi_2\right) = \left(\mathsf{S}X_1 + \sigma\xi_1, \mathsf{S}X_2 + \sigma \xi_2\right)
\end{align}
where \(\mathsf{S} \sim \pN(0, \brI_{k \times n})\), \(\xi_1 \sim \pN(0, \brI_{k \times d_1})\), and \(\xi_2 \sim \pN(0, \brI_{k \times d_2})\).
If \(\pM(X_1, X_2)\) satisfies \RDP w.r.t. \((X_{1}, X_{2})\), then by post-processing the inner-product \(\pK\) defined as
\begin{align}
    \label{eq:grad_based_mechanism}
    \pK(\pM(X_1, X_2)) \coloneqq \left(\mathsf{S}X_1 + \sigma \xi_1\right)^{\top}\left(\mathsf{S}X_2 + \sigma \xi_2\right)
\end{align} 
also satisfies \RDP. The inner-product \(\pK\) will form the core component of our algorithms. 

\subsection{Differentially Private Ordinary Least Squares}
\label{s:linear}

We begin with the problem of DP linear regression. Let the dataset \((X, Y)\) such that the design matrix \(X \in \reals^{n \times d}\) and the response vector \(Y \in \reals^n\).
Throughout, we make the following assumptions.
\begin{enumerate}[leftmargin=0.75in, itemsep=0pt]     
    \item [(\textbf{A}\textsubscript{1}) ] \emph{Bounded domain: } $\norm{x_i} \leq \textsc{C}_X$ and $|y_i| \leq \textsc{C}_Y$ for all $i \in [n]$\footnote{These domain bounds appear in many standard DP linear regression settings, such as \citet{sheffet2017differentially, wang_adassp}.}. 
    \item [(\textbf{A}\textsubscript{2}) ] \emph{Overspecified system: } $n \geq d$. 
\end{enumerate} 

Our goal is to estimate a linear predictor under $(\eps, \delta)$-DP, while preserving the privacy of each individual data pair $(x_i^{\top}, y_i)$. Our non-private baseline is the standard Ridge regression estimator \citep{tikhonov1963solution, hoerl1970ridge}: 
\begin{align}
    \label{eq:optimal_ridge}
    \theta^{*}(\nu) \coloneqq \underset{\theta}{\argmin}\ \left\{\|Y - X\theta\|_2^2 + \nu \|\theta\|_2^2\right\} = (X^{\top}X + \nu \brI_d)^{-1}X^{\top}Y,
\end{align}
where \(\nu \geq 0\) is a regularization parameter. The unregularized least-squares solution is denoted \(\theta^* \coloneqq \theta^*(0)\), and we define the empirical loss as \(L_{X,Y}(\theta) \coloneqq \|Y - X\theta\|_2^2\).
Our algorithm, summarized in \algorithmref{alg:linear_mix}, uses \algorithmref{alg:gauss_mix_full} for obtaining a privatized version of the pair $(X,Y)$ (interpreted as halves of one joint matrix) by setting a large enough $\gamma$ and then solving the resulting least-squares problem.  The existence of a $\gamma$ that satisfies the conditions in Line \ref{line:find_gamma} in \algorithmref{alg:linear_mix}  is ensured by the analysis presented in \appendixref{app:minimization_exist}. While the structure of our algorithm resembles earlier proposal by \citet{sheffet2017differentially}, our refined privacy analysis under \RenyiDP improves the overall privacy--utility trade-off. Moreover, our algorithm exploits $\lambda_{\min}$ in a modified way, ensuring that the variance of the additive noise component is always reduced by utilizing the private estimation of $\lambda_{\min}(X^{\top}X)$.

\begin{figure*}[t]
    \centering
    \begin{minipage}[t]{\textwidth}
        \begin{algorithm}[H]
        \caption{\texttt{LinearMixing}} 
        \begin{algorithmic}[1]
        \Require Dataset $(X,Y)$ satisfying assumptions (\textbf{A}\textsubscript{1}, \textbf{A}\textsubscript{2}), privacy parameters $(\eps, \delta)$, parameter $k$.
        
        \State\label{line:find_gamma}Find the smallest $\gamma > \nicefrac{5}{2}$ such that $\widetilde{\eps}(\eta, \gamma, k, \delta)$ (given in \eqref{eq:eps_tot_final}) is less than $\eps$\label{alg:find-gamma}, while setting $\eta = \nicefrac{\gamma}{\sqrt{k}}$. 
        
        \State Calculate $\left[\widetilde{X}, \widetilde{Y}\right] = \texttt{ModifiedGaussMix}\Big([X,Y], \sqrt{\textsc{C}_{X}^{2} + \textsc{C}_{Y}^2}, k, \gamma, \sqrt{2\log\left(\nicefrac{3}{\delta}\right)}, \eta\Big)$.   
        
        \State \textbf{Output:} $\theta_{\mathrm{Lin}} \coloneqq \big(\widetilde{X}^{\top}\widetilde{X}\big)^{-1}\widetilde{X}^{\top}\widetilde{Y}$. 
        \end{algorithmic}
        \label{alg:linear_mix}
        \end{algorithm}
    \end{minipage}
\end{figure*}

\begin{thm}
    \label{thm:linear_reg_privacy}
    For any $k \geq 1$ the output $\theta_{\mathrm{Lin}}$ is $(\eps,\delta)$-\DP. 
\end{thm}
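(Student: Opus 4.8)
The plan is to reduce the statement to a direct application of \theoremref{thm:Privacy_First} (the privacy guarantee of \texttt{ModifiedGaussMix}) followed by the post-processing property of differential privacy. The first thing I would establish is that the parameter $\gamma$ selected in Line~\ref{line:find_gamma} is \emph{data-independent}: the quantity $\widetilde{\eps}(\eta,\gamma,k,\delta)$ in \eqref{eq:eps_tot_final} depends only on the public parameters $(\eps,\delta,k)$ and on $\gamma$ itself (through $\eta = \gamma/\sqrt{k}$), so the smallest qualifying $\gamma > 5/2$ is a deterministic function of $(\eps,\delta,k)$ alone and never inspects $(X,Y)$. Its existence is guaranteed by the analysis in \appendixref{app:minimization_exist}, which shows $\widetilde{\eps}$ can be driven arbitrarily small by increasing $\gamma$. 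Consequently $\gamma$, $\eta$, and $\tau = \sqrt{2\log(\nicefrac{3}{\delta})}$ may all be treated as fixed public constants when invoking \theoremref{thm:Privacy_First}.

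Next I would verify that the concatenated dataset $[X,Y]$ meets the hypotheses of \theoremref{thm:Privacy_First} with row bound $\sqrt{\textsc{C}_X^2 + \textsc{C}_Y^2}$. By assumption~(\textbf{A}\textsubscript{1}), each row $(x_i^\top, y_i)$ of $[X,Y]$ satisfies $\|(x_i^\top, y_i)\| = \sqrt{\|x_i\|^2 + y_i^2} \leq \sqrt{\textsc{C}_X^2 + \textsc{C}_Y^2}$, matching the row bound supplied to \texttt{ModifiedGaussMix}. Crucially, the neighboring relation transfers: if $(X,Y) \simeq (X',Y')$ by zeroing out the pair $(x_i^\top, y_i)$, then the corresponding concatenated matrices satisfy $[X,Y] \simeq [X',Y']$ by zeroing out the same row. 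With these checks in place, \theoremref{thm:Privacy_First} (applied with $\gamma > 5/2$ and $\tau = \sqrt{2\log(\nicefrac{3}{\delta})}$) yields that the output $[\widetilde{X}, \widetilde{Y}]$ of \texttt{ModifiedGaussMix} is $(\widetilde{\eps}(\eta,\gamma,k,\delta), \delta)$-\DP.

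By the defining property of $\gamma$ in Line~\ref{line:find_gamma}, $\widetilde{\eps}(\eta,\gamma,k,\delta) \leq \eps$; since any mechanism that is $(\eps',\delta)$-\DP for some $\eps' \leq \eps$ is also $(\eps,\delta)$-\DP, the released pair $[\widetilde{X}, \widetilde{Y}]$ is $(\eps,\delta)$-\DP. Finally, the estimator $\theta_{\mathrm{Lin}} = (\widetilde{X}^\top \widetilde{X})^{-1}\widetilde{X}^\top \widetilde{Y}$ is a deterministic function of $[\widetilde{X}, \widetilde{Y}]$ alone, using no further access to the private data $(X,Y)$, so the post-processing property of differential privacy (\sectionref{s:prelims}) immediately gives that $\theta_{\mathrm{Lin}}$ is $(\eps,\delta)$-\DP for any $k \geq 1$.

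The proof is largely bookkeeping, and the only point requiring genuine care is the first step: confirming that $\gamma$ is chosen purely from public parameters and never examines the data. Were $\gamma$ data-dependent, \theoremref{thm:Privacy_First}—which treats $\gamma$ as a fixed parameter—could not be invoked directly, and the selection of $\gamma$ would itself incur an (uncharged) privacy cost. I expect this to be the main, if mild, obstacle; once it is settled, the result follows from composing the guarantee of \texttt{ModifiedGaussMix} with post-processing.
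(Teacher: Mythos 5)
Your proof is correct and follows essentially the same route as the paper's: existence of a qualifying $\gamma$ via \appendixref{app:minimization_exist}, invocation of \theoremref{thm:Privacy_First} with the row bound $\sqrt{\textsc{C}_X^2 + \textsc{C}_Y^2}$, and post-processing. Your added observations---that $\gamma$ is data-independent and that the neighboring relation transfers to the concatenated matrix $[X,Y]$---are worthwhile explicit checks that the paper leaves implicit, but they do not change the argument's structure.
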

\begin{proof}
    We first note that for any \(k \geq 1\), we are guaranteed to find a \(\gamma\) in Line~\ref{alg:find-gamma} such that the target \(\widetilde{\eps}(\eta, \gamma, k, \delta) \leq \eps\) where \(\eta = \nicefrac{\gamma}{\sqrt{k}}\) (see \appendixref{app:minimization_exist}).
    Since we apply \texttt{ModifiedGaussMix} (Line 2) with the appropriate domain bounds, the differential privacy guarantee follows from \theoremref{thm:Privacy_First}.
    Finally, since \(\theta_{\mathrm{Lin}}\) is a function of \((\widetilde{X}, \widetilde{Y})\), which is \((\eps,\delta)\)-DP, we have by post-processing that \(\theta_{\mathrm{Lin}}\) also satisfies \((\eps,\delta)\)-DP.
\end{proof}

\begin{figure}[t]
  \centering
  \begin{subfigure}[t]{0.425\textwidth}
    \centering
    \includegraphics[width=\linewidth]{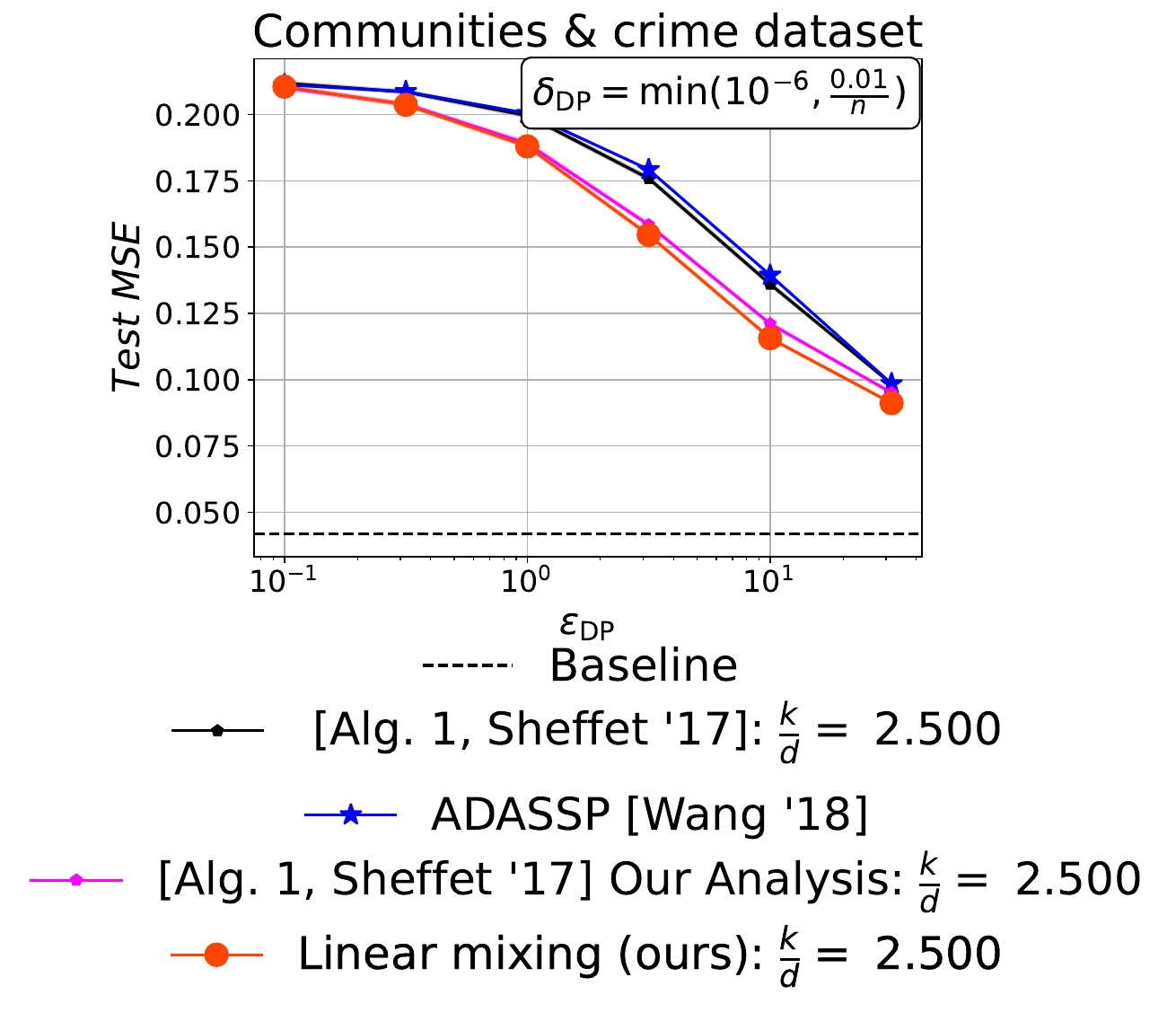}
  \end{subfigure}
  \hspace{0.02\textwidth}
  \begin{subfigure}[t]{0.425\textwidth}
    \centering
    \includegraphics[width=\linewidth]{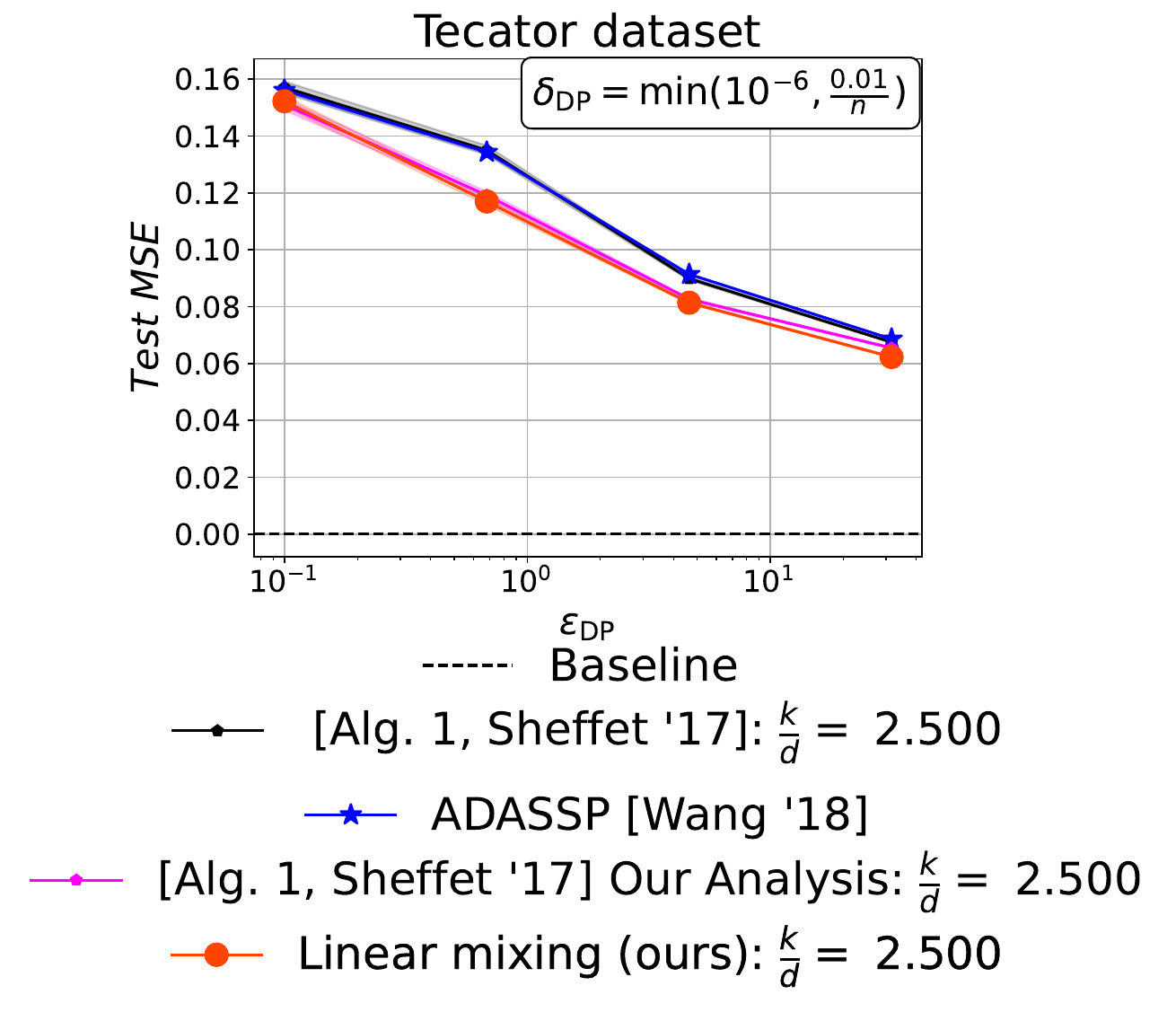}
  \end{subfigure}

  \vspace{0.5em} 
  \begin{subfigure}[t]{0.425\textwidth}
    \centering
    \includegraphics[width=\linewidth]{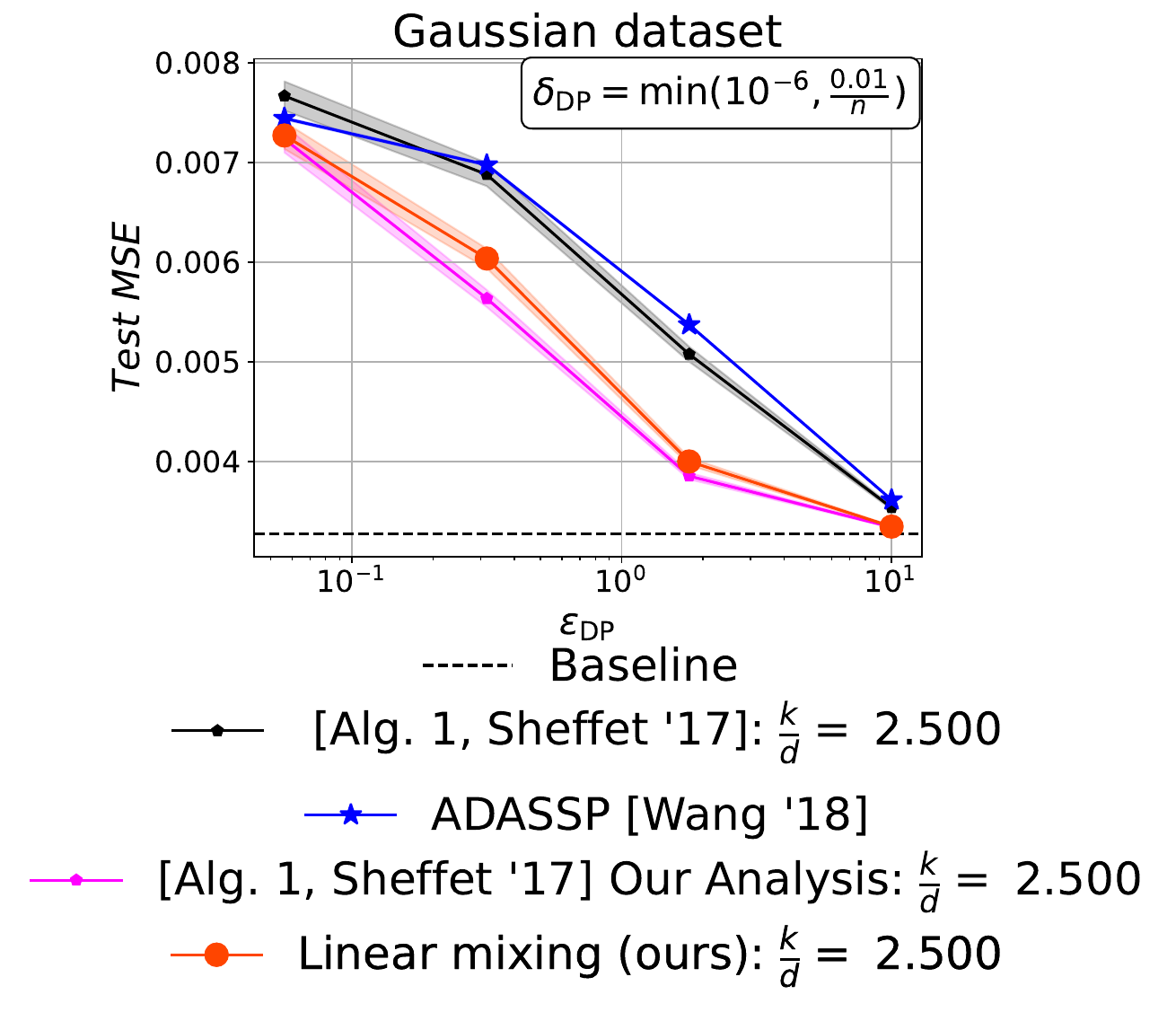}
  \end{subfigure}
  \hspace{0.02\textwidth}
  \begin{subfigure}[t]{0.425\textwidth}
    \centering
    \includegraphics[width=\linewidth]{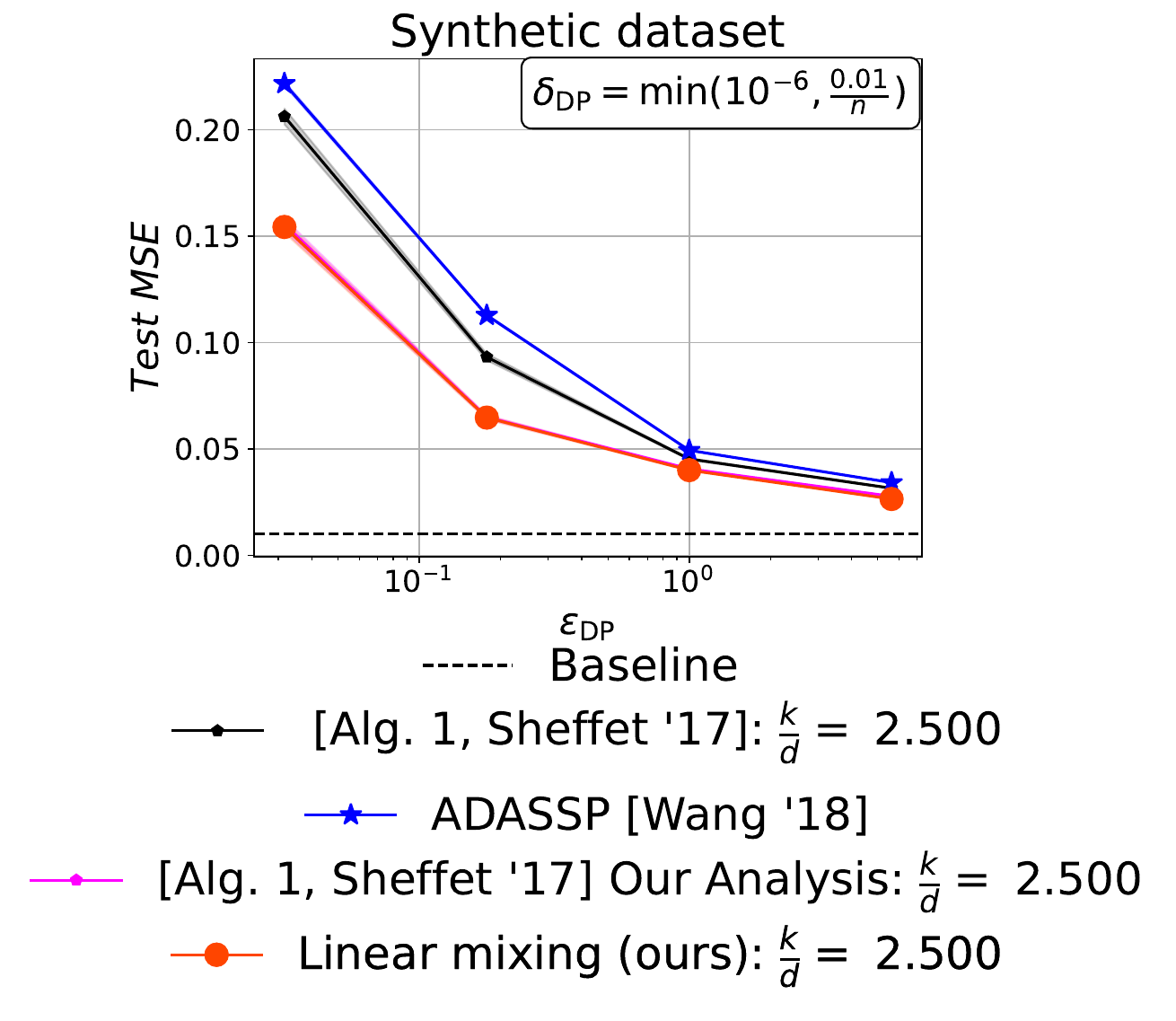}
  \end{subfigure}

  \caption{Linear mixing performance on four different regression datasets. The parameter \(k\) was chosen to ensure negligible approximation error in the sketched solution.}
  \label{fig:linear}
\end{figure}

Our next result provides performance guarantees for the solution returned by \algorithmref{alg:linear_mix}.

\begin{thm}
    \label{thm:linear_reg}
    There exist universal constants \( c_0, c_1, c_2 \) such that for any \( \chi \) satisfying \( k\chi^2 > c_0 d \), the following holds with probability at least \( 1 - c_1 \cdot \exp\left\{-c_2 k\chi^2\right\}\):  
    \begin{align}           
        L_{X,Y}(\theta_{\mathrm{Lin}}) - (1 + \chi)^2 \cdot L_{X,Y}(\theta^*) \leq O\!\left(\textstyle(1 + \chi)^2 \cdot \frac{\sqrt{k\log(\nicefrac{1}{\delta})} \cdot (\textsc{C}^{2}_X + \textsc{C}^{2}_Y)}{\eps} \cdot \left(1 + \norm{\theta^*}^2\right)\right). 
    \end{align} 
\end{thm}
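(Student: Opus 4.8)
The plan is to recognize \texttt{LinearMixing} as a \emph{sketch-and-solve} procedure for a ridge-regularized least-squares problem, and then to combine a Gaussian subspace-embedding guarantee with the privacy calibration of $\gamma$ established in \theoremref{thm:Privacy_First} and \appendixref{app:minimization_exist}. This makes precise the informal identity \eqref{eq:intuition_utility}, in which the additive noise manifests as a $\sigma^2\brI_d$ bias on inner products, i.e.\ exactly a ridge penalty.

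\textbf{Step 1 (reformulation as a ridge sketch).} Write $\beta \coloneqq \widetilde{\eta}\sqrt{\textsc{C}_X^2 + \textsc{C}_Y^2}$ for the effective noise scale, so that $\widetilde{X} = \mathsf{S}X + \beta\xi_1$ and $\widetilde{Y} = \mathsf{S}Y + \beta\xi_2$, with $\xi_1\in\reals^{k\times d}$, $\xi_2\in\reals^{k\times 1}$ the two noise blocks. The key observation is that the horizontal concatenation $\mathsf{S}_{\mathrm{aug}} \coloneqq [\mathsf{S}, \xi_1, \xi_2] \in \reals^{k\times(n+d+1)}$ is itself a matrix of i.i.d.\ standard Gaussian entries. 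Defining the augmented data
\[
\bar{X} \coloneqq \begin{bmatrix} X \\ \beta\brI_d \\ \vec{0}_d^\top \end{bmatrix}, \qquad \bar{Y} \coloneqq \begin{bmatrix} Y \\ \vec{0}_d \\ \beta \end{bmatrix},
\]
the zero-padding forces the cross terms to vanish, giving $\widetilde{X} = \mathsf{S}_{\mathrm{aug}}\bar{X}$ and $\widetilde{Y} = \mathsf{S}_{\mathrm{aug}}\bar{Y}$. Hence $\theta_{\mathrm{Lin}} = (\widetilde X^\top\widetilde X)^{-1}\widetilde X^\top\widetilde Y$ is precisely the minimizer of the \emph{sketched} objective $\|\mathsf{S}_{\mathrm{aug}}(\bar Y - \bar X\theta)\|^2$, while the un-sketched objective factorizes as $\|\bar Y - \bar X\theta\|^2 = L_{X,Y}(\theta) + \beta^2\|\theta\|^2 + \beta^2$, a ridge problem with regularizer $\nu = \beta^2$ whose minimizer is $\bar\theta \coloneqq \theta^*(\beta^2)$ from \eqref{eq:optimal_ridge}.

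\textbf{Step 2 (subspace embedding and sketch-and-solve).} Conditioning on $\beta$ (which depends only on $\sz$ and the data, and is independent of $\mathsf{S}, \xi_1, \xi_2$), the matrix $\mathsf{S}_{\mathrm{aug}}$ remains i.i.d.\ Gaussian, so $\tfrac{1}{\sqrt k}\mathsf{S}_{\mathrm{aug}}$ is a $(1\pm\chi)$ $\ell_2$-subspace embedding of the (at most $(d{+}1)$-dimensional) column span of $[\bar X, \bar Y]$ once $k\chi^2 > c_0 d$, failing with probability at most $c_1\exp(-c_2 k\chi^2)$; this is exactly where the stated dimension condition and failure probability originate. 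On this event the standard sketch-and-solve comparison (using that $\theta_{\mathrm{Lin}}$ minimizes the sketched objective while $\bar\theta$ minimizes the original, and rescaling the distortion by a universal constant) yields $\|\bar Y - \bar X\theta_{\mathrm{Lin}}\|^2 \le (1+\chi)^2\|\bar Y - \bar X\bar\theta\|^2$. Peeling off the augmentation via $L_{X,Y}(\theta_{\mathrm{Lin}}) \le \|\bar Y - \bar X\theta_{\mathrm{Lin}}\|^2$ together with ridge optimality $\|\bar Y - \bar X\bar\theta\|^2 \le L_{X,Y}(\theta^*) + \beta^2(\|\theta^*\|^2 + 1)$ gives
\[
L_{X,Y}(\theta_{\mathrm{Lin}}) \le (1+\chi)^2\, L_{X,Y}(\theta^*) + (1+\chi)^2\,\beta^2\,(1 + \|\theta^*\|^2).
\]
It remains to bound $\beta^2$. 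Since $\widetilde{\lambda} \ge 0$, we always have $\widetilde{\eta}^2 = \max\{\gamma - \widetilde\lambda, 0\} \le \gamma$, so $\beta^2 \le \gamma(\textsc{C}_X^2 + \textsc{C}_Y^2)$ deterministically. Line~\ref{line:find_gamma} selects the smallest $\gamma$ with $\widetilde\eps(\gamma/\sqrt k, \gamma, k, \delta) \le \eps$; by the calibration following \theoremref{thm:Privacy_First} (and \appendixref{app:minimization_exist}) the leading term of $\widetilde\eps$ scales as $\sqrt{k\log(\nicefrac1\delta)}/\gamma$, so this smallest feasible $\gamma$ obeys $\gamma = O(\sqrt{k\log(\nicefrac1\delta)}/\eps)$. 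Substituting gives $\beta^2 = O\big(\sqrt{k\log(\nicefrac1\delta)}\,(\textsc{C}_X^2 + \textsc{C}_Y^2)/\eps\big)$, which plugged into the displayed bound produces exactly the claimed inequality.

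\textbf{Main obstacle.} The delicate step is Step~2: packaging the two \emph{independent} noise blocks into a single Gaussian sketch of a ridge-augmented system (so the embedding machinery applies verbatim) and verifying that the relevant subspace has dimension $d{+}1$, so that the $k\chi^2 > c_0 d$ threshold and the $\exp(-c_2 k\chi^2)$ failure rate come out as stated; one also must check $\widetilde X$ retains full column rank on the good event so that $\theta_{\mathrm{Lin}}$ is well defined. The privacy-to-$\gamma$ conversion is comparatively routine given the earlier analysis, though it requires confirming that the minimal feasible $\gamma$ indeed scales as $\sqrt{k\log(\nicefrac1\delta)}/\eps$ rather than being inflated by the lower-order $k/\gamma^2$ term in $\widetilde\eps$.
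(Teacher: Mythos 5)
Your proposal is correct and follows essentially the same route as the paper's own proof: the identical reformulation of \(\theta_{\mathrm{Lin}}\) as the minimizer of a Gaussian-sketched, ridge-augmented least-squares problem (the paper's \eqref{eq:joint_sketch_interpret}), the same sketching guarantee with threshold \(k\chi^{2} \gtrsim d\) and failure probability \(c_{1}\exp(-c_{2}k\chi^{2})\) (the paper invokes \citet[Corollary~2]{pilanci2015randomized} where you invoke the Gaussian subspace-embedding argument underlying it, modulo the constant-rescaling of the \((1+\chi)^2/(1-\chi)^2\) distortion that you flag), the same deterministic bound \(\beta^{2}\leq\gamma(\textsc{C}_{X}^{2}+\textsc{C}_{Y}^{2})\), and the same calibration \(\gamma = O(\sqrt{k\log(\nicefrac{1}{\delta})}/\eps)\) from Line~\ref{line:find_gamma}. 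The only substantive simplification is that where the paper cites \citet[Appendix~B.2]{wang_adassp} to compare the ridge-optimal loss to \(L_{X,Y}(\theta^{*})\), you obtain the same bound self-containedly by evaluating the ridge objective at \(\theta^{*}\), which is equally valid.
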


We give the proof of the above theorem in Appendix~\ref{app:proof_linear}.
Notably, in the regime where \(\textsc{C}_Y \approx \textsc{C}_X, \norm{\theta^{*}} \gg 1 \) and \(d \gg 1\), for target probability greater than $1 - \varrho$ and with \(\varrho < c_1 \cdot \exp(-c_0 c_2 d)\), setting $\chi = (e_1d)^{-\nicefrac{1}{2}}$ and $k = (\nicefrac{e_1 d}{c_2})\cdot\log\left(\nicefrac{c_1}{\varrho}\right)$ for some constant $e_1 > 1$, yields excess empirical risk of $O\Big(\tfrac{\sqrt{d\log(\nicefrac{1}{\varrho})\cdot \log(\nicefrac{1}{\delta})}\cdot \textsc{C}^{2}_X}{\eps} \cdot \norm{\theta^{*}}^2\Big)$ matching the guarantees attained by AdaSSP~\citep{wang_adassp} for the case where $\lambda_{\min}(X^{\top}X) = 0$. 

We empirically demonstrate the behavior of \algorithmref{alg:linear_mix} by applying it to four datasets,  plotted in \figureref{fig:linear}.
Two of these datasets are real-world datasets: the Communities \& Crime dataset \citep{redmond2002data}, and the Tecator dataset \citep{Tecator_bib}.
The other two datasets are synthetic datasets: one where the covariates are Gaussian random vectors, and another where the covariates are transformations of Gaussian random vectors by a multi-layer perceptron.
We elaborate on the complete empirical setup in \appendixref{app:details} and present additional simulations for another four datasets in \appendixref{s:lin_reg_additional}.
We compare the performance of \algorithmref{alg:linear_mix} to three baseline methods: Algorithm 1 in \cite{sheffet2017differentially} and its variant guided by our analysis of \ref{eq:mechanism_intro} (\lemmaref{lem:Privacy_First}), and the AdaSSP algorithm \citep{wang_adassp}.
The AdaSSP algorithm is considered the leading baseline for DP linear regression under bounded domain assumptions \citep{liu2022differential, brown2024private}.
Recent works (for e.g., \citep{ferrando2024private}) have proposed empirical enhancements over AdaSSP at a significantly higher computational cost and without sound theoretical utility guarantees.
In comparison, \algorithmref{alg:linear_mix} takes just as long as AdaSSP while achieving better solutions for the same level of privacy.
This is seen in \figureref{fig:linear} where our method outperforms the baselines on the datasets we consider.
Interestingly, we see that the modification to the algorithm in \cite{sheffet2017differentially} based on our analysis in \lemmaref{lem:Privacy_First} for \ref{eq:mechanism_intro} causes an improvement in its performance, thereby highlighting the benefits of our analysis.

\subsection{Differentially Private Logistic Regression}
\label{s:logistic}

\begin{figure}[t]
  \centering
  \begin{subfigure}[b]{0.4\linewidth}
    \includegraphics[width=\linewidth]{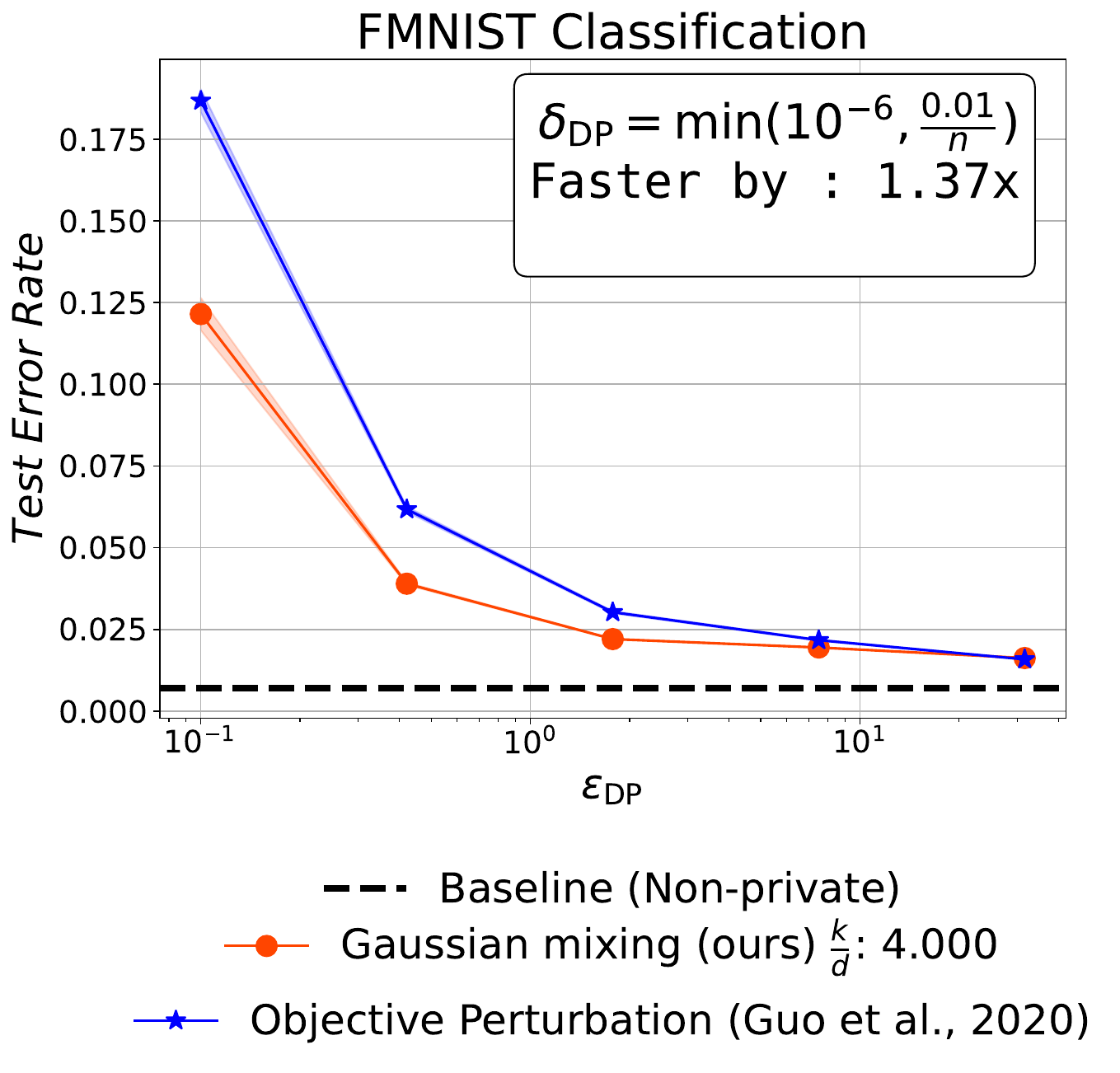}
  \end{subfigure}
  \hfill
  \begin{subfigure}[b]{0.4\linewidth}
    \includegraphics[width=\linewidth]{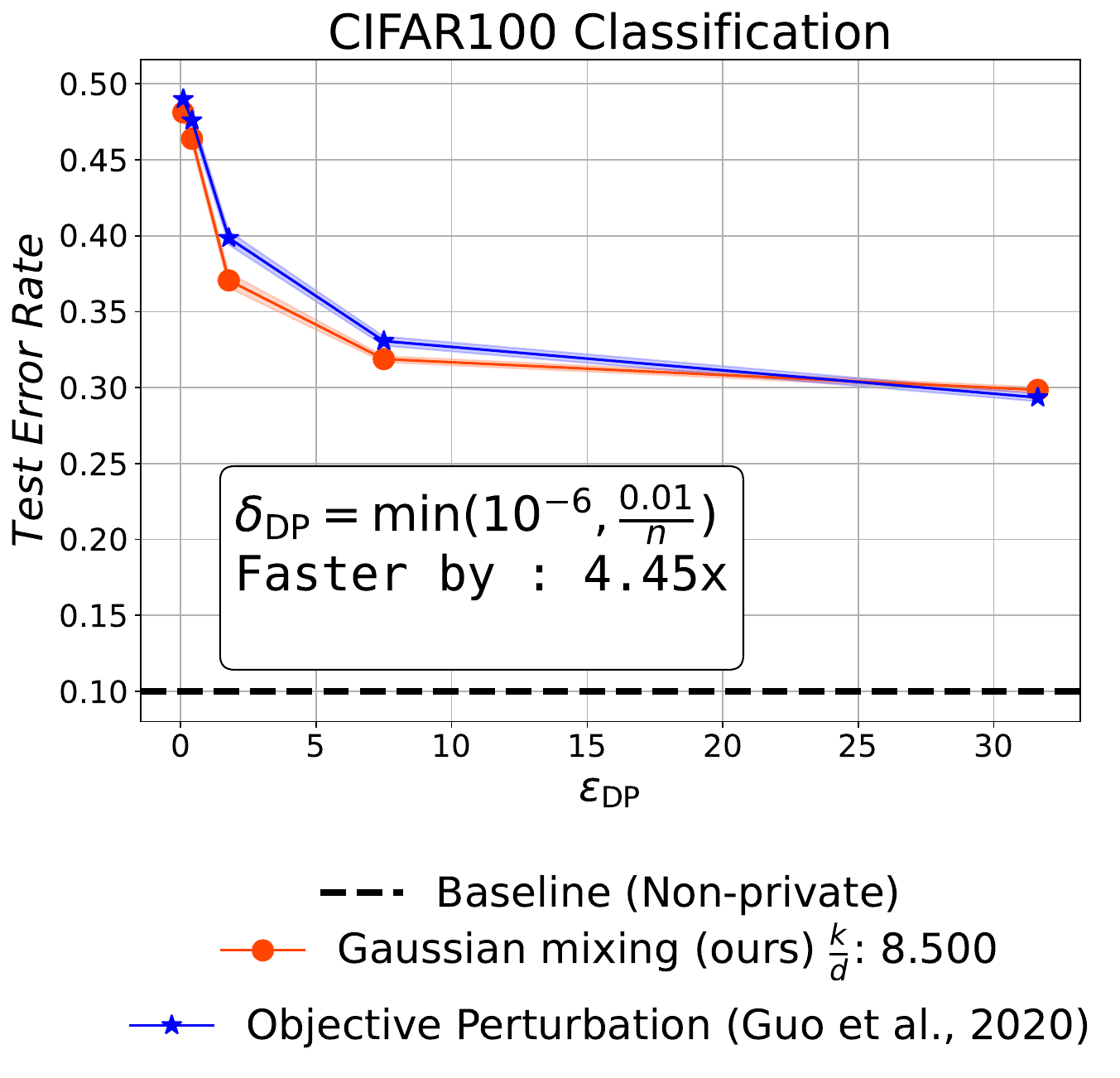}
  \end{subfigure}
  
  \caption{DP logistic regression using a privately trained CNN feature extractor on binary subsets of Fashion-MNIST and CIFAR100. The parameter $k$ was selected via grid search to maximize performance at the largest simulated $\varepsilon$.}
  \label{fig:logistic_test}
\end{figure}

Our second application consider the problem of DP logistic regression, with the goal of developing DP solutions to the ERM problem:
\begin{align}
    \theta^{*} = \underset{\theta}{\argmin} \ \sum^{n}_{i=1}-\log\left(1 + \exp\left\{-y_i \theta^{\top}x_i\right\}\right) := \underset{\theta}{\argmin} \ \sum^{n}_{i=1}\ell(s_i, y_i)
\end{align}
and where $y_i\in\{-1, +1\}$ for all $i\in[n]$. Following \citep{huggins2017pass} (see also \citep{ferrando2024private}), this problem can be solved by approximating
$\ell(s,y)$ with a second-order polynomial
$
    q(s) = b_0 + b_1s + b_2s^{2},$ i.e. $-\log\left(1+\exp\{-s\}\right) \approx q(s),  \text{for } s\in\mathcal{I},
$
where $\mathcal{I}$ satisfies \(y \theta^{\top} x \in  \mathcal{I}\) for all datapoints \((x, y)\), ensuring the surrogate is valid over the dataset.
Substituting $s_i=y_i\,\theta^\top x_i$ and discarding the constant
$b_0$, minimising the surrogate objective reduces to ordinary least
squares with a response vector $\widetilde{Y}\;=\;-\frac{b_1}{2b_2}\,Y,$ with $  Y=(y_1,\dots,y_n)^\top.$
Therefore, we can invoke \algorithmref{alg:linear_mix}, originally
designed for linear regression, to obtain a \((\eps,\delta)\)-DP
estimate for logistic regression. The utility of this solution follows via similar arguments to those of \theoremref{thm:linear_reg} and is presented in \appendixref{app:logistic}. However, in this case, the complexity of sketching the data and then solving the linear system (for example, using a QR decomposition) will be $O(nkd + kd^2)$. In many cases, this one-shot approach is more computationally efficient than classical approaches that use iterative optimization techniques. 

To demonstrate this point, we have tested our approach in a similar setting to that presented by \citet{GuoCertifiedDataRemoval}, where we (i) train a CNN with DP stochastic gredient descent (DP-SGD) implemented using Opacus \citep{yousefpour2021opacus} and (ii) use the pre-trained private embeddings for DP fine-tuning of a logistic head. The CNN architecture and training hyperparameters mirror those in \citep{GuoCertifiedDataRemoval} (see \appendixref{app:details} for full details). We train the head either with objective perturbation as in \citep{GuoCertifiedDataRemoval} or with our approach. As demonstrated in \figureref{fig:logistic_test}, our approach offers computational improvement over the objective perturbation baseline, and also accuracy improvement.


\section{Discussion and Conclusion} 
In this work, we revisited the Gaussian mixing mechanism originally introduced by \citet{blocki2012johnson}, and later studied by \citet{sheffet2017differentially, sheffet2019old} and \citet{showkatbakhsh2018privacy}. We derived its \RDP curve, which yields tighter bounds on the relationship between the noise parameter \(\sigma\) and the the minimal eigenvalue $\lambda_{\min}(X^{\top}X)$ and \((\eps,\delta)\), thereby strengthening the privacy analysis of this mechanism. We further demonstrated the practical usefulness of this improved analysis by applying \ref{eq:mechanism_intro} to two distinct machine learning tasks and providing: an algorithm for DP (i)  linear regression, (ii) logistic regression. 

The analysis we provide in this work is tight and improves over the analysis presented by \citet{sheffet2019old}. Consequently, it also offers performance improvement in other settings currently invoking the results of \citet{sheffet2019old}, such as \citet{bartan2023distributed}, and also in similar settings that relies on \ref{eq:mechanism_intro} and analyze it via alternative privacy metrics, such as \citet{Coded_Yona, CodedFederated_MIDP}. 

Moreover, a key technical property that underpins the usefulness of \ref{eq:mechanism_intro} is its compatibility with formulations in which terms involving the random projection matrix $\mathsf{S}$ cancel in expectation, as described intuitively in \eqref{eq:intuition_utility}. Identifying additional formulations of machine learning problems that naturally admit this structure and whose implementation with \ref{eq:mechanism_intro} facilitates an improved privacy–utility trade-off, along with lower computational complexity, is an interesting direction for future work. This opens the possibility of applying \ref{eq:mechanism_intro} beyond the currently studied linear regression settings. 

Finally, this work focuses on the case where $\mathsf{S}$ is Gaussian. Extending the framework to other classes of projections, such as those studied in the broader sketching literature \citep{woodruff2014sketching, pilanci2015randomized, pilanci_hessiansketch}, could enable more efficient implementations of \eqref{eq:grad_based_mechanism} and facilitate scalable private learning. 

\section*{Acknowledgements} 
AS thanks Gautam Kamath for helpful discussions. The work of MS and KL was supported in part by ERC grant 101125913, Simons Foundation Collaboration 733792, Israel Science Foundation (ISF) grant 2861/20, Apple, and a grant from the Israeli Council of Higher Education. Views and opinions expressed are however those of the author(s) only and do not necessarily reflect those of the European Union or the European Research Council Executive Agency. Neither the European Union nor the granting authority can be held responsible for them.


\newpage
\bibliographystyle{plainnat} 

\appendix

\newpage 
  
\section{Notation}
\label{app:notation}
Given a matrix $A\in \reals^{m\times n}$ we denote its elements by $A_{ij}$ and its column-stack representation by 
\begin{align}
    \text{vec}(A) \coloneqq (A_{11},A_{21},\ldots, A_{m1},A_{12},A_{22},\ldots,A_{mn})^{\top}.    
\end{align}
Random variables are denoted using sans-serif fonts (e.g., \(\mathsf{X}, \mathsf{y}\)), while their realizations are represented by regular italics (e.g., \(X, y\)). The $L_2$ norm of a vector $A \coloneqq (a_1,\ldots, a_d)$ is given by $\left(\sum^{d}_{i=1}a^2_i\right)^{\frac{1}{2}}$ and is denoted by $\norm{A}$. We denote the minimal and the maximal eigenvalues of a matrix $A$ by $\lambda_{\min}(A)$ and $\lambda_{\max}(A)$. We denote a PSD matrix $A$ by $A \succeq 0$ and a PD matrix by $A\succ0$. We usually denote our dataset $\left\{x_i\right\}^{n}_{i=1}$ where each $x_i \in \reals^{d}$ in the matrix form $X = (x_1,\ldots, x_n)^{\top}$. Then, we denote the $j$'th entry of $x_i$ by $x_i(j)$. The determinant of a matrix $A$ is denoted by $\deter{A}$. The set of integer numbers from $1$ to $n$ is denoted by $[n]$. The all zeros column vector of size $d$ is denoted by $\vec{0}_d \coloneqq (0,\ldots, 0)^{\top}$. We denote by $\pN(0, \brI_{k_1\times k_2})$ a $k_1\times k_2$ matrix comprised of \iid \ Gaussian elements with zero mean and unit variance. We denote by $\pN_{\text{sym}}(0, \brI_{d})$ a $d\times d$ symmetric matrix whose elements on the upper triangular matrix are \iid \ and distributed accoring to $\pN(0,1)$. The Kronecker product between matrices $A$ and $B$ is defined via 
\begin{align}
    A \otimes B \coloneqq \begin{pmatrix}
    A_{11}B & \ldots & A_{1m}B\\
     & \ddots & \\
     A_{n1}B & \ldots & A_{nm}B
    \end{pmatrix}.
\end{align}
The $k\times k$ identity matrix is denoted by $\brI_{k}$. 
\section{Proof of \lemmaref{lem:Privacy_First}}
\label{app:privacy}
\label{app:}

The proof relies on the next lemma, which establishes the $\alpha$-\Renyi divergence between two multivariate Gaussian distributions. 

\begin{lem}[\citet{gil2013renyi}] 
    Let $\sx_1 \sim \pN(\mu_1,\Sigma_1)$ and $\sx_2 \sim \pN(\mu_2, \Sigma_2)$. Then,
    \begin{align}
        \label{eq:Gaussian_Div}
        \KLDA{\sx_1}{\sx_2} &= \frac{\alpha}{2}(\mu_1 - \mu_2)^{\top} \left(\Sigma_1 + \alpha(\Sigma_2 - \Sigma_1)\right)^{-1}(\mu_1 - \mu_2) \\
        &\qquad \hspace{1in} - \frac{1}{2(\alpha - 1)} \log \left(\frac{\deter{\Sigma_1 + \alpha(\Sigma_2 - \Sigma_1)}}{\left(\deter{\Sigma_1}\right)^{1-\alpha}\left(\deter{\Sigma_2}\right)^{\alpha}}\right)\ \ \ 
    \end{align}
    for all $\alpha$ such that $\alpha \Sigma^{-1}_1 + (1-\alpha)\Sigma^{-1}_2 \succ 0$. 
\end{lem}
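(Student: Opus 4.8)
The plan is to evaluate the defining integral of the $\alpha$-Rényi divergence in closed form by completing the square, which is a routine (if slightly tedious) Gaussian computation. Writing $p_1,p_2$ for the densities of $\pN(\mu_1,\Sigma_1)$ and $\pN(\mu_2,\Sigma_2)$, the definition gives $\KLDA{\sx_1}{\sx_2} = \frac{1}{\alpha-1}\log\int_{\reals^d} p_1(x)^\alpha p_2(x)^{1-\alpha}\,dx$. Substituting the two Gaussian densities, the integrand equals $(2\pi)^{-d/2}\,\deter{\Sigma_1}^{-\alpha/2}\deter{\Sigma_2}^{-(1-\alpha)/2}\exp\big(-\tfrac12 Q(x)\big)$, where $Q(x) = \alpha(x-\mu_1)^\top\Sigma_1^{-1}(x-\mu_1) + (1-\alpha)(x-\mu_2)^\top\Sigma_2^{-1}(x-\mu_2)$. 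So the whole task reduces to integrating one Gaussian kernel and tracking the leftover constants.

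First I would complete the square in $Q$. Setting $A := \alpha\Sigma_1^{-1} + (1-\alpha)\Sigma_2^{-1}$ and $b := \alpha\Sigma_1^{-1}\mu_1 + (1-\alpha)\Sigma_2^{-1}\mu_2$, one obtains $Q(x) = (x-A^{-1}b)^\top A (x-A^{-1}b) + c$ with $c := \alpha\mu_1^\top\Sigma_1^{-1}\mu_1 + (1-\alpha)\mu_2^\top\Sigma_2^{-1}\mu_2 - b^\top A^{-1}b$. The stated hypothesis $\alpha\Sigma_1^{-1} + (1-\alpha)\Sigma_2^{-1}\succ 0$ is exactly the condition $A\succ 0$ needed for convergence, giving $\int_{\reals^d}\exp\big(-\tfrac12(x-A^{-1}b)^\top A(x-A^{-1}b)\big)\,dx = (2\pi)^{d/2}\deter{A}^{-1/2}$. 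Collecting the constants yields $\int p_1^\alpha p_2^{1-\alpha}\,dx = \deter{A}^{-1/2}\deter{\Sigma_1}^{-\alpha/2}\deter{\Sigma_2}^{-(1-\alpha)/2}e^{-c/2}$; taking logarithms and dividing by $\alpha-1$ then splits the divergence into a ``mean term'' from $c$ and a ``determinant term'' from the three determinants.

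The crux is a single matrix identity that collapses both terms into the claimed form. Define $B := \Sigma_1 + \alpha(\Sigma_2-\Sigma_1) = (1-\alpha)\Sigma_1 + \alpha\Sigma_2$; a one-line computation gives $\Sigma_2 A\Sigma_1 = \alpha\Sigma_2 + (1-\alpha)\Sigma_1 = B$, from which both $\deter{A} = \deter{B}/(\deter{\Sigma_1}\deter{\Sigma_2})$ and $A^{-1} = \Sigma_1 B^{-1}\Sigma_2$ follow immediately. The determinant identity, after bookkeeping the exponents on $\deter{\Sigma_1},\deter{\Sigma_2}$, turns the determinant term into exactly $-\frac{1}{2(\alpha-1)}\log\big(\deter{B}/(\deter{\Sigma_1}^{1-\alpha}\deter{\Sigma_2}^{\alpha})\big)$. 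For the mean term I would first note that $c$ is unchanged under shifting $\mu_1,\mu_2$ by a common vector $t$ (the shift sends $b\mapsto b+At$ and all cross-terms cancel), so I may set $\mu_2=0$; the remaining claim $c = \alpha(1-\alpha)(\mu_1-\mu_2)^\top B^{-1}(\mu_1-\mu_2)$ then reduces to $\Sigma_1^{-1} - \alpha\Sigma_1^{-1}A^{-1}\Sigma_1^{-1} = (1-\alpha)B^{-1}$, which follows by factoring $\Sigma_1^{-1}A^{-1}(A-\alpha\Sigma_1^{-1}) = (1-\alpha)\Sigma_1^{-1}A^{-1}\Sigma_2^{-1}$ and substituting $A^{-1} = \Sigma_1 B^{-1}\Sigma_2$. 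Since $-c/(2(\alpha-1)) = \tfrac{\alpha}{2}(\mu_1-\mu_2)^\top B^{-1}(\mu_1-\mu_2)$, this is precisely the mean term in the statement.

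The main obstacle is purely algebraic rather than conceptual: organizing the completion of the square for two distinct, non-commuting covariances, and then spotting the identity $\Sigma_2 A\Sigma_1 = B$, which is what makes the general case collapse to the same clean expression one would get in the scalar case. Everything else — the Gaussian integral, the convergence condition (read off directly from $A\succ 0$), and the exponent bookkeeping — is mechanical.
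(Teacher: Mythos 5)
Your proposal is correct, but there is nothing in the paper to compare it against: the paper does not prove this lemma at all --- it is imported verbatim from \citet{gil2013renyi} and used as a black box in the proof of Lemma~\ref{lem:Privacy_First}. What you have written is a self-contained derivation of the cited result, and it is essentially the classical computation (also the one underlying the reference): reduce $D_{\alpha}$ to the integral $\int p_1^{\alpha}p_2^{1-\alpha}$, complete the square, and clean up with one matrix identity. I checked the steps that carry the weight. With $A=\alpha\Sigma_1^{-1}+(1-\alpha)\Sigma_2^{-1}$ and $B=(1-\alpha)\Sigma_1+\alpha\Sigma_2$, indeed $\Sigma_2 A\Sigma_1=\alpha\Sigma_2+(1-\alpha)\Sigma_1=B$, hence $\deter{A}=\deter{B}/(\deter{\Sigma_1}\deter{\Sigma_2})$ and $A^{-1}=\Sigma_1B^{-1}\Sigma_2$; the exponent bookkeeping then gives exactly the stated determinant term. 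The translation invariance of $c$ holds because the shift sends $b\mapsto b+At$ and the cross terms $2t^{\top}b+t^{\top}At$ cancel against the expansion of $(b+At)^{\top}A^{-1}(b+At)$; and with $\mu_2=0$ your reduction
\begin{align}
    \Sigma_1^{-1}-\alpha\Sigma_1^{-1}A^{-1}\Sigma_1^{-1}
    =\Sigma_1^{-1}A^{-1}\left(A-\alpha\Sigma_1^{-1}\right)
    =(1-\alpha)\,\Sigma_1^{-1}A^{-1}\Sigma_2^{-1}
    =(1-\alpha)\,B^{-1}
\end{align}
is exactly right, giving $-c/(2(\alpha-1))=\tfrac{\alpha}{2}(\mu_1-\mu_2)^{\top}B^{-1}(\mu_1-\mu_2)$. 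The positivity hypothesis $A\succ 0$ is correctly identified as precisely the condition for the Gaussian integral to converge (for $\alpha>1$ it is a genuine restriction; for $\alpha\in(0,1)$ it is automatic). The only caveat worth adding is implicit in the setup: the computation assumes $\Sigma_1,\Sigma_2\succ 0$ so that both densities exist and $A$, $B$ are invertible, which is the regime in which the paper applies the lemma (its covariances are $X^{\top}X+\sigma^2\brI_d$ with $\sigma^2>0$).
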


\begin{proof}[Proof of \lemmaref{lem:Privacy_First}] 
Instead of analyzing \ref{eq:mechanism_intro}, we will analyze the transformed mechanism 
\begin{align}
    \widetilde{\pM}(X) \coloneqq  (\pM(X))^{\top} = X^{\top}\mathsf{S}^{\top} + \sigma \xi^{\top}, 
\end{align}
which, in terms of privacy, is equivalent to $\pM(X)$. We note that $\widetilde{\pM}(X)$ is a matrix of Gaussian random variables, where its columns are \iid \ and each column has a covariance of 
\begin{align}
    \E{(X^{\top}\mathsf{s}_{i} + \sigma \xi_i)(X^{\top}\mathsf{s}_{i} + \sigma \xi_i)^{\top}} =  X^{\top}X + \sigma^2 \brI_d = \sum^{n}_{i=1}x_{i}x^{\top}_{i} + \sigma^2 \brI_d
\end{align}
where we have denoted $\mathsf{S}^{\top} = (\mathsf{s}_1,\ldots, \mathsf{s}_k)$ and $\xi^{\top} = (\xi_1,\ldots, \xi_k)$. 
Thus, we first note that
\begin{align}
    \text{vec}(\widetilde{\pM}(X)) \sim \pN(0, \brI_{k}\otimes ( X^{\top}X + \sigma^2 \brI_{d})).
\end{align}

Let $X'$ be our neighbor dataset that is different from $X$ in a single row. Throughout we assume that $X'$ is equivalent to $X$ except for one row which is zeroed out (see \sectionref{s:prelims}). We then show that the proof also covers the inverse case where one row of $X$ is zeroed out.

Without loss of generality, assume that the differing row is the first row of $X$. Thus, we first note that
\begin{align}
    \brI_k\otimes( X^{\top}X + \sigma^2 \brI_d) -  \brI_k\otimes( X'^{\top}X' + \sigma^2 \brI_d)= \brI_{k}\otimes ( x_1x^{\top}_1). 
\end{align}

Let $\Sigma_1 \coloneqq  \brI_k\otimes( X^{\top}X + \sigma^2 \brI_d)$ and $\Sigma_2 \coloneqq  \brI_k\otimes( X'^{\top}X' + \sigma^2 \brI_d)$. Now,  
\begin{align} 
    \Sigma_1 + \alpha(\Sigma_2 - \Sigma_1) = \brI_k \otimes \left(-\alpha  x_1x^{\top}_1 +  X^{\top}X + \sigma^2 \brI_d\right)
\end{align}
and since $\E{\text{vec}(\widetilde{\pM}(X))} = \E{\text{vec}(\widetilde{\pM}(X'))} = 0$ by using the algebraic identity $\deter{\brI_k \otimes A} = (\deter{A})^{k}$ and by using \eqref{eq:Gaussian_Div} we get 
\begin{align}
    \label{eq:Renyi_Bound_mid}
    \KLDA{\pM(X)}{\pM(X')} &= -\frac{k}{2(\alpha - 1)}\log\left(\frac{\deter{-\alpha  x_1x^{\top}_1 +  X^{\top}X + \sigma^2 \brI_d}}{(\deter{ X^{\top}X + \sigma^2 \brI_d})^{1-\alpha}(\deter{ X^{\top}X + \sigma^2 \brI_d -  x_1x_1^{\top}})^{\alpha}}\right)\ .
\end{align}

For an invertible matrix $A$, we have \citep[Section.~3.4]{Brookes2020Matrix} 
\begin{align}
    \deter{A + uv^{\top}} = \deter{A}(1 + v^{\top}A^{-1}u).
\end{align}

Since the matrix $ X^{\top}X + \sigma^2 \brI_d$ is invertible whenever $\sigma^2 > 0$, this further tells us that the denominator of \eqref{eq:Renyi_Bound_mid} can be simplified to 
\begin{align}
    & \left(\deter{ X^{\top}X + \sigma^2 \brI_d}\right)^{1-\alpha}\left(\deter{ X^{\top}X + \sigma^2 \brI_d -  x_1x_1^{\top}}\right)^{\alpha} \\
    &\qquad\qquad\qquad = \left(\deter{ X^{\top}X + \sigma^2 \brI_d}\right)^{1-\alpha}\left(\deter{ X^{\top}X + \sigma^2 \brI_d}\right)^{\alpha}\left(1 -  x^{\top}_1(X^{\top}X + \sigma^2 \brI_d)^{-1}x_1\right)^{\alpha}\\
    &\qquad\qquad\qquad = \deter{ X^{\top}X + \sigma^2 \brI_d} \left(1 -  x^{\top}_1(X^{\top}X + \sigma^2 \brI_d)^{-1}x_1\right)^{\alpha}.
\end{align}

Thus, we further have 
\begin{align}
    &\frac{\deter{-\alpha x_1x^{\top}_1 +  X^{\top}X + \sigma^2 \brI_d}}{(\deter{ X^{\top}X + \sigma^2 \brI_d})^{1-\alpha}(\deter{ X^{\top}X + \sigma^2 \brI_d -  x_1x_1^{\top}})^{\alpha}} \\
    &\qquad = \left(\frac{\deter{-\alpha x_1x^{\top}_1 +  X^{\top}X + \sigma^2 \brI_d}}{\deter{ X^{\top}X + \sigma^2 \brI_d}}\right)\cdot(1 -  x^{\top}_1( X^{\top}X + \sigma^2 \brI_d)^{-1}x_1)^{-\alpha}.
\end{align}
Similarly, we can apply the same determinant identity to $\deter{-\alpha x_1x^{\top}_1 +  X^{\top}X + \sigma^2 \brI_d}$ and get 
\begin{align}
    \frac{\deter{-\alpha x_1x^{\top}_1 +  X^{\top}X + \sigma^2 \brI_d}}{\deter{ X^{\top}X + \sigma^2 \brI_d}} = 1 - \alpha x^{\top}_1 ( X^{\top}X + \sigma^2 \brI_d)^{-1}x_1.
\end{align}

Next, we note that this yields the next simplified form for $\KLDA{\pM(X)}{\pM(X')}$:
\begin{align}
    \label{eq:KLDA_Base_Mechanism_Equality}
    &\KLDA{\pM(X)}{\pM(X')} \\
    &\qquad= -\frac{k}{2(\alpha-1)}\log\left((1 - \alpha  x^{\top}_1 ( X^{\top}X + \sigma^2 \brI_d)^{-1}x_1)(1 -  x^{\top}_1 ( X^{\top}X + \sigma^2 \brI_d)^{-1}x_1)^{-\alpha}\right)\\
    \label{eq:KLDA_Base_Mechanism_Equality_simplified}
    &\qquad= \frac{k}{2(\alpha-1)}\log\left(\frac{(1 -  x^{\top}_1 ( X^{\top}X + \sigma^2 \brI_d)^{-1}x_1)^{\alpha}}{1 - \alpha x^{\top}_1 ( X^{\top}X + \sigma^2 \brI_d)^{-1}x_1}\right).
\end{align}
We note that the function $\frac{(1-t)^{\alpha}}{1 - \alpha t}$ is a monotonically non-decreasing function of $t$ in the range $0\leq t < \frac{1}{\alpha}$ for $\alpha > 1$. To see this, note that 
\begin{align}
    \frac{\partial}{\partial t} \log\left(\frac{(1-t)^{\alpha}}{1 - \alpha t}\right) &= \frac{\partial}{\partial t} \left\{\alpha \log(1 - t) - \log(1 - \alpha t) \right\} = -\frac{\alpha}{1 - t} + \frac{\alpha}{1 - \alpha t} = \frac{\alpha (\alpha - 1)t}{(1-t)(1-\alpha t)} 
\end{align}
which is positive in the range $0 \leq t < \frac{1}{\alpha}$ (recall that $\alpha > 1$). Thus, to further simplify \eqref{eq:KLDA_Base_Mechanism_Equality_simplified}, we will try to find an upper bound on $x^{\top}_1(X^{\top}X + \sigma^2 \brI_d)^{-1}x_1$. To that end, note that for a general symmetric positive-definite matrix $A$ we have 
\begin{align}
    \label{eq:Op_Norm_Bound}
    x^{\top}A^{-1}x \leq \frac{\norm{x}^2}{\lambda_{\min}(A)}
\end{align}
where equality is achieved whenever $x$ is the eigenvector of $A$ that correspond to $\lambda_{\min}(A)$. 
Then, using this relation with regard to $ X^{\top}X + \sigma^2 \brI_d\succ 0$ and using the identity
\begin{align}
    \label{eq:LowerBound_Op_Norm} 
    \lambda_{\min}( X^{\top}X + \sigma^2 \brI_d) =  \lambda_{\min}(X^{\top}X) + \sigma^2
\end{align}
yields
\begin{align}
    x^{\top}_i( X^{\top}X + \sigma^2 \brI_d)^{-1}x_i \leq \frac{\norm{x_i}^2}{\lambda_{\min}(X^{\top}X) + \sigma^2}, \ \ \ \ \text{for all} \ \ i=1,\ldots, n. 
\end{align}
Since we know that $\underset{i \in [n]}{\max} \ \norm{x_i}^2 \leq \textsc{C}^{2}_X$ we further have
\begin{align}
     x^{\top}_i( X^{\top}X + \sigma^2 \brI_d)^{-1}x_i \leq \frac{\textsc{C}^{2}_X}{\lambda_{\min}(X^{\top}X) + \sigma^2}.
\end{align}

This further leads to the next final upper bound on the \Renyi divergence: 
\begin{subequations}
\begin{align}
    \label{eq:FinalBound_Matrix_Renyi}
    &\KLDA{\pM(X)}{\pM(X')} \\
    &\qquad = \frac{k}{2(\alpha-1)}\log\left(\frac{(1 -  x^{\top}_1 ( X^{\top}X + \sigma^2 \brI_d)^{-1}x_1)^{\alpha}}{1 - \alpha  x^{\top}_1 ( X^{\top}X + \sigma^2 \brI_d)^{-1}x_1}\right) \\
    \label{eq:FinalBound_Matrix_Renyi_mid_step}
    &\qquad\leq \frac{k}{2(\alpha - 1)}\log\left(\frac{(1 - \frac{\norm{x_1}^2}{\lambda_{\min}(X^{\top}X)+\sigma^2})^{\alpha}}{1 - \frac{\alpha \norm{x_1}^2}{ \lambda_{\min}(X^{\top}X) + \sigma^2}}\right)\\
    &\qquad= \frac{k\alpha}{2(\alpha -1)}\log\left(1 - \frac{\norm{x_1}^2}{\lambda_{\min}(X^{\top}X) + \sigma^2}\right) - \frac{k}{2(\alpha - 1)}\log\left(1 - \frac{\alpha \norm{x_1}^2}{\lambda_{\min}(X^{\top}X) + \sigma^2}\right), 
\end{align}
\end{subequations}
where \eqref{eq:FinalBound_Matrix_Renyi_mid_step} requires that $\alpha < \frac{\lambda_{\min}(X^{\top}X) + \sigma^2}{\norm{x_1}^2}$. Then, since a similar analysis holds when we replace $x_1$ with a general point $x_i$, the worst case divergence between $X$ and an $X'_i$ that is changed by zeroing out one entry $x_i$ is 
\begin{subequations}
\begin{align}
    &\underset{i\in [n]}{\sup} \ \KLDA{\pM(X)}{\pM\left(X'_i\right)} \\
    &\quad \leq \underset{i\in[n]}{\sup} \ \left\{\frac{k\alpha}{2(\alpha -1)}\log\left(1 - \frac{\norm{x_i}^2}{\lambda_{\min}(X^{\top}X) + \sigma^2}\right) - \frac{k}{2(\alpha - 1)}\log\left(1 - \frac{\alpha  \norm{x_i}^2}{\lambda_{\min}(X^{\top}X) + \sigma^2}\right)\right\}\\
    \label{eq:FinalBound_Matrix_Renyi_closedform}
    &\quad\leq \frac{k\alpha}{2(\alpha -1)}\log\left(1 - \frac{\textsc{C}^{2}_X}{\lambda_{\min}(X^{\top}X) + \sigma^2}\right) - \frac{k}{2(\alpha - 1)}\log\left(1 - \frac{\alpha \textsc{C}^{2}_X}{\lambda_{\min}(X^{\top}X) + \sigma^2}\right)\\
    \label{eq:Renyi_final_bound_final}
    &\quad\leq \frac{k\alpha}{2(\alpha -1)}\log\left(1 - \frac{\textsc{C}^{2}_X}{\overline{\lambda}_{\min} + \sigma^2}\right) - \frac{k}{2(\alpha - 1)}\log\left(1 - \frac{\alpha \textsc{C}^{2}_X}{\overline{\lambda}_{\min} + \sigma^2}\right), 
\end{align}
\end{subequations}
where \eqref{eq:Renyi_final_bound_final} is again by the monotonicity of $\frac{(1-t)^{\alpha}}{1 - \alpha t}$ and since $\lambda_{\min}(X^{\top}X) \geq \overline{\lambda}_{\min}$, where $\alpha \leq \underset{i}{\min} \ \left\{\frac{\overline{\lambda}_{\min} + \sigma^2}{\norm{x_i}^2}\right\} = \frac{\overline{\lambda}_{\min} + \sigma^2}{\textsc{C}^{2}_X}$ and the bound holds whenever $\overline{\lambda}_{\min} + \sigma^2 > \textsc{C}^{2}_X$. Finally, note that since $\alpha - 1 > 0$ for all $\alpha > 1$ and since $\alpha \log\left(1 - \frac{\textsc{C}^{2}_X}{\overline{\lambda}_{\min} + \sigma^2}\right) - \log\left(1 - \frac{\alpha \textsc{C}^{2}_X}{\overline{\lambda}_{\min} + \sigma^2}\right) \geq 0$ for all $\alpha > 1$ (this follows since the function is $0$ for $\alpha = 1$ and since its derivative is positive) this upper bound is non-negative and is a valid upper bound on this divergence. 

For the case where one row of $X$ is zeroed out, we note that we have $X'^{\top}X' + \sigma^2\brI_d = X^{\top}X + \sigma^2\brI_d + x_ix^{\top}_i$. Then, \eqref{eq:KLDA_Base_Mechanism_Equality_simplified} is replaced with
\begin{align}
    \KLDA{\pM(X)}{\pM(X')} = \frac{k}{2(\alpha-1)}\log\left(\frac{(1 +  x^{\top}_1 ( X^{\top}X + \sigma^2 \brI_d)^{-1}x_1)^{\alpha}}{1 + \alpha x^{\top}_1 ( X^{\top}X + \sigma^2 \brI_d)^{-1}x_1}\right).
\end{align}
Now, we define the function $f(t;\alpha) = \log\left(\frac{(1-t)^{\alpha}}{1-\alpha t}\right) - \log\left(\frac{(1+t)^{\alpha}}{1+\alpha t}\right)$. Then, note that $f(0;\alpha) = 0$ and further since $\alpha > 1$ and $\alpha t < 1$ then   
\begin{align}
    \frac{\partial}{\partial t}f(t;\alpha) = 2\alpha \left(\frac{1}{1 - (\alpha t)^2} - \frac{1}{1 - t^2}\right)\geq 0
\end{align}
and thus $f(t;\alpha) \geq 0$ for all $t<\frac{1}{\alpha}$, and we get that the maximum between the two divergences is always given by the case where $X'$ contains a zero row. Thus, by finding the $\sigma^2$ that makes \eqref{eq:Renyi_final_bound_final} equal to $\eps$ we guarantee that our mechanism is $(\alpha,\eps)$-\RenyiDP.

It remains to validate that the condition $\alpha \Sigma^{-1}_1 + (1-\alpha)\Sigma^{-1}_2 \succ 0$ holds. However, since throughout we have $\Sigma_2 = \Sigma_1 - x_ix^{\top}_i$ with $\Sigma_1 = X^{\top}X + \sigma^2\brI_d$, by using the formulas for the inverse of a rank-1 update we get 
\begin{align}
    \alpha \Sigma^{-1}_1 + (1-\alpha)\Sigma^{-1}_2 &= \Sigma^{-1/2}_1\left(\brI_d + (1-\alpha)\cdot \frac{\Sigma^{-1/2}_1x_ix^{\top}_i\Sigma^{-1/2}_1}{1 - x^{\top}_i\Sigma^{-1}_1x_i}\right)\Sigma^{-1/2}_1.
\end{align}
We note that since $\Sigma_1 \succ 0$, for this term to be positive definite it suffices for the middle matrix to be positive definite. However, since this matrix is a rank-1 update of $\brI_d$, its eigenvalues are $1$'s and an additional eigenvalue that is given by 
\begin{align}
    1 + (1-\alpha)\cdot \frac{\norm{\Sigma^{-1/2}_{1}x_i}^2}{1 - x^{\top}_i\Sigma^{-1}_1x_i} = 1 + (1-\alpha)\cdot \frac{x^{\top}_i\Sigma^{-1}_1x_i}{1 - x^{\top}_i\Sigma^{-1}_1x_i}.
\end{align}
We note that this term is positive whenever $\alpha \leq \frac{1}{x^{\top}_i\Sigma^{-1}_1x_i}$. However, since $x^{\top}_i\Sigma^{-1}_1x_i \leq \frac{\norm{x_i}^2}{\overline{\lambda}_{\min} + \sigma^2}$ this inequality is satisfied by the restrictions we have on the domain of $\alpha$.   

\end{proof}

\section{Proof of \corollaryref{corol:tcdp_proof}}
\label{app:proof-tcdp}

\begin{proof}
We start by defining the difference function
\[
\Delta(k,\alpha,\gamma)
=\frac{k\alpha}{2\gamma^{2}}
-\frac{k\alpha}{2(\alpha-1)}\log\left(1-\frac{1}{\gamma}\right)
+\frac{k}{2(\alpha-1)}\log\left(1-\frac{\alpha}{\gamma}\right).
\]
Our goal is to find when $\Delta(k, \alpha, \gamma) \geq 0$ for $1 < \alpha < \gamma$. 
We note multiplying by the positive factor \(2\gamma^{2}(\alpha-1)\) and cancelling the term \(k>0\) gives the equivalent condition
\[
G(\alpha)
\coloneqq\alpha(\alpha-1)
-\alpha \gamma^2\log\left(1-\frac{1}{\gamma}\right)
+\gamma^2\log\left(1-\frac{\alpha}{\gamma}\right)
\ge 0,\qquad 1<\alpha<w,
\]
where \(w\) will be specified shortly. On $\alpha = 1$ we further get $G(1) = 0$. Moreover,
\[
G'(\alpha)
=2\alpha-1
-\gamma^2\log\left(1-\frac{1}{\gamma}\right)
-\frac{\gamma^2}{\gamma-\alpha},
\]
and multiplying by \((\gamma - \alpha)>0\) (recall that $\alpha < \gamma$) shows \(G'(\alpha)\) has the same sign as the quadratic
\begin{align}
    H(\alpha) &\coloneqq(\gamma-\alpha)G'(\alpha) \\
    &= (2\alpha - 1)(\gamma - \alpha) - \gamma^2(\gamma-\alpha)\log\left(1 - \frac{1}{\gamma}\right) - \gamma^2\\
    &=-2\alpha^{2} +\left(1+2\gamma+\gamma^2\log\left(1-\frac{1}{\gamma}\right)\right)\alpha -\gamma\left(1 + \gamma + \gamma^2\log\left(1-\frac{1}{\gamma}\right)\right).
\end{align}

We define the discriminant to be 
\begin{align}
    \Delta_H = \left(1+2\gamma+\gamma^2\log\left(1-\frac{1}{\gamma}\right)\right)^{2} - 8\gamma\left(1+\gamma+\gamma^2\log\left(1-\frac{1}{\gamma}\right)\right)
\end{align}
which is non-negative. Thus, \(H\) has two real roots
\begin{align}
    \alpha_{\max/\min} = \frac{\gamma^2\log\left(1 - \nicefrac{1}{\gamma}\right) + 2\gamma + 1\pm\sqrt{\Delta_H}}{4},
\end{align}
and \(H(\alpha)\geq 0\) for \(\alpha\in[\alpha_{\min},\alpha_{\max}]\) since the coefficient of the quadratic term $H(\alpha)$ is negative. However, note that $\alpha_{\min} < 1/2$ for all $\gamma > 1$ and moreover $\alpha_{\max} > 1$ for $\gamma > 5/2$ and $\alpha_{\max} < \gamma$ for all $\gamma > 2.5$. Thus, since the derivative is positive and since $G(1) = 0$, setting \(w\coloneqq\alpha_{\max}\) yield \(G'(\alpha)\ge 0\) for every \(1<\alpha<w\) and thus the inequality \(G(\alpha)\ge 0\) holds throughout that interval, whenever $\gamma > 5/2$. The proof is completed since $\alpha_{\max} > \frac{2\gamma}{5}$ for all $\gamma > 1$. 
\end{proof}

\section{Proof of \theoremref{thm:Privacy_First}}
\label{app:privacy_full}
We recall that the sensitivity of the minimum eigenvalue $\lambda_{\min}(X^{\top}X)$ is $\textsc{C}^{2}_X$ (see, for example \citet{sheffet2017differentially, wang_adassp}). Then, by using the standard formula of the Gaussian mechanism \citep[Appendix ~A]{Dwork_AlgFoundationd_DP} we get that $\widetilde{\lambda}$ is $(\sqrt{2\log(3.75/\delta)}/\eta,\delta/3)$ release of $\lambda_{\min}(X^{\top}X)$. Using \lemmaref{lem:Privacy_First} and \propositionref{prop:Renyi_classical_translate}, we note that whenever $\lambda_{\min}(X^{\top}X) + \widetilde{\eta}^2 \geq \gamma$ the release of the output in both cases satisfies $(\widetilde{\eps}, \delta/3)$-DP where
\begin{align}
    \widetilde{\eps} = \underset{1 < \alpha < \gamma}{\min}\ \left\{\varphi(\alpha; k,\gamma) + \frac{\log\left(\nicefrac{3}{\delta}\right) + (\alpha - 1)\log\left(1 - \nicefrac{1}{\alpha}\right) - \log(\alpha)}{\alpha - 1}\right\}.
\end{align}
The first case (when $\gamma \leq \tau$) trivially satisfies this. However, for the second case (whenever $\gamma > \tau$), this is satisfied only if $\widetilde{\eta}^2 + \lambda_{\min}(X^{\top}X) \geq \gamma$, which by using the inequality  
\begin{align}
    \widetilde{\eta}^2 + \lambda_{\min}(X^{\top}X) = \gamma - \lambda_{\min}(X^{\top}X) + \eta\textsc{C}^{2}_X\tau - \eta\textsc{C}^{2}_X\sz + \lambda_{\min}(X^{\top}X) = \gamma + \eta\textsc{C}^{2}_X\tau - \eta\textsc{C}^{2}_X\sz
\end{align}
corresponds to having $\sz \geq \tau$ (we note that the case $\widetilde{\lambda} = 0$ immediately satisfies $\lambda_{\min}(X^{\top}X) + \widetilde{\eta}^2 \geq \gamma$ since then we have $\widetilde{\eta}^2 = \gamma$). 
Thus, 
\begin{align}
    P\left(\widetilde{\eta}^2 + \lambda_{\min}(X^{\top}X) \leq \gamma\right)
    = P\left(\sz \geq \tau\right) \leq \exp\left\{-\frac{\tau^2}{2}\right\} = \frac{\delta}{3}.
\end{align}
Then, using simple composition \cite[Chapter.~3]{Dwork_AlgFoundationd_DP} and substituting $\tau = \sqrt{2\log(\nicefrac{3}{\delta})}$ yields the desired result. 

\section{Minimizing $\widetilde{\eps}(\eta,\gamma, k,\delta)$ }
\label{app:minimization_exist}

We now show that $\widetilde{\eps}(\eta,\gamma,k,\delta) \geq 0$ and further that one can make $\widetilde{\eps}(\eta,\gamma,k,\delta)$ as small as any desirable $\eps$ by increasing $\eta$ and $\gamma$. We first note that $\varphi(\alpha; k,\gamma)$ is an upper bound on $\KLDA{\pM(X)}{\pM(X')}$. Thus, following the validity of the conversion from RDP to DP of \citep{canonne2020discrete}, the second term in $\widetilde{\eps}(\eta,\gamma,k,\delta)$ provides an upper bound on the privacy parameter $\eps$, and thus is non-negative. Since the first term in $\widetilde{\eps}(\eta,\gamma, k,\delta)$ is non-negative we get that the entire expression is non-negative.

To prove that $\widetilde{\eps}(\eta,\gamma,k,\delta)$ can be made arbitrarily small, note that by \corollaryref{corol:tcdp_proof} we know that $\varphi(\alpha; k,\gamma) \leq \frac{k\alpha}{2\gamma^2}$ for $1 < \alpha \leq \frac{2}{5}\gamma$ and provided that $\gamma > \frac{5}{2}$. However, we note that the minimum in $\widetilde{\eps}(\eta,\gamma,k,\delta)$ is upper bounded by 
\begin{align}  
    \label{eq:minimum_gamma}
    \underset{1<\alpha<2\gamma/5}{\min} \ \left\{\frac{k\alpha}{2\gamma^2}     + \frac{\log\left(\nicefrac{3}{\delta}\right)}{\alpha - 1}\right\} \leq \frac{k}{5\gamma} + \frac{\log(\nicefrac{3}{\delta})}{\frac{2}{5}\gamma - 1}
\end{align}
which is derived by substituting $\alpha = 2\gamma/5$. Thus, this minimum is monotonically decreasing in $\gamma$ and can be made arbitrarily small by increasing $\gamma$. The result then follows since the first term in \eqref{eq:eps_tot_final} is monotonically decreasing in $\eta$, and holds further in the case where $\eta = \frac{\gamma}{\sqrt{k}}$ by picking a sufficiently large $\gamma$. 

\section{Proof of \theoremref{thm:linear_reg}}
\label{app:proof_linear}

\begin{proof}    
   We first establish the performance of a method that adds noise with a general level $\sigma$, namely, solutions of the form 
   \begin{align}
       \theta_{\mathrm{Lin}} \coloneqq ((\mathsf{S}X + \sigma \xi_1)^{\top}(\mathsf{S}X + \sigma \xi_1))^{-1}(\mathsf{S}X + \sigma \xi_1)^{\top}(\mathsf{S}Y + \sigma \xi_2)
   \end{align}
    with $\mathsf{S}\sim \pN(0, \brI_{k\times n}), \xi_1 \sim \pN(0, \brI_{k\times d})$ and $\xi_2 \sim \pN(0, \brI_{k})$ and where $Y \in \reals^{n\times 1}$ and $X\in \reals^{n\times d}$. 
    
   Then, we can rewrite $\theta_{\mathrm{Lin}}$ in the next form 

    \begin{align}
        \label{eq:joint_sketch_interpret}
        \theta_{\mathrm{Lin}} = \underset{\theta}{\argmin} \ \norm{(\mathsf{S}, \xi_1, \xi_2)\left(\begin{pmatrix}
            Y \\ \vec{0}_d \\ \sigma
        \end{pmatrix} - \begin{pmatrix}
            X \\ \sigma\brI_d \\ \vec{0}^{\top}_d
        \end{pmatrix}\theta\right)}^2. 
    \end{align}
    Now, since $\text{rank}((X^{\top}, \sigma\brI_d, \vec{0}_d)^{\top}) = d$ and since $\sigma^2 \geq 0$, by \citep[Corrolary.~2]{pilanci2015randomized}, whenever $k > \frac{c_0d}{\chi^2}$ w.p. at least $1 - c_1 \cdot \mathrm{exp}\left\{-c_2 k\chi^2\right\}$ we have 
    \begin{align}
        \label{eq:Pilanci_Sketching}
        L_{X,Y}(\theta_{\mathrm{Lin}}) + \sigma^2\norm{\theta_{\mathrm{Lin}}}^2 + \sigma^2 &\leq (1 + \chi)^{2}\left(\norm{Y - X\theta^{*}\left(\sigma^2\right)}^2 + \sigma^2\norm{\theta^{*}\left(\sigma^2\right)}^2 + \sigma^2\right).
    \end{align}
    We note that this further implies that 
    \begin{align}
        L_{X,Y}(\theta_{\mathrm{Lin}}) \leq (1+\chi)^2\left(\norm{Y - X\theta^{*}(\sigma^2)}^2 + \sigma^2\left(1 + \norm{\theta^{*}}^2\right)\right).
    \end{align}
    Thus, we can write 
    \begin{align}
        L_{X,Y}(\theta_{\mathrm{Lin}}) - (1 + \chi)^2L_{X,Y}(\theta^{*}) &\leq (1+\chi)^2\left(\norm{Y - X\theta^{*}(\sigma^2)}^2 - \norm{Y - X\theta^{*}}^2 + \sigma^2(1+\norm{\theta^{*}}^2)\right)\\
        &= O\left((1+\chi)^2\sigma^2 \left(1 + \norm{\theta^{*}}^2\right)\right) 
    \end{align}
    where the last equality is by \citet[Appendix.~B.2]{wang_adassp}. Now, we note that in both of the cases of the algorithm the magnitude of the added noise is at most $\gamma(\textsc{C}_{X}^{2} + \textsc{C}^2_Y)$, where $\gamma$ is determined by the calculation done in step~\ref{alg:find-gamma}. Thus, since the bound is monotonically increasing in $\sigma^2$ we can further use the upper bound  
    \begin{align}
        \label{eq:empirical_risk_bound_gamma}
        L_{X,Y}(\theta_{\mathrm{Lin}}) - (1 + \chi)^2L_{X,Y}(\theta^{*}) &= O\left((1+\chi)^2\gamma (\textsc{C}^{2}_X + \textsc{C}^{2}_Y)\left(1 + \norm{\theta^{*}}^2\right)\right).
    \end{align}
    
    We further note that 
    \begin{align}
        \label{}
        \eps(\sigma,\gamma,k,\delta) &= \frac{\sqrt{2\log(\nicefrac{3.75}{\delta})}}{\eta} + \underset{1 < \alpha < \gamma}{\min} \ \left\{\varphi(\alpha; k,\gamma) + \frac{\log(\nicefrac{3}{\delta}) + (\alpha - 1)\log(1 - \nicefrac{1}{\alpha}) - \log(\alpha)}{\alpha - 1}\right\}\\
        &\leq \frac{\sqrt{2k\log(\nicefrac{3.75}{\delta})}}{\gamma} + \underset{1 < \alpha < 2\gamma/5}{\min}\left\{\frac{k\alpha}{2\gamma^2} + \frac{\log(\nicefrac{3}{\delta})}{\alpha - 1}\right\}\\
        &= \frac{3\sqrt{2k\log(\nicefrac{3.75}{\delta})}}{\gamma}.
    \end{align}
    Thus, equating this upper bound to $\eps$ suggests further that $\gamma = O\left(\frac{\sqrt{k\log(\nicefrac{1}{\delta})}}{\eps}\right)$ and using this bound in \eqref{eq:empirical_risk_bound_gamma} leads to
    \begin{align}
        \label{eq:additive_bound_one_before_last}
        L_{X,Y}(\theta_{\mathrm{Lin}}) - (1 + \chi)^2L_{X,Y}(\theta^{*}) &= O\left(\left(1 + \chi\right)^2\cdot \frac{\sqrt{k\log(\nicefrac{1}{\delta})}(\textsc{C}^{2}_X + \textsc{C}^{2}_Y)}{\eps}\left(1 + \norm{\theta^{*}}^2\right)\right). 
    \end{align}
    The proof is finished since this holds for any $\chi$ under the constraints in the theorem. 
\end{proof}

We note that substituting $\chi = \sqrt{\frac{1}{e_1d}}$ and $k = \frac{e_1 d}{c_2}\log\left(\frac{c_1}{\varrho}\right)$ into the previous argument yields  
\begin{align}
     L_{X,Y}(\theta_{\mathrm{Lin}}) &- \left(1 + \sqrt{\frac{1}{e_1d}}\right)^2L_{X,Y}(\theta^{*}) \\
    &= O\left(\left(1 + \sqrt{\frac{1}{e_1d}}\right)^2\cdot \frac{\sqrt{d\log\left(\nicefrac{1}{\varrho}\right)\log(\nicefrac{1}{\delta})}(\textsc{C}^{2}_X + \textsc{C}^{2}_Y)}{\eps}\left(1 + \norm{\theta^{*}}^2\right)\right)
\end{align}
which holds with probability at least $1 - \varrho$ provided that $\varrho < c_1\cdot\mathrm{exp}\left\{-c_0c_2d\right\}$.  

\section{Utility Guarantees for Logistic Regression}
\label{app:logistic}
We now demonstrate utility guarantees on our method for DP logistic regression, presented in \sectionref{s:logistic}. Those derived similarly to \theoremref{thm:linear_reg}, by considering both sources of errors: the error of approximating the objective with a polynomial and the empirical error of the linear regression solution. Throughout the proof, we denote by $\widehat{\theta}$ the private solution obtained by scaling the output of \algorithmref{alg:linear_mix} by $-\frac{b_1}{2b_2}$. We also let $\widetilde{\theta}^*$ denote the minimizer of the approximated loss, given explicitly by $-\frac{b_1}{2b_2}(X^{\top}X)^{-1}X^{\top}Y$.
 We further denote the empirical logistic loss via 
\begin{align}
    L_{X,Y}(\theta) \coloneqq -\frac{1}{n}\sum^{n}_{i=1}\log\left(1 + \exp\left\{-y_i x^{\top}_i\theta\right\}\right)
\end{align}
and the approximated empirical logistic loss via 
\begin{align}
    \widetilde{L}_{X,Y}(\theta) &\coloneqq b_0 + b_1 \theta^{\top} \left(\frac{1}{n}X^{\top}Y\right) + b_2 \theta^{\top}\left(\frac{1}{n}X^{\top}X\right)\theta \\
    &= b_0 - \frac{b^2_1}{4nb_2}\norm{Y}^2 + \frac{b_2}{n}\norm{-\frac{b_1}{2b_2}Y - X\theta}^2 \\
    &= b_0 - \frac{b^2_1}{4nb_2}\norm{Y}^2 + \frac{b_2}{n} F\left(X, -\frac{b_1}{2b_2}Y, \theta\right)
\end{align}
where $F(X,Y,\theta) \coloneqq \norm{Y - X\theta}^2$. 
We note that \theoremref{thm:linear_reg_privacy} guarantees that our logistic regression solution obtained by minimizing this surrogate is private. We now present the utility guarantees on this solution. 

\begin{corol}
    \label{corol:logistic}
    Assume that $\norm{x_i}^2_2 \leq \textsc{C}^{2}_X$, $\abs{y_i}^2 \leq \textsc{C}^{2}_Y$ and $\abs{y_i x^{\top}_i \widetilde{\theta}^{*}} \leq Q$ and $\abs{y_i x^{\top}_i \widehat{\theta}} \leq Q$ for all $i\in[n]$ and for some finite $Q > 0$. Let $(b_0, b_1, b_2)$ chosen such that 
    \begin{align}
        \label{eq:assumption_upperbound_q}
        \underset{s\in [-Q, Q]}{\max} \ \abs{-\log(1 + e^{-s}) - (b_0 + b_1s + b_2 s^2)} \leq q.
    \end{align}
    Then, there exist universal constants \( c_0, c_1, c_2 \) such that for any \( \chi \) satisfying \( k\chi^2 > c_0 d \) \ the following holds with probability at least \( 1 - c_1 \cdot \exp\left\{-c_2 k\chi^2\right\} \):
        \begin{align}
            &L_{X,Y}(\theta^{*})-(1+\chi)^2L_{X,Y}(\widehat{\theta}) - (1+(1+\chi)^2)q+(1-(1+\chi)^2)\left(b_0-\frac{b^2_1}{4b_2}\right)\\ 
            &\qquad\qquad\qquad\qquad =O\left(\textstyle(1+\chi)^2 \frac{\sqrt{k\log(\nicefrac{1}{\delta})}\textsc{C}^{2}_X}{n\eps} \|\widetilde{\theta}^*\|^2\right).
        \end{align}
\end{corol}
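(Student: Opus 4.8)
The plan is to decompose the suboptimality of $\widehat\theta$ into exactly the two sources used in the proof of \theoremref{thm:linear_reg}: the error from replacing the logistic loss by its quadratic surrogate, and the privacy-induced error of the sketched least-squares solution. First I would record the algebraic identity that drives everything. Since $y_i\in\{-1,+1\}$ we have $\norm{Y}^2 = n$, so the constant in $\widetilde{L}_{X,Y}$ collapses to $c \coloneqq b_0 - \frac{b^2_1}{4b_2}$ and
\begin{align}
    \widetilde{L}_{X,Y}(\theta) = \left(b_0 - \frac{b^2_1}{4b_2}\right) + \frac{b_2}{n}\,\norm{\widetilde{Y} - X\theta}^2, \qquad \widetilde{Y} = -\frac{b_1}{2b_2}Y.
\end{align}
This single identity explains the three structural pieces of the target inequality: the affine offset $c$, the overall $\nicefrac{1}{n}$ factor, and the reduction to the ordinary least-squares instance $(X,\widetilde{Y})$.

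Next I would invoke \theoremref{thm:linear_reg} on the instance $(X,\widetilde{Y})$, whose design matrix is unchanged (so the row bound is still $\textsc{C}_X$) and whose response obeys $\textsc{C}_{\widetilde{Y}} = \nicefrac{\abs{b_1}}{2\abs{b_2}} = O(1)$. The private iterate $\widehat\theta$ is precisely the scaled output of \algorithmref{alg:linear_mix} on this instance, and $\widetilde{\theta}^{*}$ is its exact minimizer, so \theoremref{thm:linear_reg} yields, on the stated high-probability event, an excess-risk bound on $\norm{\widetilde{Y} - X\theta}^2$. Multiplying through by $\nicefrac{b_2}{n}$ converts this into a gap between $\widetilde{L}_{X,Y}(\widehat\theta)$ and $(1+\chi)^2\widetilde{L}_{X,Y}(\widetilde{\theta}^{*})$, where the $(1+\chi)^2$ multiplier fails to distribute over the constant $c$ and thereby produces the $(1-(1+\chi)^2)c$ term; absorbing $\textsc{C}_{\widetilde{Y}}^2 = O(1)$ into $\textsc{C}^2_X$ and using $1 + \norm{\widetilde{\theta}^{*}}^2 = O(\norm{\widetilde{\theta}^{*}}^2)$ gives exactly the claimed right-hand side $O\big((1+\chi)^2\frac{\sqrt{k\log(\nicefrac{1}{\delta})}\,\textsc{C}^2_X}{n\eps}\norm{\widetilde{\theta}^{*}}^2\big)$.

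The final step passes from the surrogate $\widetilde{L}_{X,Y}$ back to the true logistic loss $L_{X,Y}$. The hypotheses $\abs{y_i x^{\top}_i\widehat\theta}\le Q$ and $\abs{y_i x^{\top}_i\widetilde{\theta}^{*}}\le Q$ together with \eqref{eq:assumption_upperbound_q} give the pointwise bound $\abs{L_{X,Y}(\theta) - \widetilde{L}_{X,Y}(\theta)}\le q$ at the two points $\theta\in\{\widehat\theta,\widetilde{\theta}^{*}\}$ (each of the $n$ averaged summands differs by at most $q$). Applying this once with weight $1$ and once with weight $(1+\chi)^2$ accounts for the $(1+(1+\chi)^2)q$ term, and chaining it with the optimality of $\theta^{*}$ for $L_{X,Y}$ and of $\widetilde{\theta}^{*}$ for $\widetilde{L}_{X,Y}$ bridges $L_{X,Y}(\theta^{*})$ to the surrogate gap while cancelling the $q$ contributions. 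The main obstacle is precisely this bookkeeping: one must orient every inequality so that the $q$-approximation is used only at $\widehat\theta$ and $\widetilde{\theta}^{*}$ (never at $\theta^{*}$, where no bound on $y_i x_i^{\top}\theta^{*}$ is assumed), while still routing through $\theta^{*}$ via its optimality, and one must track the affine offset $c$ carefully through the multiplicative $(1+\chi)^2$ factor so that the constant terms combine into the stated form rather than leaking extra $\chi$-dependence.
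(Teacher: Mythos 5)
Your proposal is correct and follows essentially the same route as the paper's proof: the optimality of $\theta^{*}$, the $q$-approximation applied only at $\widehat{\theta}$ and $\widetilde{\theta}^{*}$ (where the $Q$-bounds hold), the expansion $\widetilde{L}_{X,Y}(\theta) = \left(b_0 - \frac{b_1^2}{4b_2}\right) + \frac{b_2}{n}\norm{\widetilde{Y} - X\theta}^2$ via $\norm{Y}^2 = n$, and an invocation of \theoremref{thm:linear_reg} on the surrogate least-squares instance with $\textsc{C}_Y = O(1)$. The only differences are presentational --- you work from the surrogate outward while the paper telescopes from $L_{X,Y}(\theta^{*})$ inward, and you view $\widehat{\theta}$ as the output of \algorithmref{alg:linear_mix} on $(X,\widetilde{Y})$ rather than as a rescaled output on $(X,Y)$ --- neither of which changes the argument.
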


\begin{proof}
    We first note that 
    \begin{subequations}
    \begin{align}
        \label{eq:upper_bound_logistic_step1}
        L_{X,Y}(\theta^{*}) - (1+\chi)^2L_{X,Y}\left(\widehat{\theta}\right)&\leq L_{X,Y}(\widetilde{\theta}^{*}) - (1+\chi)^2L_{X,Y}\left(\widehat{\theta}\right)\\
        &= L_{X,Y}(\widetilde{\theta}^{*}) - \widetilde{L}_{X,Y}(\widetilde{\theta}^{*}) + \widetilde{L}_{X,Y}(\widetilde{\theta}^{*}) - (1+\chi)^2\widetilde{L}_{X,Y}\left(\widehat{\theta}\right)\\
        &\quad + (1+\chi)^2\widetilde{L}_{X,Y}\left(\widehat{\theta}\right) - (1+\chi)^2L_{X,Y}\left(\widehat{\theta}\right)\\
        \label{eq:upper_bound_logistic_step2}
        &\leq (1 + (1+\chi)^2)q + \widetilde{L}_{X,Y}(\widetilde{\theta}^{*}) - (1+\chi)^2\widetilde{L}_{X,Y}\left(\widehat{\theta}\right)\ \ \ \ \ \\
        \label{eq:upper_bound_logistic_step3}
        &= (1 + (1+\chi)^2)q + (1 - (1+\chi)^2)\left(b_0 - \frac{b^2_1}{4nb_2}\norm{Y}^2\right)\ \ \ \\
        &\qquad +\frac{b_2}{n}\left(F\left(X,-\frac{b_1}{2b_2}Y,\widetilde{\theta}^{*}\right) - (1+\chi)^2F\left(X,-\frac{b_1}{2b_2}Y,\widehat{\theta}\right)\right)
    \end{align}
    \end{subequations}
    where \eqref{eq:upper_bound_logistic_step1} is by the optimality of $\theta^{*}$, \eqref{eq:upper_bound_logistic_step2} is by \eqref{eq:assumption_upperbound_q} and \eqref{eq:upper_bound_logistic_step3} is by the definition of $\widetilde{L}_{X,Y}(\theta)$ and by the assumptions $\abs{y_i x^{\top}_i \widetilde{\theta}^{*}} \leq Q$ and $\abs{y_i x^{\top}_i \widehat{\theta}} \leq Q$. Then, the final result follows by using \theoremref{thm:linear_reg} and since in this case $\abs{y_i} = 1$, thus $\norm{Y} = n$ and $\textsc{C}_Y = 1$.   
\end{proof}

When we take $\chi \ll 1$, the bound acquires an extra $2q$ term in the excess empirical risk relative to the bound obtained in the linear regression case, introduced by the polynomial approximation.

\section{Algorithms: Linear Regression}
\label{app:algorithms}  

\subsection{AdaSSP}
\label{app:AdaSSP}  

\begin{algorithm}[H]
\caption{AdaSSP \citep{wang_adassp}} 
\begin{algorithmic}[1]

\Require Dataset $(X,Y)$; Privacy parameters $\eps, \delta$; Bounds: $\underset{i\in [n]}{\max} \ \norm{x_i}^2 \leq \textsc{C}^{2}_X,  \underset{i\in [n]}{\max} \ \abs{y_i}^2 \leq \textsc{C}^{2}_Y$. 

\State Calculate the minimum eigenvalue $\lambda_{\min}(X^\top X)$.

\State Privately release $\widetilde{\lambda}_{\min} = \max\left\{ \lambda_{\min} + \frac{\sqrt{\log(\nicefrac{6}{\delta})}\textsc{C}^{2}_X}{\nicefrac{\eps}{3}}\sz - \frac{\log(\nicefrac{6}{\delta})}{\nicefrac{\eps}{3}} \textsc{C}^{2}_X, 0 \right\}
$ where  $\sz \sim \pN(0,1)$. 

\State Set $\lambda = \max\left\{0, \frac{\sqrt{d \log(\nicefrac{6}{\delta}) \log(2d^2/\rho)} \textsc{C}^{2}_X}{\nicefrac{\eps}{3}} - \widetilde{\lambda}_{\min} \right\}$.

\State Privately release $\widetilde{X^\top X} = X^\top X + \frac{\sqrt{\log(\nicefrac{6}{\delta})}\textsc{C}^{2}_X}{\nicefrac{\eps}{3}} \xi_1$ for $\xi_1 \sim \pN_{\mathrm{sym}}(0, \brI_{d})$.

\State Privately release $\widetilde{X^\top y} = X^\top y + \frac{\sqrt{\log(\nicefrac{6}{\delta})} \textsc{C}_X\textsc{C}_Y}{\nicefrac{\eps}{3}} \xi_2$ for $\xi_2 \sim \pN(0, \brI_d)$. 

\State \Return $\widetilde{\theta} \gets (\widetilde{X^\top X} + \lambda \brI_d)^{-1} \widetilde{X^\top y}$
\end{algorithmic}
\end{algorithm}

\subsection{Algorithm 1 from \citet{sheffet2017differentially}} 
\label{app:sheffet} 

\begin{algorithm}[H]
\caption{Sheffet's Algorithm~\citep[Algorithm~1]{sheffet2017differentially}} 
\begin{algorithmic}[1]

\Require Dataset $(X,Y)$; Privacy parameters $\varepsilon, \delta$; Bounds: $\underset{i \in [n]}{\max}\ \norm{x_i}^2 \leq \textsc{C}^{2}_X$, $\underset{i \in [n]}{\max}\ |y_i|^2 \leq \textsc{C}^{2}_Y$; Hyperparameter $k$.

\State Compute $\lambda_{\min} := \lambda_{\min}((X, Y)^\top (X, Y))$.

\State Set $\gamma \gets \frac{4(\textsc{C}^{2}_X + \textsc{C}^{2}_Y)}{\varepsilon} \left( \sqrt{2k\log\left(\frac{8}{\delta}\right)} + 2\log\left(\frac{8}{\delta}\right) \right)$.

\State Sample $\mathsf{S} \sim \pN(0, \brI_{k\times n})$. 

\If{$\lambda_{\min} > \gamma + \mathsf{z} + \frac{4(\textsc{C}^{2}_X + \textsc{C}^{2}_Y)\log(\nicefrac{1}{\delta})}{\varepsilon}$\ for $\sz \sim \mathrm{Lap}\left(\frac{4(\textsc{C}^{2}_X + \textsc{C}^{2}_Y)}{\eps}\right)$}
    \State \Return $\widetilde{\theta} \gets \left((\mathsf{S}X)^{\top} (\mathsf{S}X)\right)^{-1} (\mathsf{S}X)^{\top} (\mathsf{S}Y)$
\Else
    \State Sample noises $\xi_1 \sim \pN(0, \brI_{k \times d})$, $\xi_2 \sim \pN(0, \brI_k)$.
    \State \Return $\widetilde{\theta} \gets \left((\mathsf{S}X + \gamma\xi_1)^{\top} (\mathsf{S}X + \gamma\xi_1)\right)^{-1} (\mathsf{S}X + \gamma\xi_1)^{\top} (\mathsf{S}Y + \gamma\xi_2)$
\EndIf

\end{algorithmic}
\end{algorithm}

\begin{algorithm}[H]
\caption{Sheffet's Algorithm with Our Analysis} 
\begin{algorithmic}[1]

\Require Dataset $(X,Y)$; Privacy parameters $\varepsilon, \delta$; Bounds: $\underset{i \in [n]}{\max}\ \norm{x_i}^2 \leq \textsc{C}^{2}_X$, $\underset{i \in [n]}{\max}\ |y_i|^2 \leq \textsc{C}^{2}_Y$; Hyperparameter $k$.

\State Compute $\lambda_{\min} := \lambda_{\min}((X, Y)^\top (X, Y))$.

\State    Set $\gamma$ s.t. $\underset{1<\alpha<\gamma}{\min} \ \left\{\varphi(\alpha;k,\gamma) + \log\left(1-\frac{1}{\alpha}\right) - \frac{\log(\alpha\delta)}{\alpha - 1}\right\} \leq \eps/2$. 

\State Sample $\mathsf{S} \sim \pN(0, \brI_{k\times n})$. 

\If{$\lambda_{\min} > \gamma + \mathsf{z} + \frac{4(\textsc{C}^{2}_X + \textsc{C}^{2}_Y)\log(\nicefrac{1}{\delta})}{\varepsilon}$\ for $\sz \sim \mathrm{Lap}\left(\frac{4(\textsc{C}^{2}_X + \textsc{C}^{2}_Y)}{\eps}\right)$}
    \State \Return $\widetilde{\theta} \gets \left((\mathsf{S}X)^{\top} (\mathsf{S}X)\right)^{-1} (\mathsf{S}X)^{\top} (\mathsf{S}Y)$
\Else
    \State Sample noises $\xi_1 \sim \pN(0, \brI_{k \times d})$, $\xi_2 \sim \pN(0, \brI_k)$.
    \State \Return $\widetilde{\theta} \gets \left((\mathsf{S}X + \gamma\xi_1)^{\top} (\mathsf{S}X + \gamma\xi_1)\right)^{-1} (\mathsf{S}X + \gamma\xi_1)^{\top} (\mathsf{S}Y + \gamma\xi_2)$
\EndIf

\end{algorithmic}
\end{algorithm}
\section{Algorithms: Logistic Regression}
\label{app:algorithms_logistic}  
\subsection{Objective Pertubation}
\label{app:objective_perturb}

\begin{algorithm}[H]
\caption{Objective Perturbation ~ \citep{pmlr_kifer_objectiveperturb}}
\begin{algorithmic}[1]
\Require Dataset $(X,Y)$; privacy parameters $\eps$ and $\delta$; Bound $\norm{x_i} \leq \textsc{C}_X$ for all $i\in[n]$; 

\State Set $\sigma = \frac{\sqrt{4\eps + 8\log(\nicefrac{2}{\delta})}}{\eps}\textsc{C}_X$ and $\Delta = \frac{\textsc{C}^{2}_X}{2\eps}$.

\State Sample $\mathsf{b} \sim \pN(0, \sigma^2 \brI_d)$

\State \Return $\widetilde{\theta} \gets \underset{\theta}{\argmin} \ \left\{\sum^{n}_{i=1}-\frac{1}{n}\log\left(1 + \exp\left\{-y_i x^{\top}_i\theta\right\}\right) + \frac{\mathsf{b}^{\top}\theta}{n} + \frac{\Delta}{2n}\norm{\theta}^2\right\}$.

\end{algorithmic}
\end{algorithm}

\section{Experimental Details}
\label{app:details}

All the experiments were run on an NVIDIA A100 GPU.  

\subsection{Linear Regression}
\label{app:ExpsDetails_Linear}
For the linear regression experiments, we used four datasets. The first two are real-world popular regression datasets: the Tecator dataset~\citep{Tecator_bib} and the Communities and Crime dataset~\citep{redmond2002data}. We have used a random train-test split of 80\%/20\% for generating a train and a test set.  The other two are synthetic datasets where the responses were generated via the linear model \( y_i = x_i^{\top}\theta_0 + \sigma \xi_i \), with \( \theta_0 \) sampled as a unit vector uniformly from the \( (d-1) \)-dimensional sphere, \( \xi_i \sim \mathrm{Unif}(-1, 1) \), and \( \sigma = 0.1 \). 

In the first synthetic dataset (termed \emph{Gaussian dataset}), the parameters were \( n = 8192, d = 512 \), and the covariates were sampled as \( x_i \sim \pN(0, \mathsf{Q}\mathsf{Q}^{\top}) \), where \( \mathsf{Q} \in \mathbb{R}^{d \times q} \) is a random semi-orthogonal matrix with \( q = 4 \), ensuring the data lies on a 4-dimensional subspace. The matrix \( \mathsf{Q} \) was generated via QR decomposition of a random matrix with \iid\ standard Gaussian entries. The second synthetic dataset (termed the \emph{synthetic dataset}) was constructed as follows. First, we sampled latent covariates \( \widetilde{x}_i \sim \pN(0, \brI_2) \). Then, we generated final covariates using a two-layer neural network:  
\[
x_i = \phi(\mathsf{W}_2 \phi(\mathsf{W}_1 \widetilde{x}_i + \mathsf{b}_1) + \mathsf{b}_2),
\]
where \( \phi(\cdot) \) is the element-wise sigmoid function, \( \mathsf{W}_1 \sim \pN(0, \brI_{100 \times 2}) \), \( \mathsf{W}_2 \sim \pN(0, \brI_{d \times 100}) \), \( \mathsf{b}_1 \sim \pN(0, 10^{-6} \cdot \brI_{100}) \), and \( \mathsf{b}_2 \sim \pN(0, 10^{-6} \cdot \brI_d) \). In our experiments, we have fixed $d = 2^{9}$. For both synthetic datasets, the train and test sets were generated independently, using the same fixed \( \theta_0 \) but with independent covariates and additive noise.

In all cases, we normalized the training data so that the maximum \( \ell_2 \)-norm of any training sample was 1. The test data was scaled using the same normalization factor as the training data.

The baseline (non-private) estimator was computed as \( \widehat{\theta} = (X^{\top}X + \lambda \brI_d)^{-1}X^{\top}Y \) for $\lambda = 10^{-6}$, ensuring invertibility in all cases. We report the mean squared error (MSE) for the test set, computed as the squared error in predicting \( y_i \) via \( x_i^{\top} \widehat{\theta} \). All results are averaged over 250 independent trials, and we report both the empirical means and confidence intervals.

\subsubsection{Algorithms}
Our algorithm was implemented as described in \algorithmref{alg:linear_mix}. The ADASSP algorithm was implemented based on \citet[Algorithm~2]{wang_adassp}, following the procedure detailed in \appendixref{app:AdaSSP}. Our second baseline, from \citet[Algorithm~1]{sheffet2017differentially}, was implemented according to the description in \appendixref{app:sheffet}. This implementation matches that of \citet[Algorithm~1]{sheffet2017differentially}, except for an adjustment to account for a factor of 2 in the parameter \( \gamma \), which arises due to using the zero-out neighboring definition rather than the replacement definition. In the variant of this baseline that incorporates our improved privacy analysis, we replaced the original noise calibration with bounds derived from \lemmaref{lem:Privacy_First}, translated to \((\varepsilon, \delta)\)-DP using the conversion provided in \propositionref{prop:Renyi_classical_translate} (see also \appendixref{app:sheffet}).

\subsection{Logistic Regression}
\label{app:ExpsDetails_Logistic}
In this set of experiments, we trained a logistic regression classifier for a binary classification task without applying any regularization. Our non-private baseline is the standard \texttt{LogisticRegression} solver from the \texttt{sklearn.linear\_model} library. The private baseline is based on the objective perturbation method (described in \appendixref{app:algorithms_logistic}), where the minimization is carried out using \texttt{torch.optim.LBFGS} with a maximum of 500 iterations and a tolerance of $10^{-6}$, following the setup of \citet{GuoCertifiedDataRemoval}.

For our method, we tuned the hyperparameter $k$ via a grid search over the values $k$ ranging from $1.5d$ to $7.5d$ with increments of $0.5d$, selecting the value that yielded the best performance, and where we verified that the optimal value is not on the edge of the grid. The privacy cost of this hyperparameter tuning was not included in our accounting.

We conducted experiments on the Fashion-MNIST \citep{xiao2017fashion} and the CIFAR100 \citep{CIFAR10_Krizhevsky} datasets, using the implementations provided in \texttt{torchvision.datasets}. From each dataset, we selected only the samples corresponding to classes $3$ and $8$, and relabeled them as ${-1, 1}$ to fit the binary classification setting. We used the standard PyTorch train/test splits and normalized the training data by the maximum $L_2$ norm across all training samples, ensuring that each training sample has a norm of at most 1. The same normalization factor was then applied to the test set. The train and test loaders were generated using \texttt{torch.utils.data.DataLoader} with shuffling enabled. In \appendixref{s:log_reg_additional} we present additional simulations with the CIFAR10 \citep{CIFAR10_Krizhevsky} and the MNIST \citep{MNIST_LECUN} datasets. 

The network architecture used is a compact convolutional neural network for RGB image classification. It consists of two convolutional layers with ReLU activations and max pooling, reducing the input to a 64-channel feature map of size 8×8. The flattened features are passed through a fully connected layer with 128 hidden units and ReLU, followed by a final linear layer that outputs class logits. In both of the experiments, we have first trained this network end-to-end using the DP-SGD primitive implemented in Opacus \citep{yousefpour2021opacus}, where we have set the clipping parameter to $4.0$, learning rate to $0.001$, the number of epochs to $20$, and the batch size to $500$. 

Performance metrics are averaged over 250 independent runs, reporting test accuracy along with confidence intervals. Runtime comparisons show the ratio of execution times for the largest simulated $\varepsilon$.

\section{Additional Experiments}
\label{app:additional_exps}

\subsection{Linear Regression}
\label{s:lin_reg_additional}
We have simulated additional four datasets: the Boston housing dataset \citep{harrison1978hedonic} that contains $506$ measurements of $13$-dimensional features with the goal of predicting house prices in the Boston area, the Wine quality dataset \citep{cortez2009wine} which contains $1359$ measurements of $11$-dimensional features, with the goal of predicting wine quality, the Bike sharing dataset \citep{bikesharing2019} with the goal of predicting the count of rental bikes, and another artificial dataset that follows the same description as that of the Gaussian dataset but now with \iid \ features where the distribution of each entry is $\mathrm{Unif}([-1,1])$ and the entries are independent between each other. The additional results are presented in \figureref{fig:additional_linear}. 

\begin{figure}[htbp]
  \centering
  \begin{subfigure}[t]{0.425\textwidth}
    \centering
    \includegraphics[width=\linewidth]{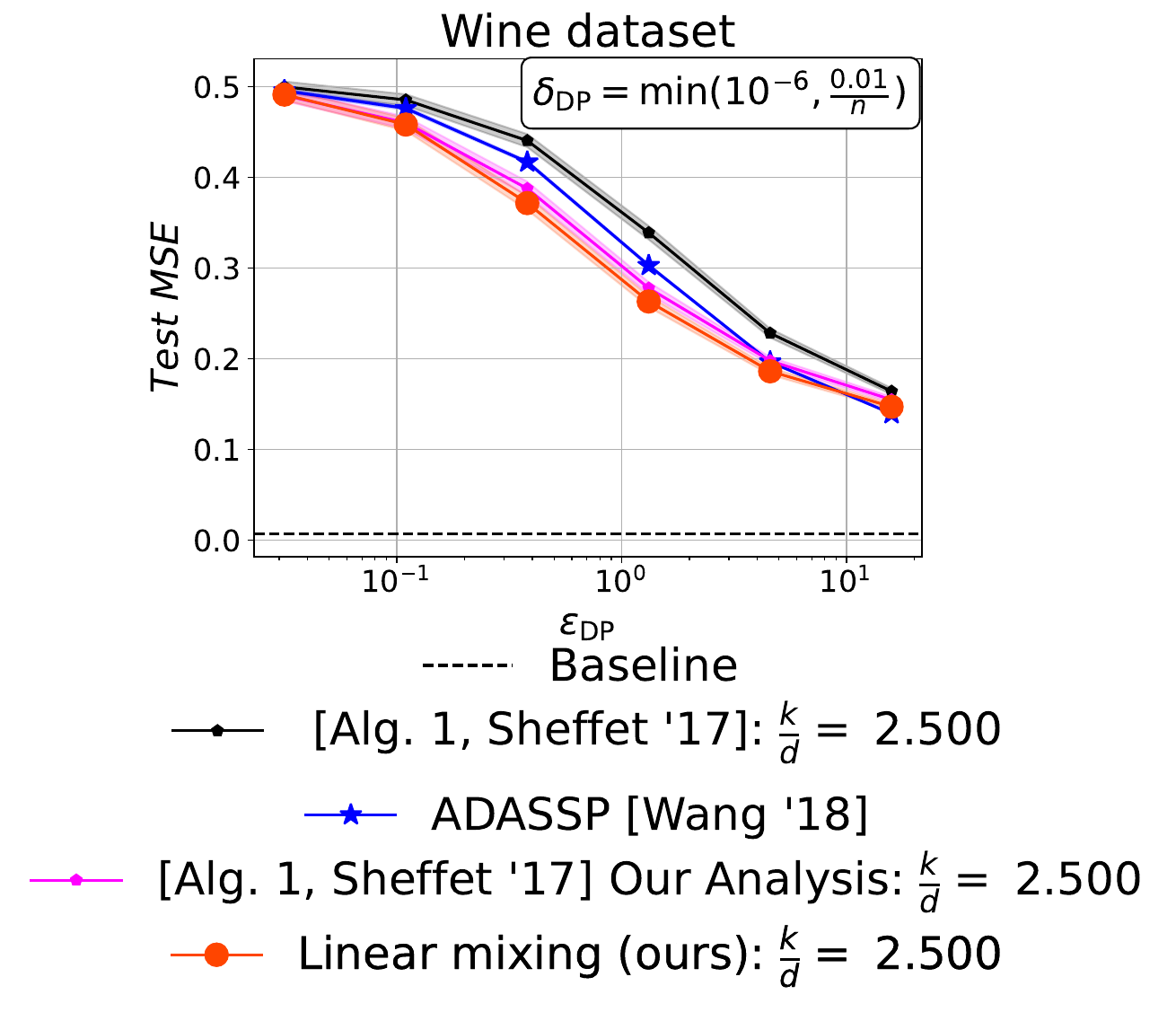}
    \caption{Wine dataset}
  \end{subfigure}
  \hspace{0.02\textwidth}
  \begin{subfigure}[t]{0.425\textwidth}
    \centering
    \includegraphics[width=\linewidth]{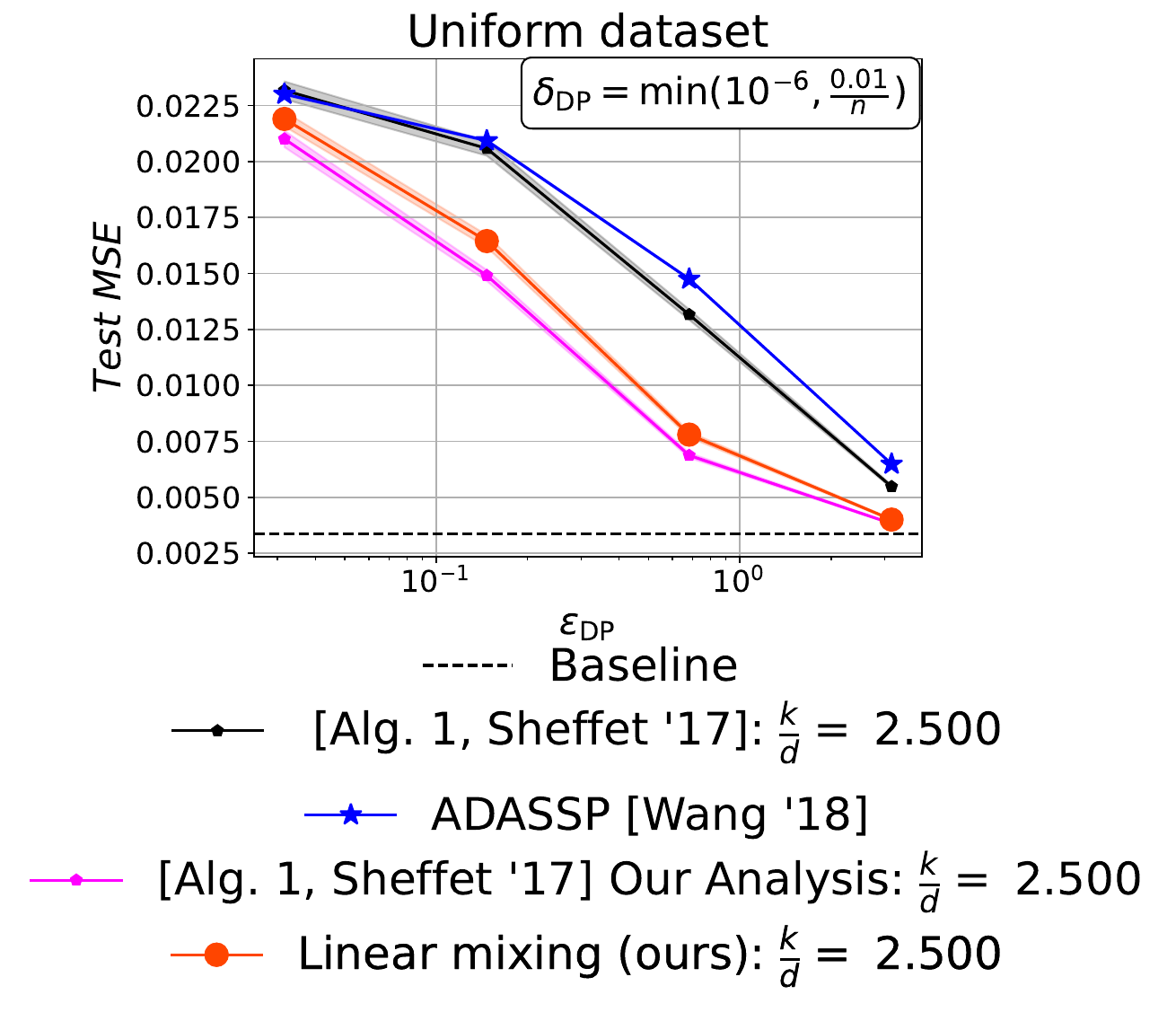}
    \caption{Uniform dataset}
  \end{subfigure}

  \vspace{0.5em} 

  \begin{subfigure}[t]{0.425\textwidth}
    \centering
    \includegraphics[width=\linewidth]{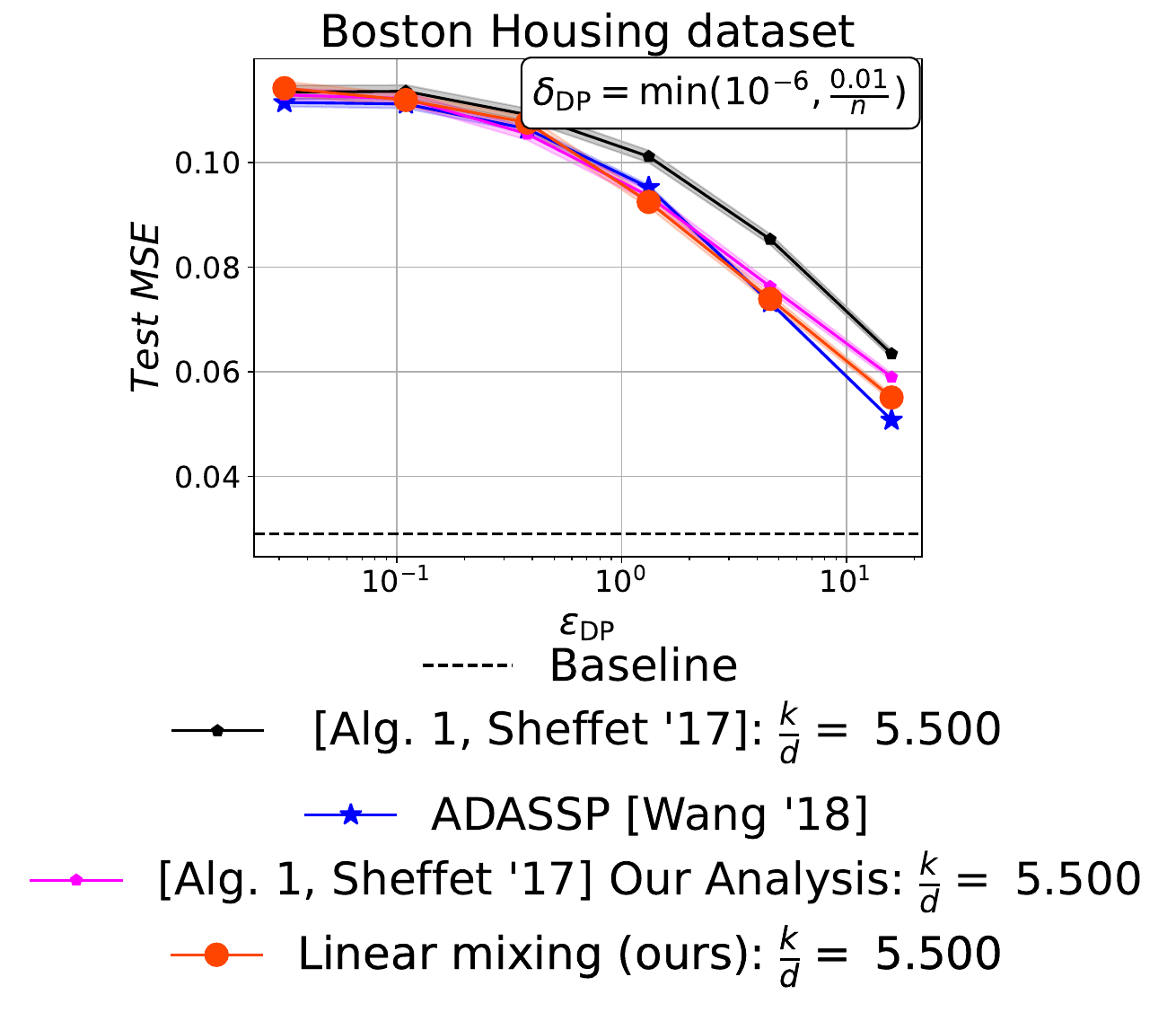}
    \caption{Boston housing dataset}
  \end{subfigure}
  \hspace{0.02\textwidth}
  \begin{subfigure}[t]{0.425\textwidth}
    \centering
    \includegraphics[width=\linewidth]{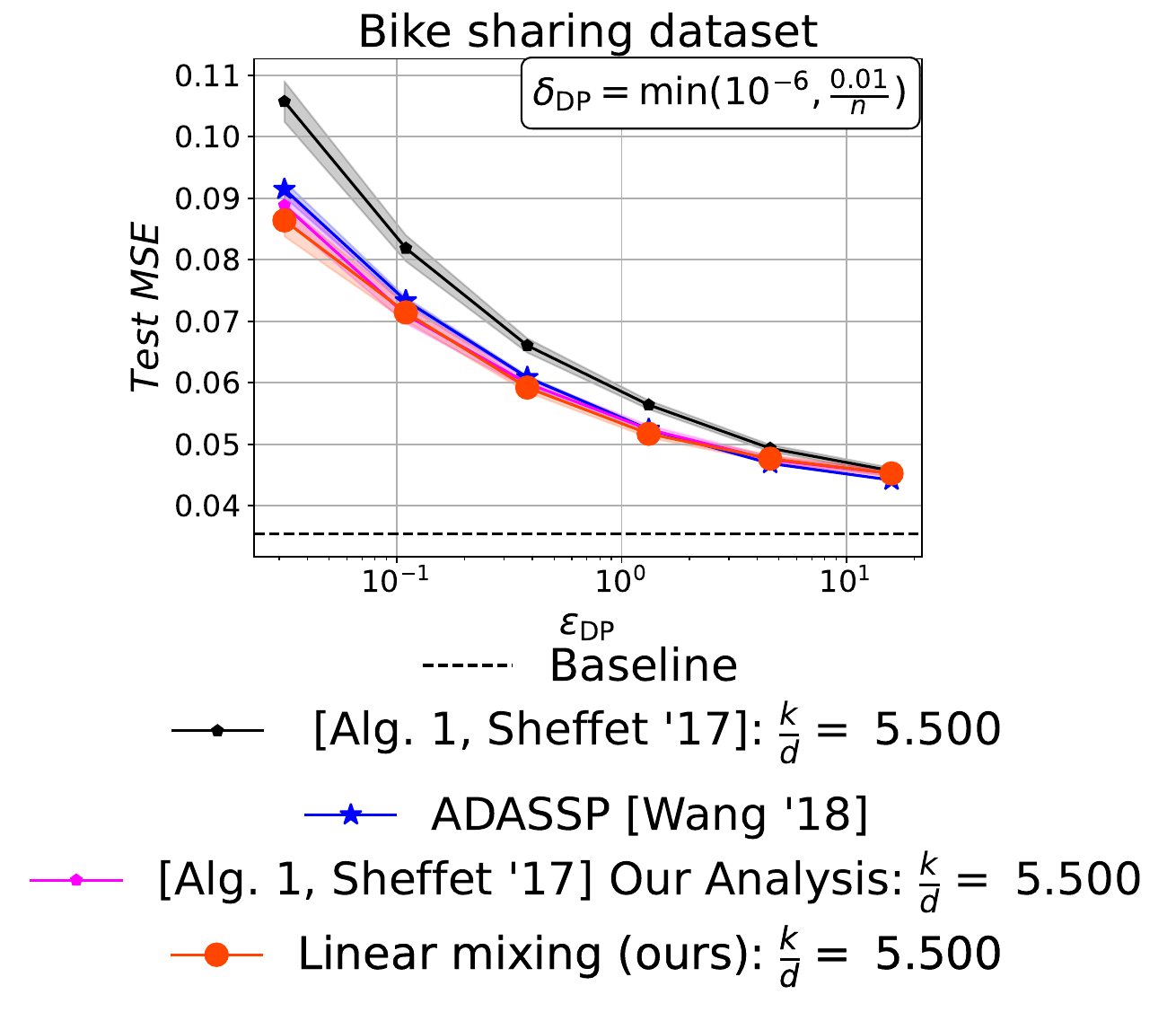}
    \caption{Bike sharing dataset}
  \end{subfigure}

  \caption{Linear mixing performance on the additional four linear regression tasks. }
  \label{fig:additional_linear}
\end{figure}

\subsection{Logistic Regression}
\label{s:log_reg_additional}

We have simulated two additional datasets: the CIFAR10 \citep{CIFAR10_Krizhevsky} and the MNIST \citep{MNIST_LECUN} datasets, using the same logistic regression setting. The datasets were used via similar procedure as described in \appendixref{app:ExpsDetails_Logistic}. The additional results are presented in \figureref{fig:additional_log}.

\begin{figure}[htbp]
  \centering
  \begin{subfigure}[b]{0.4\linewidth}
    \includegraphics[width=\linewidth]{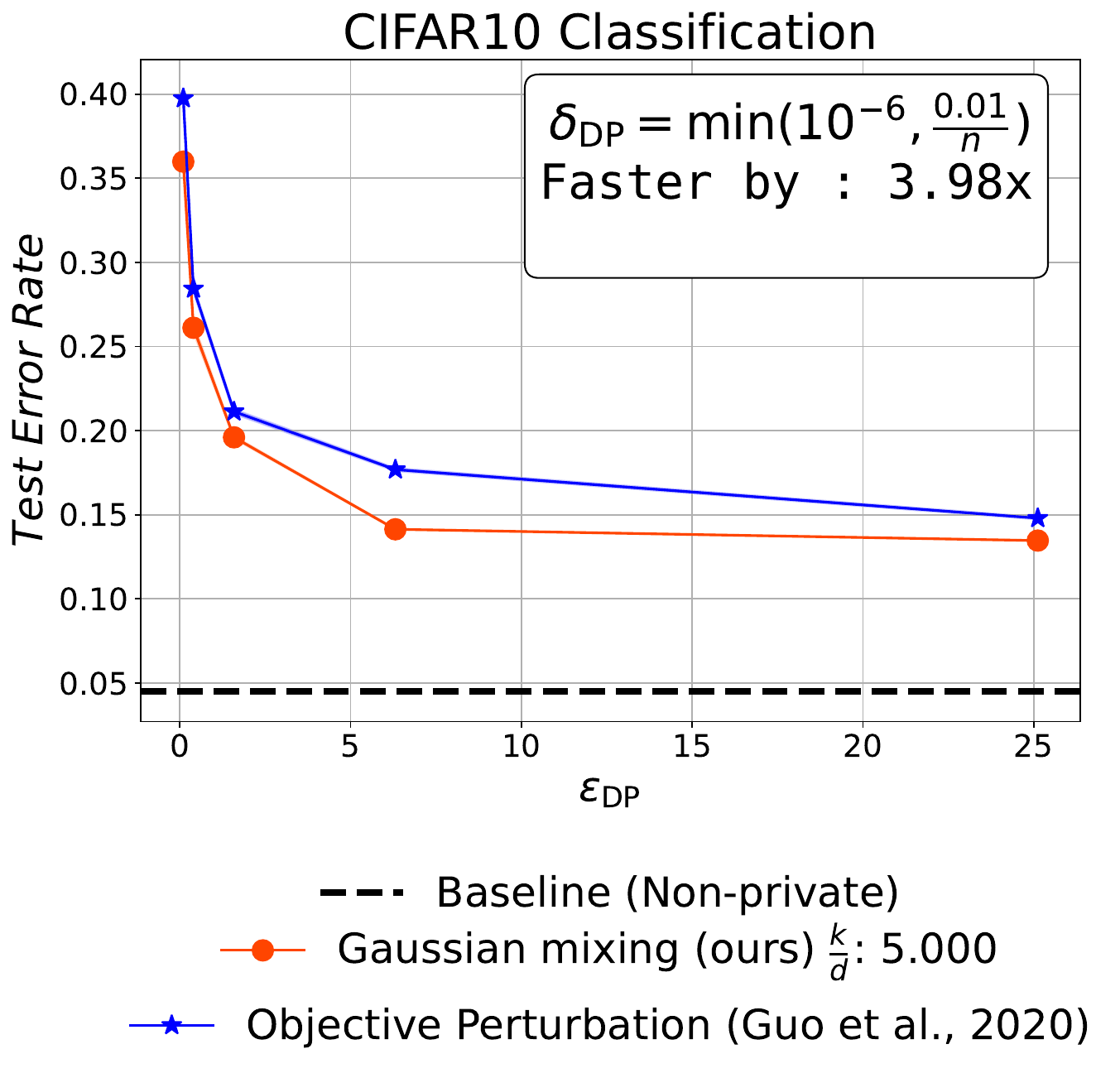}
    \caption{CIFAR-10}
  \end{subfigure}
  \hfill
  \begin{subfigure}[b]{0.4\linewidth}
    \includegraphics[width=\linewidth]{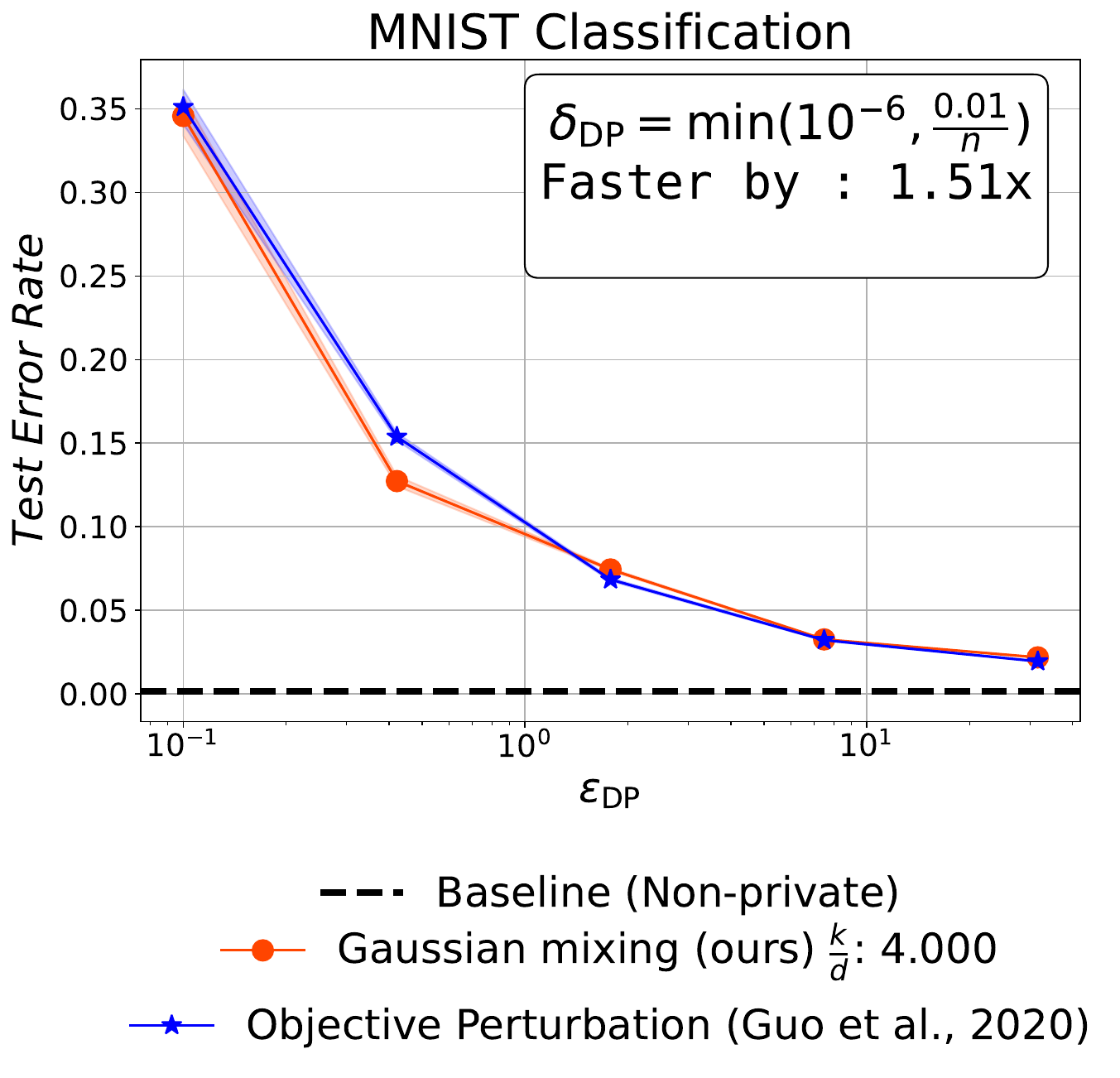}
    \caption{MNIST}
  \end{subfigure}
  
  \caption{DP logistic regression using a privately trained CNN feature extractor on binary subsets of CIFAR10 and MNIST. }
  \label{fig:additional_log}
\end{figure}
\end{document}